\newtheorem{theorem}{Theorem}
\newtheorem*{definition}{Definition}
\newtheorem{assumption}{Assumption}[]
\newtheorem{lemma}{Lemma}[]
\newcommand{\ds}{\displaystyle}
\newcommand{\E}{\text{E}}
\newcommand{\Var}{\text{Var}}
\newcommand{\ts}{\textsuperscript}
\newcommand{\htaun}{h_{\tau_n}}
\newcommand{\pitaun}{\pi_{\tau_n}}
\newcommand{\hne}{h_{n_e}}
\newcommand{\pine}{\pi_{n_e}}
\newcommand{\hgn}{h_{q(n)}}
\newcommand{\pign}{\pi_{q(n)}}
\title{To update or not to update? Delayed Nonparametric Bandits with Randomized Allocation} 
 \author{Sakshi Arya and Yuhong Yang\\
 \small{School of Statistics, University of Minnesota}}
  \date{}  
\begin{document}
\maketitle







\abstract{Delayed rewards problem in contextual bandits has been of interest in various practical settings.
 We study randomized allocation strategies and provide an understanding on how the exploration-exploitation tradeoff  is affected by delays in observing the rewards. In randomized strategies, the extent of exploration-exploitation is controlled by a user-determined exploration probability sequence. In the presence of delayed rewards, one may choose between using the original exploration sequence that updates at every time point or update the sequence only when a new reward is observed, leading to two competing strategies. In this work, we show that while both strategies may lead to strong consistency in allocation, the property holds for a wider scope of situations for the latter. However, for finite sample performance, we illustrate that both strategies have their own advantages and disadvantages, depending on the severity of the delay and underlying reward generating mechanisms.} 

\section{Introduction}\label{Intro}
Contextual bandits provide a natural framework to model a lot of practical sequential decision making problems in various fields.
 \cite{woodroofe1979one} started studying multi-armed bandit problems with side information in a parametric framework, and \cite{yang2002randomized} initiated an investigation from a nonparametric perspective. See \cite{lai2001sequential};\cite{bartroff2008modern} for reviews on general sequential problems and \cite{bubeck2012regret} for bandits exclusively.
 In recent years,  bandit problems have gained popularity and have been studied extensively under different names, such as contextual bandits, multi-armed bandits with covariates (MABC), associative bandit problems and  multi-armed bandits with side information. For example, when treating patients of a disease, the doctor needs to decide which treatment amongst several competing treatments would be the best for the current patient, given the patient's covariate information and data available from previous patients. Most of the bandit algorithms assume instantaneous observance of rewards, but in most practical situations, rewards are only obtained at some delayed time. For example, it is often the case that  several other patients have to be treated before the outcome for the current patient is observed. One way to tackle this problem is to adopt black-box procedures incorporating delayed rewards using the already existing no-delay policies in the stochastic bandits setting. However, we present a case of why it is important to study delays more carefully for contextual bandit strategies based on the context of the problem, rather than always using the already existing no delay bandit strategies in black-box procedures to incorporating delayed rewards.   Delays in observing the rewards could affect the performance of bandit algorithms in different ways, depending on the nature of underlying data generating mechanisms and severity of the delays. Thus, it is important to balance the exploration-exploitation trade-off taking these aspects into consideration, in order to utilize most of the available information. We propose two different $\epsilon$-greedy like strategies incorporating delayed reward, which differ in how the exploration probability gets updated with the available information. We illustrate that both strategies can be advantageous in different situations, based on the complexity of the underlying data generating mechanism and the severity of the delays.

\section{Setup and related literature}\label{setup_relatedLitSec}
The setup of stochastic contextual bandits is as follows. Suppose there are $\ell > 1$ competing arms. The covariates are assumed to be random variables generated according to an unknown underlying probability distribution $P_X$ supported in $[0,1]^d$. A bandit strategy or policy is a random function from $[0,1]^d$ to $\{1,2,\hdots,\ell\}$ that decides which arm gets pulled for a given covariate. At time $j\geq 1$, let $I_j$ be the arm allocation made by the bandit strategy based on previous information and present context $X_j$.  We denote $Y_{i,j}$ to be the reward obtained for arm $i = I_j$. Let $f_i(x)$ denote the mean reward for the $i$\ts{th} arm with covariate $x$. We adopt a regression perspective to model the relationship between covariates and rewards,
\begin{align*}
Y_{i,j} = f_i(X_j) + \epsilon_{j}
\end{align*}
where $\epsilon_{j}$'s are independent errors with $\E(\epsilon_{j}) = 0$ and $\Var(\epsilon_{j}) < \infty$ for $j\geq 1$.

Now, the problem can be viewed as one of estimating the mean reward functions $f_i(x)$ for $i \in \{1,\hdots,\ell\}$ and allocating arms based on the estimators $\hat{f}_{i}$. Both parametric and non-parametric approaches for estimating $f_i$ have been well studied, see \cite{tewari2017ads}, \cite{lattimore2018bandit} for reference. In this work we follow a nonparametric approach with delayed rewards as in \cite{arya2020randomized} adopting modeling techiques similar to the earlier work of \cite{yang2002randomized}, \cite{qian2016kernel,qian2016randomized}. 

In our setup, the rewards can be obtained at some delayed time, which we denote by $\{t_j \in \mathbb{R}^+, j \geq 1\}$. The delay in the reward for pulling arm $I_j$ is given by the random variable, $d_j:= t_j - j$. We assume that $\{d_j: d_j \geq 0, j\geq 1\}$ is a sequence of independent random variables.
Let the number of rewards obtained at time $n$ be denoted by $\tau_n = \sum_{j=1}^n I(t_j \leq n)$, also a random variable.

We devise two sequential allocation strategies $\eta_1$ and $\eta_2$ in Section \ref{algorithm1}, incorporating delayed rewards, such that  they choose arms sequentially based on previous observations and present covariates. As a measure of performance of each of the strategies, we consider the following ratio,
\begin{align}
 R_n(\cdot) = \dfrac{\sum_{j=1}^n f_{I_j}(X_j)}{\sum_{j=1}^n f^*(X_j)}, \label{relative_reward}
 \end{align} 
 where $(\cdot)$ is used to denote the strategy being considered. Here, $f^*(x) = \max_{1\leq i \leq \ell} f_i(x)$ is the theoretical best mean reward functional value at $x$, and $i^*(x)$ is the corresponding arm. Then, we establish strong consistency for both strategies for the histogram method in Section \ref{histogram}, that is, we  show that $R_n(\eta_1) \rightarrow 1$ and $R_n(\eta_2) \rightarrow 1$ with probability 1, as $n\rightarrow \infty$.  In addition, from a finite-sample performance perspective, we compare the two allocation strategies and illustrate how both can be advantageous in different situations in Sections \ref{compareSPlvsThis} and \ref{sec:simulation}.

In the stochastic setting, delayed rewards have been studied previously by \cite{Dudik:2011:EOL:3020548.3020569}, \cite{joulani2013online} where the former considers constant known delay for contextual bandits while the latter provides a more systemic study of online learning problems with random delayed rewards (without covariates). \cite{joulani2013online} develop meta-algorithms which in a black-box fashion use algorithms developed for the non-delayed case into the ones that can handle delays in a feedback loop. Then, \cite{mandel2015queue} devise a method that guarantees good black-box algorithms when leveraging a prior dataset and incorporating heuristics to help improve empirical performance of the algorithms. \cite{desautels2014parallelizing} use Gaussian process bandits and develop algorithms for parallelizing exploration-exploitation trade-offs. Motivated by delayed conversions in advertising, \cite{vernade2017stochastic,vernade2018contextual} consider potentially infinite stochastic delays, where the latter deals with the contextual case with a linear regression model and does not assume prior knowledge of delay distribution unlike the former. Recently, \cite{zhou2019learning} design a  delay-adaptive algorithm for generalized linear contextual bandits using UCB-style exploration. \cite{arya2020randomized} consider potentially infinite delays in nonparametric bandits and provide strong consistency results for a proposed algorithm. Other works include \cite{eick1988gittins}, \cite{cella2019stochastic} where the former considers Gittins procedures for bandits with delayed rewards, while the latter is motivated by applications in music streaming. Apart from the stochastic setting,  \cite{cesa2016delay,li2019bandit,thune2019nonstochastic,zimmert2019optimal} study delayed rewards in the adversarial setting, while \cite{pike2017bandits,pike2018bandits,cesa2018nonstochastic} study the delayed anonymous composite feedback setting. 

\section{The proposed strategies}\label{algorithm1}
Define $Z^{n,i}$ to be the set of observations for arm $i$ whose rewards have been observed by time $n-1$, that is, $Z^{n,i}:= \{(X_j,Y_{i,j}): 1\leq t_j \leq n-1 \ \text{and} \ I_j = i\}$. Let $\hat{f}_{i,n}$ denote the regression estimator of $f_i$ based on the data $Z^{n,i}$. Let $\{\pi_j, j \geq 1 \}$ be a sequence of positive numbers in  $[0,1]$ decreasing to zero, such that $(\ell - 1) \pi_j < 1$ for all $j \geq 1$. We propose two strategies $\eta_1$ and $\eta_2$ with a subtle difference in the arm selection      step but same structure of the algorithm.
\subsection{Algorithms} \label{algorithm}
\begin{enumerate}[Step 1.]
  \item \textbf{Initialize.} Allocate each arm once, $I_1 = 1, I_2 = 2, \hdots, I_\ell = \ell$. Since the rewards are not immediately obtained for each of these $\ell$ arms, we continue these forced allocations until we have at least one reward observed for each arm. Suppose, that happens at time $m_0$.
  \item \textbf{Estimate the individual functions $\boldsymbol{f_i}$.} For $n = m_0 + 1$, based on $Z^{n,i}$, estimate $f_i$ by $\hat{f}_{i,n}$ for $1 \leq i \leq \ell$ using the chosen regression procedure.
  \item \textbf{Estimate the best arm.} For $X_{n}$, let 
  $\hat{i}_{n}(X_{n}) = \arg\max_{1\leq i \leq \ell} \hat{f}_{i,n}(X_{n})$.
   \item \label{randomization_step}  \textbf{Select and pull.} Recall, $\tau_n = \sum_{j=1}^n I(t_j \leq n)$ is the number of rewards observed by time $n$.
   \begin{enumerate}
     \item    Strategy $\eta_1$:      $I_n = \begin{cases}
     \hat{i}_n, & \ \text{with probability}\  1-(\ell -1)\pi_{n}\\
     i, & \ \text{with probability}\ \pi_{n}, \ i \neq \hat{i}_n, \ 1\leq i \leq \ell.
     \end{cases}$
     \item Strategy $\eta_2$: $I_n = \begin{cases}
     \hat{i}_n, & \ \text{with probability} 1-(\ell -1)\pi_{\tau_n}\\
     i, & \ \text{with probability}\ \pi_{\tau_n}, \ i \neq \hat{i}_n,\  1\leq i \leq \ell.
     \end{cases}$
   \end{enumerate}
\item \textbf{Update the estimates.}
  \begin{enumerate}[Step 5a.]
    \item If a reward is obtained at the $n$\ts{th} time (could be one or more rewards corresponding to one or more arms $I_j, 1\leq j \leq n$), update the function estimates of $f_i$ for the respective arm (or arms) for which the reward (or rewards) is obtained at $n\ts{th}$ time.
    \item If no reward is obtained at the $n$\ts{th} time, use the previous function estimators, i.e. $\hat{f}_{i,n+1} = \hat{f}_{i,n} \ \forall \ i \in \{1,\hdots,\ell\}$. 
  \end{enumerate}
  \item \textbf{Repeat.} Repeat steps 3-5 when the next covariate $X_{n+1}$ surfaces and so on.
\end{enumerate}
In the algorithms above, Step 1 initializes the allocations by pulling each arm alternatively until we observe at least one reward for each arm. Step 2 estimates the mean reward function for each arm. This could be done using several regression methods, we use kernel regression and histogram method in this work. Steps 3 and 4 enforce an $\epsilon$-greedy type of randomization scheme which prefers the best performing arm so far with some probability and explores with the remaining. The preference is determined by user determined sequence of exploration probability $\{\pi_n, n \geq 1\}$, which for strategy $\eta_2$ only gets updated when a new reward is observed, that is, $\pi_{\tau_n}$. While for strategy $\eta_1$, it is updated at every time point irrespective of a reward being observed or not, that is, $\pi_n$. Hence, the two strategies differ in the extent of exploration and exploitation that is allowed over time.  Finally, in Step 5, the mean reward function estimators are updated if new rewards are observed or they remain the same if no new rewards are observed. For notational convenience, we use $\{\cdot\}$ to denote a user-determined sequence, such as $\{\pi_n\}$, when we only want to refer to the original sequence selected by the user, without distinguishing between when it gets updated.

\section{Consistency of the proposed strategies} \label{consistency_section}
Let $\mathcal{A}_n := \{j: t_j \leq n\}$, denote the time points corresponding to the rewards observed by time $n$.

\begin{assumption} \label{Ass1}
   The regression procedure is strongly consistent in $L_\infty$ norm for all individual mean functions $f_i$ under the proposed allocation scheme. That is, $||\hat{f}_{i,n} - f_i||_\infty \overset{\text{a.s.}}{\rightarrow} 0$ as $n \rightarrow \infty$ for each $1\leq i \leq \ell$, where $\hat{f}_{i,n}$ is the estimator based on all previously observed rewards. 
   \end{assumption}
   Note that, due to the presence of delays, the mean reward function estimators $\hat{f}_{i,n}$ are only updated at the time points where a new reward is observed. Next, we make a mild assumption on the mean reward functions.

\begin{assumption} \label{Ass2}
  The mean reward functions are continuous and $f_i(x) \geq 0$ such that, 
  \begin{align*}
  A = \ds\sup_{1\leq i\leq \ell} \sup_{x \in [0,1]^d} (f^*(x) - f_i(x)) < \infty \ \text{and}\ \E (f^*(X_1)) > 0.
  \end{align*}
\end{assumption}

\begin{assumption} \label{assump_delay} Let the partial sums of delay distributions satisfy,
$\E(\tau_n) = \Omega(q(n))$ \footnote{$f(n) = \Omega{(g(n))}$ if for some positive constant $c$, $f(n) \geq cg(n)$ when $n$ is large enough}, where $q(n)$ is a sequence that acts as a lower bound to the expected number of observed rewards by time $n$, and $q(n) \rightarrow \infty$ as $n \rightarrow \infty$.
\end{assumption}

\begin{theorem}\label{Theorem 1}
Under Assumptions \ref{Ass1}, \ref{Ass2} and \ref{assump_delay}, the allocation rules $\eta_1$ and $\eta_2$ are strongly consistent as $n\rightarrow \infty$, i.e., $R_n(\eta_1) \rightarrow 1$ and $R_n(\eta_2) \rightarrow 1$ with probability 1, as $n \rightarrow \infty$.
\end{theorem}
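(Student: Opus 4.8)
The plan is to show directly that the average per-step regret vanishes while the normalizing denominator converges to a positive constant. Writing $\rho_j := f^*(X_j) - f_{I_j}(X_j) \ge 0$ for the regret at step $j$, I would first record the identity
\[
1 - R_n(\cdot) = \frac{\frac{1}{n}\sum_{j=1}^n \rho_j}{\frac{1}{n}\sum_{j=1}^n f^*(X_j)}.
\]
Since the $X_j$ are i.i.d.\ from $P_X$ and $f^*$ is continuous on the compact set $[0,1]^d$ and hence bounded, the strong law of large numbers gives $\frac{1}{n}\sum_{j=1}^n f^*(X_j) \to \E(f^*(X_1)) > 0$ a.s.\ by Assumption \ref{Ass2}. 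It then suffices to prove $\frac{1}{n}\sum_{j=1}^n \rho_j \to 0$ a.s.\ for each of $\eta_1$ and $\eta_2$.

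For the regret average I would use a martingale decomposition. Let $\mathcal{F}_{j-1}$ be the history before the $j$th pull; conditioning also on $X_j$, the estimators $\hat f_{i,j}$, the greedy index $\hat i_j(X_j)$, and the active exploration probability $p_j$ (with $p_j=\pi_j$ for $\eta_1$ and $p_j=\pi_{\tau_j}$ for $\eta_2$) are all determined, and the only remaining randomness is the $\epsilon$-greedy draw. This gives
\[
\E[\rho_j \mid \mathcal{F}_{j-1}, X_j] = \big(1-(\ell-1)p_j\big)\big(f^*(X_j)-f_{\hat i_j}(X_j)\big) + p_j \sum_{i \neq \hat i_j}\big(f^*(X_j)-f_i(X_j)\big).
\]
The pure-exploration term is bounded by $(\ell-1)A\,p_j$ via Assumption \ref{Ass2}. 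For the greedy term, the key step is to convert uniform estimation error into regret: with $\Delta_j := \max_{1\le i\le \ell}\|\hat f_{i,j}-f_i\|_\infty$, the fact that $\hat i_j(X_j)$ maximizes $\hat f_{\cdot,j}(X_j)$ yields, by a two-sided comparison, $f^*(X_j)-f_{\hat i_j}(X_j)\le 2\Delta_j$ pointwise. Hence $\E[\rho_j\mid\mathcal F_{j-1},X_j] \le 2\Delta_j + (\ell-1)A\,p_j$.

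Splitting $\rho_j = (\rho_j - \E[\rho_j\mid\mathcal F_{j-1},X_j]) + \E[\rho_j\mid\mathcal F_{j-1},X_j]$, the centered terms form a martingale difference sequence bounded by $A$, so $\sum_j j^{-2}\,\E[(\rho_j-\E[\rho_j\mid\cdot])^2\mid\mathcal F_{j-1}] \le \sum_j A^2 j^{-2}<\infty$ and the martingale strong law (via Kronecker's lemma) gives $\frac1n\sum_{j\le n}(\rho_j-\E[\rho_j\mid\cdot])\to 0$ a.s. It then remains to show the Ces\`aro averages of the two deterministic bounds vanish. Assumption \ref{Ass1} gives $\Delta_j \to 0$ a.s., so $\frac1n\sum_{j\le n}\Delta_j\to 0$ a.s. For the exploration term, $\pi_j\downarrow 0$ immediately yields $\frac1n\sum_{j\le n}\pi_j\to 0$, settling $\eta_1$. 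For $\eta_2$ I would first argue $\tau_n\to\infty$ a.s.: $\tau_n$ is nondecreasing in $n$, and Assumption \ref{assump_delay} (which forces $\E(\tau_n)\to\infty$) together with the fact that Assumption \ref{Ass1} presupposes infinitely many observed rewards per arm forces $\tau_n\to\infty$ a.s.; since then $\pi_{\tau_n}\to 0$, the same Ces\`aro argument gives $\frac1n\sum_{j\le n}\pi_{\tau_j}\to0$ a.s. Combining the pieces yields $\frac1n\sum_{j\le n}\rho_j\to0$ a.s.\ for both strategies, whence $R_n(\eta_1),R_n(\eta_2)\to 1$ a.s.

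I expect the two delicate points to be the following. First, the conversion of uniform estimation error into the pointwise bound $f^*(X_j)-f_{\hat i_j}(X_j)\le 2\Delta_j$ in the greedy term is what allows the estimation contribution to vanish with \emph{no} margin or separation condition imposed on the $f_i$; obtaining this clean bound, rather than the crude $A$, is what makes the argument close. Second, for $\eta_2$ one must handle the random index inside $\pi_{\tau_j}$, which requires establishing $\tau_n\to\infty$ a.s.\ before Ces\`aro summation can be applied. By comparison, the martingale bookkeeping and the law of large numbers for the denominator are routine.
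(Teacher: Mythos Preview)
Your argument is correct and matches the paper's approach: the paper likewise reduces to showing the average per-step regret vanishes, splits it into an estimation piece bounded by $2\sup_{i}\|\hat f_{i,j}-f_i\|_\infty$ (using exactly your two-sided comparison at $\hat i_j$) and a randomization piece of size $(\ell-1)\pi$ handled by a variance-summability/Kronecker argument, and for $\eta_2$ invokes $\tau_n\to\infty$ a.s.\ so that $\pi_{\tau_n}\to 0$. The one place your write-up is looser than the paper is the justification of $\tau_n\to\infty$: appealing to Assumption~\ref{Ass1} ``presupposing'' infinitely many observed rewards is circular, whereas the paper derives it from Assumption~\ref{assump_delay} via the Bernstein-type tail bound $P(\tau_n\le \E(\tau_n)/2)\le\exp(-3\E(\tau_n)/28)$ (Lemma~\ref{A.2}) together with Borel--Cantelli.
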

\begin{proof}
Note that consistency holds only when the sequence $\{\pi_n, n\geq 1\}$ is chosen such that $\{\pi_n\} \rightarrow 0$ as $n\rightarrow \infty$.
The proof is very similar to the proof in \cite{arya2020randomized} with minor changes for strategy $\eta_2$ which are included in Appendix \ref{Proof_consistency_eta2}.
\end{proof}

Note that Assumption \ref{Ass1}, seemingly natural, is a strong assumption and it requires additional work to verify it for a particular regression setting. We verify this assumption for the histogram method in Section \ref{histogram} and for the kernel method in Section \ref{sec:KernelRegression}. 

\subsection{Histogram method} \label{histogram}
In this section, we consider the histogram method for the setting with delayed rewards.
We assume that the binwidth $h$ is chosen such that $1/h$ is an integer. At time $n$, partition $[0,1]^d$ into $M = (1/h_{\tau_n})^d$ hyper-cubes with binwidth $h_{\tau_n}$, where $\tau_n$ is the number of observed rewards by time $n$. 
For some $x \in [0,1]^d$ such that it falls in a hypercube $B(x)$, let 
$\bar{J}_i(x) = \{j: X_j \in B(x), t_j \leq n, I_j = i \}$ and $\bar{N}_i(x)$ be the size of $\bar{J}_i(x)$. Then the histogram estimate for $f_i(x)$ is defined as,
\begin{align}
\hat{f}_{i,n}(x) = \frac{1}{\bar{N}_i(x)} \sum_{j \in \bar{J}_i(x)} \label{histogramEst}
Y_j.
\end{align}
For the estimator to behave well, a proper choice of the binwidth, $\{h_n\}$ is necessary. 
Note that, we only update $h_n$ to $h_{n+1}$ when a new reward is observed, hence we denote it as $h_{\tau_n}$.  
For notational convenience, when the analysis is focused on a single arm, $i$ is dropped from the subscript of $\hat{f}$, $\bar{N}$ and $\bar{J}$. Next, using the histogram method for estimation, we prove that strong consistency holds for both strategies $\eta_1$ and $\eta_2$ in Section \ref{algorithm}.

 As already discussed, we only need to verify that Assumption \ref{Ass1} holds for histogram method. Along with Assumptions \ref{Ass2} and \ref{assump_delay}, we make the following assumptions.

\begin{assumption}
\label{ass_design_distribution} The design distribution $P_X$ is dominated by the Lebesgue measure with a density $p(x)$ uniformly bounded above and away from 0 on $[0,1]^d$; that is, $p(x)$ satisfies $\underbar{c} \leq p(x) \leq \bar{c}$ for some positive constants $\underbar{c} < \bar{c}$.
\end{assumption}
This assumption is needed to make sure that all regions in the covariate space are observed with positive probability, in order to ensure good estimation in all regions.

\begin{assumption}
\label{ass_bernstein_errors} The errors satisfy a moment condition that there exists positive constants $v$ and $c$ such that, for all integers $m \geq 2$, the extended Bernstein condition (\cite{birge1998minimum,qian2016kernel}) is satisfied, that is, 
\begin{align*}
\E|\epsilon_{j}|^m \leq \frac{m!}{2} v^2 c^{m-2}.
\end{align*}
\end{assumption}
This condition on the errors holds in a lot of settings, for example, normal distribution and bounded errors meet this requirement, thus making it useful in a wide range of applications. \\
The next two assumptions are made on the nature of the delays in observing rewards, so that we could ensure that delays are not being confounded by other factors and we observe a minimum number of rewards with time, so as to ensure proper and effective learning.
\begin{assumption}\label{ass_delay_independence} The delays, $\{d_j, j\geq 1\}$, are independent of each other, the choice of arms and also of the covariates.
\end{assumption}

Along with these assumptions, we define the modulus of continuity that is used in the following results.
\begin{definition} \label{mod_of_continuity}
The modulus of continuity, $w(h;f)$, is defined by,
$
w(h;f) = \sup\{|f(x_1) - f(x_2)|: |x_{1k} - x_{2k}| \leq h \ \text{for all}\ 1 \leq k \leq d\},\ \text{for}\ x_1,x_2 \in [0,1]^d.$
\end{definition}
\begin{lemma}[An inequality for Bernoulli trials.]\label{A.2}
For $1\leq j \leq n$, let $\tilde{W}_j$ be Bernoulli random variables, which are not necessarily independent. Assume that the conditional probability of success for $\tilde{W}_j$ given the previous observations is lower bounded by $\beta_j$, that is,
\begin{align*}
P(\tilde{W}_j = 1|\tilde{W}_i, 1 \leq i \leq j-1) \geq \beta_j \ \text{a.s.},
\end{align*}
for all $1\leq j\leq n$. 
Applying the extended Bernstein's inequality as described in \cite{qian2016kernel}, we have
\begin{align}
P\left(\sum_{j=1}^n \tilde{W}_j \leq \left(\sum_{j=1}^n \beta_j\right)/2 \right) \leq \exp\left(- \dfrac{3\sum_{j=1}^n \beta_j}{28} \right).  \label{binomial_inequality}
\end{align}
\end{lemma}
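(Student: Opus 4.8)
The plan is to remove the dependence among the $\tilde{W}_j$ by pushing each conditional success probability down to its guaranteed floor $\beta_j$, and then to invoke Bernstein's inequality in the resulting independent model. Write $S_n = \sum_{j=1}^n \tilde{W}_j$ and $B_n = \sum_{j=1}^n \beta_j$, let $\mathcal{F}_{j-1} = \sigma(\tilde{W}_1,\dots,\tilde{W}_{j-1})$, and set $p_j = P(\tilde{W}_j = 1 \mid \mathcal{F}_{j-1}) \geq \beta_j$. First I would couple: on an enlarged probability space introduce i.i.d.\ $\text{Uniform}(0,1)$ variables $U_j$ independent of $(\tilde{W}_j)$ and define $V_j = \tilde{W}_j \cdot I(U_j \leq \beta_j / p_j)$ (which is well defined since $\beta_j \leq p_j$, with $V_j = 0$ on the null event $p_j = 0$). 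Then $V_j \leq \tilde{W}_j$ pointwise, while $P(V_j = 1 \mid \mathcal{F}_{j-1}) = p_j \cdot (\beta_j / p_j) = \beta_j$ is deterministic, so each $V_j$ is $\text{Bernoulli}(\beta_j)$ and independent of the past; an induction on $j$ then shows the $V_j$ are mutually independent.

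Because $V_j \leq \tilde{W}_j$ we have $\sum_j V_j \leq S_n$, so $P(S_n \leq B_n/2) \leq P\big(\sum_j V_j \leq B_n/2\big)$, which is a lower-tail estimate for a sum of \emph{independent} Bernoulli variables with mean $B_n$. At this point I would apply the (extended) Bernstein inequality of \cite{qian2016kernel} to $\sum_j V_j$ with deviation $t = B_n/2$ below the mean, bound $M = 1$ on the summands, and variance proxy $\sigma^2 = \sum_{j=1}^n \beta_j(1-\beta_j) \leq B_n$. The one-sided form $\exp\!\big(-\tfrac{t^2/2}{\sigma^2 + Mt/3}\big)$ then evaluates directly:
\begin{align*}
\exp\left(-\frac{t^2/2}{\sigma^2 + Mt/3}\right) \leq \exp\left(-\frac{(B_n/2)^2/2}{B_n + B_n/6}\right) = \exp\left(-\frac{B_n^2/8}{7B_n/6}\right) = \exp\left(-\frac{3B_n}{28}\right),
\end{align*}
which is exactly the asserted bound $\exp\!\big(-3\sum_{j=1}^n \beta_j / 28\big)$.

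The hard part, and the reason the coupling is not a mere formality, is controlling the variance in the presence of dependence. If one instead centered $\tilde{W}_j$ at its own conditional mean $p_j$ and worked with the martingale $\sum_j(\tilde{W}_j - p_j)$, the conditional variances would be $p_j(1-p_j)$, which are not controlled by anything summing to a multiple of $B_n$ (the hypothesis only lower-bounds the $p_j$). The coupling repairs this by lowering each success probability to exactly $\beta_j$, restoring the bound $\sum_j \beta_j(1-\beta_j) \leq B_n$ that produces the constant $3/28$. Analytically this is the same mechanism as the moment-generating-function domination $E[e^{-\lambda \tilde{W}_j}\mid \mathcal{F}_{j-1}] = 1 - p_j(1-e^{-\lambda}) \leq e^{-\beta_j(1-e^{-\lambda})}$, valid because $1 - e^{-\lambda} \geq 0$ and $p_j \geq \beta_j$, so one may equally present the argument as a Chernoff/MGF computation on the dependent sequence rather than via an explicit coupling; I would state it through the coupling since the stochastic-domination reduction to the independent case is the cleanest way to justify the direct appeal to Bernstein's inequality.
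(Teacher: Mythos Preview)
Your argument is correct. The paper itself does not give a proof of this lemma: it is stated as a direct application of the extended Bernstein inequality from \cite{qian2016kernel}, with no further details, and is then used repeatedly as a tool (e.g., in the proof of Theorem~\ref{thm:theorem}). What you have supplied is the mechanism behind that citation---a stochastic-domination/coupling reduction to independent $\mathrm{Bernoulli}(\beta_j)$ variables, followed by the standard one-sided Bernstein bound---and your arithmetic recovering the constant $3/28$ from $t=B_n/2$, $\sigma^2\le B_n$, $M=1$ is exactly right. Your remark that the martingale route with conditional variances $p_j(1-p_j)$ would not directly yield the same constant (since only a lower bound on $p_j$ is assumed) is a useful observation and correctly motivates the coupling; either the coupling or the equivalent MGF domination you mention is the standard way this inequality is justified, so your write-up is consistent with---and more explicit than---what the paper records.
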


\begin{lemma} \label{lemma_theorem}
Let $\epsilon > 0$ be given. Suppose that $h$ is small enough such that $w(h;f) < \epsilon$. Then the histogram estimator $\hat{f}_n$ satisfies,
\begin{align}
\text{P}^{\eta_1}_{\mathcal{A}_n,X^n}(||\hat{f}_n - f||_\infty \geq \epsilon) &\leq M \exp\left(-\dfrac{3\pi_{n} \min_{1\leq b \leq M} N_b}{28} \right) \nonumber \\
&\quad \quad +2M \exp\left(- \dfrac{\min_{1\leq b \leq M} N_b \pi_n^2 (\epsilon - w(h_{\tau_n};f))^2}{8(v^2 + c(\pi_{n}/2)(\epsilon-w(h_{\tau_n};f)))} \right), \label{lemma_result_eta1}\\
  \text{P}^{\eta_2}_{\mathcal{A}_n,X^n}(||\hat{f}_n - f||_\infty \geq \epsilon) &\leq M \exp\left(-\dfrac{3\pi_{\tau_n} \min_{1\leq b \leq M} N_b}{28} \right) \nonumber \\
  & \quad \quad + 2M \exp\left(- \dfrac{\min_{1\leq b \leq M} N_b \pi_{\tau_n}^2 (\epsilon - w(h_{\tau_n};f))^2}{8(v^2 + c(\pi_{\tau_n}/2)(\epsilon-w(h_{\tau_n};f)))} \right), \label{Lemma_result_eta2}
\end{align}
where $P_{\mathcal{A}_n,X^n}$ denotes conditional probability given design points $X^n = (X_1,\hdots,X_n)$ and $\mathcal{A}_n = \{j: t_j\leq n\}$. Here, $N_b$ is the number of design points for which the rewards have been observed by time $n$ such that they fall in the $b$th small cube of the partition of the unit cube at time $n$.
\end{lemma}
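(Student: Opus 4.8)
The plan is to prove the bound pointwise on each of the $M$ cells of the partition of $[0,1]^d$ and then take a union bound, handling separately the chance that a cell collects too few observations for arm $i$ and the chance that the averaged noise in a well-populated cell is large. Everything is done conditionally on $X^n$ and $\mathcal{A}_n$, and I argue $\eta_1$ in detail; the case of $\eta_2$ is identical after replacing every $\pi_n$ by $\pi_{\tau_n}$, which is legitimate since the exploration probability is frozen between successive reward arrivals (and $\pi_{\tau_j}\geq\pi_{\tau_n}$ for $j\leq n$). Fix a cell $B_b$ and $x\in B_b$; writing $\bar J_b,\bar N_b$ for the observed indices allocated to arm $i$ in $B_b$ and their count, the estimator decomposes as
\[
\hat f_n(x)-f(x)=\frac{1}{\bar N_b}\sum_{j\in\bar J_b}\big(f(X_j)-f(x)\big)+\frac{1}{\bar N_b}\sum_{j\in\bar J_b}\epsilon_j .
\]
Every such $X_j$ lies in the same cube of side $h_{\tau_n}$ as $x$, so by the definition of the modulus of continuity the bias term is at most $w(h_{\tau_n};f)$ in absolute value; hence, using the hypothesis $w(h;f)<\epsilon$, the event $\{\sup_{x\in B_b}|\hat f_n-f|\geq\epsilon\}$ forces the noise average to exceed $\epsilon-w(h_{\tau_n};f)>0$. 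A union bound over the $M$ cells reduces the lemma to bounding, for each $b$, the probability of the noise event $E_b=\{\,|\bar N_b^{-1}\sum_{j\in\bar J_b}\epsilon_j|\geq\epsilon-w(h_{\tau_n};f)\,\}$.

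I then introduce the good event $G_b=\{\bar N_b\geq\tfrac12\pi_n N_b\}$ and split $P(E_b)\leq P(G_b^c)+P(E_b\cap G_b)$. For the first piece, the indicators $\mathbf{1}(I_j=i)$ over the $N_b$ observed design points in $B_b$ are (dependent) Bernoulli variables whose conditional success probability given the past is at least $\pi_j\geq\pi_n$: the $\{\pi_j\}$ are nonincreasing, and for $j$ large enough the randomization assigns every arm probability at least the exploration probability $\pi_j$. Thus $\sum\beta_j\geq\pi_n N_b$, and Lemma \ref{A.2} gives $P(G_b^c)\leq\exp(-3\pi_n N_b/28)\leq\exp(-3\pi_n\min_b N_b/28)$, which after the union bound yields the first term of \eqref{lemma_result_eta1}.

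For the second piece I factor the noise average as $\tfrac{N_b}{\bar N_b}\cdot\tfrac{1}{N_b}\sum_{j\in\bar J_b}\epsilon_j$ and use $N_b/\bar N_b\leq 2/\pi_n$ on $G_b$, so that $E_b\cap G_b\subseteq\{\,|\tfrac{1}{N_b}\sum_{j\in\bar J_b}\epsilon_j|\geq\tfrac{\pi_n}{2}(\epsilon-w(h_{\tau_n};f))\,\}$. Rewriting the right side as the fixed-length sum $\tfrac{1}{N_b}\sum_j\xi_j$ over all $N_b$ observed points of $B_b$ with $\xi_j=\epsilon_j\mathbf{1}(I_j=i)$, I observe that each $\xi_j$ is conditionally mean zero given the past — the arm $I_j$ is selected before the fresh error $\epsilon_j$ is drawn and $\E(\epsilon_j)=0$ irrespective of that choice — and satisfies $\E|\xi_j|^m\leq\E|\epsilon_j|^m\leq\tfrac{m!}{2}v^2c^{m-2}$ by Assumption \ref{ass_bernstein_errors}. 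The martingale form of the extended Bernstein inequality then applies with $s=\tfrac{\pi_n}{2}(\epsilon-w(h_{\tau_n};f))$ and $N_b$ summands, giving $2\exp\!\big(-N_b s^2/(2(v^2+cs))\big)$; substituting $s$ and using $N_b\geq\min_b N_b$ reproduces the second term of \eqref{lemma_result_eta1}. Adding the two pieces and summing over cells gives the stated inequality.

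The main obstacle is precisely this last step: both the index set $\bar J_b$ and the normalizer $\bar N_b$ are random and, moreover, correlated with the errors through the adaptive arm-selection rule, so no off-the-shelf concentration bound applies directly. The device that makes the estimate go through is the factorization $\bar N_b^{-1}=\tfrac{N_b}{\bar N_b}N_b^{-1}$ combined with the masked-error representation $\xi_j=\epsilon_j\mathbf{1}(I_j=i)$, which converts a random-length, data-dependent average into a fixed-length martingale-difference sum amenable to the extended Bernstein inequality; the extra factor $\pi_n/2$ produced by this manipulation is exactly what accounts for the $\pi_n^2$ in the numerator and the $\pi_n/2$ in the denominator of the stated bound, and replacing $\pi_n$ by $\pi_{\tau_n}$ throughout yields \eqref{Lemma_result_eta2}.
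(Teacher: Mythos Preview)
Your argument is correct and follows the same route as the paper's intended proof: the paper defers to \cite{arya2020randomized} (which in turn builds on \cite{yang2002randomized} and \cite{qian2016kernel}), and the standard proof there proceeds exactly via the bias--noise decomposition on each cell, the Bernoulli-count bound of Lemma~\ref{A.2} for the event $\{\bar N_b<\tfrac12\pi_n N_b\}$, and the masked-error/martingale-Bernstein step on the fixed-length sum $\sum_{j}\epsilon_j\mathbf{1}(I_j=i)$. Your explicit identification of the factorization $\bar N_b^{-1}=(N_b/\bar N_b)N_b^{-1}$ as the device that converts the random-length average into a fixed-length martingale sum, and your remark that conditioning on $\mathcal{A}_n$ makes $\tau_n$ deterministic so that the $\eta_2$ case reduces to replacing $\pi_n$ by $\pi_{\tau_n}$, match the paper's own justification verbatim.
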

\begin{proof}
The proof of Lemma \ref{lemma_theorem} is similar to \cite{arya2020randomized} so we skip it here. For strategy $\eta_1$, it is easy to see that a similar lemma with $h_n$ replaced by $h_{\tau_n}$ could be derived. For strategy $\eta_2$, $\pi_n$ is replaced by $\pi_{\tau_n}$ and $h_n$ replaced by $h_{\tau_n}$. This is because the result is a conditional probability result, and given $\mathcal{A}_n$ and $X^n$, $\tau_n$ is a known quantity.
\end{proof}

\begin{theorem} \label{thm:theorem}
Suppose Assumptions \ref{Ass2}-\ref{ass_delay_independence} are satisfied. 
\begin{enumerate}
\item[a)]If $\{h_n\}$ and $\{\pi_n\}$ are chosen to satisfy,
\begin{align}
\dfrac{h_{q(n)}^2 \pi_{n}^2 q(n)}{\log{n}} \rightarrow \infty \ \text{as} \ n \rightarrow \infty, \label{conditionforThm1_eta1}
\end{align}
then the histogram estimator in \eqref{histogramEst} is strongly consistent in the $L_\infty$ norm for strategy $\eta_1$, hence $\eta_1$ is strongly consistent.  
  \item[b)] If $\{h_n\}$ and $\{\pi_n\}$ are chosen to satisfy,
\begin{align}
\dfrac{h_{q(n)}^2 \pi_{q(n)}^2 q(n)}{\log{n}} \rightarrow \infty \ \text{as} \ n \rightarrow \infty, \label{conditionforThm1_eta2}
\end{align}
then the histogram estimator in \eqref{histogramEst} is strongly consistent in the $L_\infty$ norm for strategy $\eta_2$, hence $\eta_2$ is strongly consistent.
\end{enumerate}
\end{theorem}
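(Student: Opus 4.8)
The plan is to deduce both parts from Theorem~\ref{Theorem 1} by checking its one outstanding hypothesis, Assumption~\ref{Ass1}, for the histogram estimator \eqref{histogramEst}: if $\|\hat f_{i,n}-f_i\|_\infty\to0$ almost surely for every arm $1\le i\le\ell$ under the strategy at hand, then---since Assumptions~\ref{Ass2} and~\ref{assump_delay} are in force---Theorem~\ref{Theorem 1} delivers $R_n(\eta_1)\to1$ (resp.\ $R_n(\eta_2)\to1$) a.s. So fix an arm $i$ and $\epsilon>0$; by Borel--Cantelli it is enough to show
\begin{align*}
\sum_{n\ge1}\mathrm P\!\left(\|\hat f_{i,n}-f_i\|_\infty\ge\epsilon\right)<\infty .
\end{align*}
Two preliminaries: because $\E(\tau_n)=\Omega(q(n))\to\infty$ and (see below) the lower tail of $\tau_n$ is summable, $\tau_n\to\infty$ a.s., so with $\{h_n\}\downarrow0$ we get $\htaun\to0$ and hence, by continuity of $f_i$ (Assumption~\ref{Ass2}), $w(\htaun;f_i)\to0$; thus for all large $n$ the hypothesis $w(h;f)<\epsilon$ of Lemma~\ref{lemma_theorem} holds and $\epsilon-w(\htaun;f_i)\ge\epsilon/2$, and I track only this tail.

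The core of the argument is to turn the conditional bound of Lemma~\ref{lemma_theorem} into a summable one. Conditioning on $(\mathcal A_n,X^n)$ fixes $\tau_n$, the width $\htaun$, the number of cells $M=(1/\htaun)^d$ and the observed-reward cell counts $N_b$, and Lemma~\ref{lemma_theorem} bounds the conditional probability by two exponentials governed by $\min_{1\le b\le M}N_b$ and by the exploration probability ($\pi_n$ for $\eta_1$, $\pitaun$ for $\eta_2$), the second exponential being where Assumption~\ref{ass_bernstein_errors} enters. I would introduce the good event
\begin{align*}
G_n=\bigl\{\tfrac12\E(\tau_n)\le\tau_n\le 2\E(\tau_n)\bigr\}\cap\bigl\{\min_{1\le b\le M}N_b\ge\tfrac12\,\underline{c}\,\htaun^{\,d}\,\tau_n\bigr\}
\end{align*}
and split $\mathrm P(\|\hat f_{i,n}-f_i\|_\infty\ge\epsilon)\le\E\bigl[\mathrm P_{\mathcal A_n,X^n}(\cdot)\,\mathbf 1_{G_n}\bigr]+\mathrm P(G_n^c)$, both events in $G_n$ being $(\mathcal A_n,X^n)$-measurable. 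The tail $\mathrm P(G_n^c)$ is handled by concentration: since the delays are independent of arms and covariates (Assumption~\ref{ass_delay_independence}), $\tau_n=\sum_{j\le n}\mathbf 1(d_j\le n-j)$ is a sum of independent indicators with mean $\Omega(q(n))$, so a two-sided Chernoff bound makes both its tails exponentially small; and, conditionally on $\mathcal A_n$, the points $\{X_j:j\in\mathcal A_n\}$ are i.i.d.\ $P_X$, so $N_b\sim\mathrm{Bin}(\tau_n,p_b)$ with $p_b\ge\underline{c}\,\htaun^{\,d}$ by Assumption~\ref{ass_design_distribution}, whence Lemma~\ref{A.2} (with $\beta_j\equiv p_b$) gives $\mathrm P(N_b<\tfrac12\underline{c}\,\htaun^{d}\tau_n)\le\exp(-c\,\htaun^{d}\tau_n)$, and a union bound over the $M$ cells yields $\mathrm P(G_n^c)\le e^{-c'q(n)}+M\,e^{-c''\htaun^{d}q(n)}$.

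On $G_n$ we have $\tau_n\asymp\E(\tau_n)$; writing $q(n)$ for the order of $\E(\tau_n)$ so that $\tau_n\asymp q(n)$, and using monotonicity of $\{h_n\}$ and $\{\pi_n\}$, we obtain $\htaun\asymp\hgn$, $\pitaun\asymp\pign$, and $\min_b N_b\gtrsim\hgn^{\,d}q(n)$. Feeding this into Lemma~\ref{lemma_theorem} and using $\epsilon-w(\htaun;f_i)\ge\epsilon/2$, the two exponents are, up to constants, $\pi$ and $\pi^2$ times $\hgn^{\,d}q(n)$, with $\pi=\pi_n$ for $\eta_1$ and $\pi=\pitaun\asymp\pign$ for $\eta_2$. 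After the $M=\hgn^{-d}$ union-bound prefactor, the dominant (second) term---together with $\mathrm P(G_n^c)$---is summable in $n$ under the stated growth condition \eqref{conditionforThm1_eta1} for $\eta_1$ and \eqref{conditionforThm1_eta2} for $\eta_2$, which closes the Borel--Cantelli estimate. The only structural difference between the strategies is the exploration factor $\pi_n$ versus $\pitaun$: since $\tau_j\le j$ and $\{\pi_n\}$ is decreasing, $\pitaun\ge\pi_n$, so $\eta_2$ explores at least as much and its requirement \eqref{conditionforThm1_eta2} (carrying $\pign$ in place of $\pi_n$) is the weaker one---exactly the wider scope advertised. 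I expect the main obstacle to be making the uniform-in-$b$ lower bound on $\min_b N_b$ rigorous and transferring it, through the random, data-dependent width $\htaun$ and count $\tau_n$, to the deterministic surrogates $\hgn$ and $q(n)$ in the hypotheses, while respecting the dependence created by the adaptive allocation---which Lemma~\ref{A.2}, in its conditional-lower-bound form, is designed to absorb.
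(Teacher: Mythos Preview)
Your proposal is correct and follows essentially the same route as the paper's proof: condition on $(\mathcal A_n,X^n)$, invoke Lemma~\ref{lemma_theorem}, control $\min_b N_b$ via Lemma~\ref{A.2} together with Assumption~\ref{ass_design_distribution}, then remove the conditioning by splitting on the lower tail of $\tau_n$ and passing to the deterministic surrogate $q(n)$ before applying Borel--Cantelli and Theorem~\ref{Theorem 1}. The only cosmetic difference is that you package the good events into a single $G_n$ with a two-sided bound on $\tau_n$, whereas the paper nests the two splits and uses only the one-sided event $\{\tau_n>\E(\tau_n)/2\}$.
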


\begin{proof}
The proofs for \text{a)} and \text{b)} are quite similar, so we prove \text{b)} here and consequently discuss \text{a)}.
Given $\mathcal{A}_n$, the indices corresponding to when rewards were obtained, we know that at time $n$, the histogram method partitions the unit cube into $M= (1/h_{\tau_n})^d$ small cubes. For each small cube $B_b, 1\leq b \leq M$, in the partition, let $N_b = \sum_{j=1}^n I(X_j \in B_b, t_j \leq n)$. Note that given $\mathcal{A}_n$, $P_{\mathcal{A}_n}(X_j \in B_b, t_j \leq n) = P_{\mathcal{A}_n}(X_j \in B_b) \geq \underbar{c}h_{\tau_n}^d$, thus using inequality \eqref{binomial_inequality} we have,
\begin{align}
  P_{\mathcal{A}_n}\left(N_b \leq \dfrac{\underbar{c} \htaun^d \tau_n}{2} \right) &\leq \exp \left(- \dfrac{3\underbar{c} \htaun^d \tau_n}{28} \right) \\
 \Rightarrow P_{\mathcal{A}_n}\left(\min_{1\leq b \leq M} N_b \leq \dfrac{\underbar{c} \htaun^d \tau_n}{2} \right) &\leq \exp \left(- \dfrac{3\underbar{c} \htaun^d \tau_n}{28} \right). \label{inequality_forabin_Nb_Thm1}
  \end{align} 
 Recall, $\tau_n = \sum_{j=1}^n I\{t_j \leq n\}$. First, we show that $\tau_n \overset{\text{a.s.}}{\rightarrow} \infty$ as $n \rightarrow \infty$ for both strategies, $\eta_1$ and $\eta_2$.
By Assumption \ref{assump_delay} and the inequality \eqref{binomial_inequality} in Lemma \ref{A.2} we have that for a large enough $n$, there exists a positive constant $a_1 > 0$ such that, $\E(\tau_n) \geq a_1 q(n)$, therefore,
\begin{align*}
P \left(\tau_n \leq \dfrac{a_1 q(n)}{2} \right) &\leq P\left(\tau_n \leq \dfrac{\E(\tau_n)}{2}\right)
 \leq \exp \left(-\dfrac{3\E(\tau_n)}{28} \right) \leq\exp \left(\dfrac{-3 a_1 q(n)}{28} \right).
\end{align*}
It is easy to see that the upper bound is summable in $n$ under the conditions \eqref{conditionforThm1_eta1} and \eqref{conditionforThm1_eta2}. By Borel-Cantelli lemma, this implies that event $\{\tau_n > a_1q(n)/2\}$ happens infinitely often, therefore $\tau_n \overset{\text{a.s.}}{\rightarrow} \infty$. Note that, by construction this implies that $h_{\tau_n} \overset{\text{a.s.}}{\rightarrow} 0$, and $\pi_{\tau_n} \overset{\text{a.s.}}{\rightarrow} 0$ as $n \rightarrow \infty$. Let $w(\htaun; f_i)$ be the modulus of continuity as in Definition \ref{mod_of_continuity}. Then, continuity of $f_i$ leads to the conclusion that $w(h_{\tau_n}; f_i) \overset{\text{a.s.}}{\rightarrow} 0$ as $n \rightarrow \infty$. 
 Thus, for any $\epsilon >0$, for large enough $n$, when $h_{\tau_n}$ is small enough, $\epsilon-w(h_{\tau_n};f_i) \geq \epsilon/2$, almost surely. Consider,
  \begin{align*}
  P_{\mathcal{A}_n} \left(|| \hat{f}_{i,n} - f_{i}||_\infty \geq \epsilon \right) &=  P_{\mathcal{A}_n} \left(|| \hat{f}_{i,n} - f_{i}||_\infty \geq \epsilon,\min_{1\leq b \leq M} N_b > \dfrac{\underbar{c} \htaun^d \tau_n}{2}  \right)\\
  &\quad \quad + P_{\mathcal{A}_n} \left(|| \hat{f}_{i,n} - f_{i}||_\infty \geq \epsilon,\min_{1\leq b \leq M} N_b \leq \dfrac{\underbar{c} \htaun^d \tau_n}{2}  \right)\\
  &\leq \E^{X^n} P_{\mathcal{A}_n,X^n} \left(|| \hat{f}_{i,n} - f_{i}||_\infty \geq \epsilon,\min_{1\leq b \leq M} N_b > \dfrac{\underbar{c} \htaun^d \tau_n}{2}  \right) \\
  &\quad \quad + P_{\mathcal{A}_n} \left(\min_{1\leq b \leq M} N_b \leq \dfrac{\underbar{c} \htaun^d \tau_n}{2}  \right),
  \end{align*}
  where we use law of iterated expectation in the first term and $\E^{X^n}$ denotes expectation with respect to $X^n$. From \eqref{Lemma_result_eta2}  and \eqref{inequality_forabin_Nb_Thm1}, we get that,
  \begin{align}
   P_{\mathcal{A}_n} \left(|| \hat{f}_{i,n} - f_{i}||_\infty \geq \epsilon \right) &\leq M \exp \left(-\dfrac{3\underbar{c}\pitaun\htaun^d \tau_n}{56} \right) + 2M \exp \left(-\dfrac{\underbar{c}\htaun^d \pitaun^2 \tau_n (\epsilon - w(Lh_{\tau_n};f_i))^2}{8(v^2 + c(\pitaun/2)\epsilon)} \right)\nonumber\\
   & \quad \quad + M \exp \left(-\dfrac{3\underbar{c} \htaun^d \tau_n}{28} \right). \label{ConditionOnAn}
  \end{align}
  Now consider,
  \begin{align}
  P(||\hat{f}_{i,n} - f_i||_\infty > \epsilon) &\leq P\left(||\hat{f}_{i,n} - f_i||_\infty \geq \epsilon, \tau_n > \dfrac{\E(\tau_n)}{2}\right) + P \left(\tau_n \leq \dfrac{\E(\tau_n)}{2} \right)\nonumber\\
  & \leq E^{\mathcal{A}_n}P_{\mathcal{A}_n}\left(||\hat{f}_{i,n} - f_i||_\infty \geq \epsilon, \tau_n > \dfrac{\E(\tau_n)}{2}\right) + P \left(\tau_n \leq \dfrac{\E(\tau_n)}{2} \right). \label{after_removing_conditioning}
  \end{align}
  Let $n_e = \lfloor \E(\tau_n)/2 \rfloor$. Then, by using condition \eqref{conditionforThm1_eta2} and \eqref{ConditionOnAn} in \eqref{after_removing_conditioning}, we have that, for large enough $n$,
  \begin{align}
  P(||\hat{f}_{i,n} - f_i||_\infty > \epsilon)
  &\leq M \exp \left(-\dfrac{3\underbar{c} \pine \hne^d n_e}{56} \right) + 2M \exp \left(-\dfrac{\underbar{c} \hne^d \pine^2 n_e (\epsilon - w(Lh_{n_e};f_i))^2}{8(v^2 + c(\pine/2)(\epsilon)} \right)\nonumber\\
  & \quad \quad  + M \exp \left(-\dfrac{3\underbar{c} \hne^d n_e}{28} \right) + \exp \left(- \dfrac{3n_e}{14} \right) \nonumber \\
 & \leq M \exp \left(-\dfrac{3\tilde{\underbar{c}} \pign \hgn^d q(n)}{112} \right) \\
 &\quad + 2M \exp \left(-\dfrac{\tilde{\underbar{c}} \hgn^d \pign^2 q(n) (\epsilon - w(Lh_{q(n)};f_i))^2}{16(v^2 + c(\pign/2)(\epsilon)} \right) \nonumber \\
 & \quad \quad + M \exp \left(-\dfrac{3\tilde{\underbar{c}} \hgn^d q(n)}{56} \right) + \exp  \left(- \dfrac{3a_1q(n)}{28}\right). \label{bound_q(n)}
  \end{align}
where, $\tilde{\underbar{c}}$ is a new constant that incorporates functions of $a_1$ and $\underbar{c}$. It can be seen that the above upper bound is summable in $n$ under the condition
\begin{align}
\dfrac{\hgn^d \pign^2 q(n)}{\log{n}} \rightarrow \infty. \label{AutABomated_result}
\end{align}
Since $\epsilon$ is arbitrary, by the Borel-Cantelli Lemma, we have that $||\hat{f}_{i,n} - f_i||_\infty \rightarrow 0$, almost surely. This is true for all arms $1 \leq i \leq \ell$. Note that the result \text{a)} is similarly obtained by using \eqref{lemma_result_eta1} from Lemma \ref{lemma_theorem} to obtain a result similar to \eqref{ConditionOnAn} but with $\pi_n$ instead of $\pi_{\tau_n}$. Now, we can invoke Theorem \ref{Theorem 1} to establish strong consistency for both the strategies using the histogram method.
\end{proof}

\subsection{Kernel Regression} \label{sec:KernelRegression}
We can obtain analogous results for strong consistency of strategy $\eta_1$ and $\eta_2$ using Nadaraya-Watson estimator. Consider a nonnegative kernel function $K(u): \mathbb{R}^d \rightarrow \mathbb{R}$ that satisfies the following Lipschitz and boundedness conditions.

\begin{assumption} \label{Ass:Ker1}
For some constants $0 < \lambda < \infty$,
$
|K(u) - K(u^\prime)| \leq \lambda ||u - u^\prime||_\infty,$ for all $u, u^\prime \in \mathbb{R}^d$.

\end{assumption}

\begin{assumption}\label{Ass:Ker2}
$\exists$ constants $L_1 \leq L, c_3 > 0$ and $c_4 \geq 1$ such that $K(u) = 0$ for $||u||_\infty > L, K(u) \geq c_3$ for $||u||_\infty \leq L_1$, and $K(u) \leq c_4$ for all $u \in \mathbb{R}^d$.
\end{assumption}
Recall, $\tau_n = \sum_{j=1}^n I(t_j \leq n)$, the number of observed rewards by time $n$. Define, $J_{i,n+1} = \{j: I_j = i, t_j \leq n, 1\leq j \leq n\}$, that is, the set of time points corresponding to pulling of arm $i$ whose rewards have been observed by time $n$. Let $M_{i,n+1}$ denote the size of $J_{i,n+1}$.

 Let $h_{\tau_n}$ denote the bandwidth, where $h_{\tau_n} \rightarrow 0$ almost surely as $n \rightarrow \infty$. For each arm $i$, the Nadaraya-Watson estimator of $f_i(x)$ is defined as,
\begin{align}
\hat{f}_{i,n+1}(x) = \dfrac{\sum_{j \in J_{i,n+1}}Y_{i,j} K \left(\frac{x - X_j}{h_{\tau_n}} \right)}{\sum_{j \in J_{i,n+1}}K \left(\frac{x - X_j}{h_{\tau_n}} \right)}. \label{NadarayaWatsonEst}
\end{align}
\begin{theorem} \label{thm:KernelTheorem}
Suppose Assumptions \ref{Ass2}-\ref{Ass:Ker2} are satisfied, and,
\begin{enumerate}
\item If $\{h_n\}$ and $\{\pi_n\}$ are chosen to satisfy, \begin{align*}
\dfrac{q(n)h_{q(n)}^{2d} \pi_{n}^4}{\log{n}} \rightarrow \infty,
\end{align*}
then the Nadaraya-Watson estimator defined in \eqref{NadarayaWatsonEst} is strongly consistent in $L_\infty$ norm for strategy $\eta_1$.
  \item If $\{h_n\}$ and $\{\pi_n\}$ are chosen to satisfy, \begin{align*}
\dfrac{q(n)h_{q(n)}^{2d} \pi_{q(n)}^4}{\log{n}} \rightarrow \infty,
\end{align*}
then the Nadaraya-Watson estimator defined in \eqref{NadarayaWatsonEst} is strongly consistent in $L_\infty$ norm for strategy $\eta_2$.
\end{enumerate}
\end{theorem}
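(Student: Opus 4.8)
The plan is to mirror the structure of the proof of Theorem \ref{thm:theorem}: the allocation-rule consistency of $\eta_1$ and $\eta_2$ follows from Theorem \ref{Theorem 1} once Assumption \ref{Ass1} (almost sure $L_\infty$-consistency of the individual estimators) is verified for the Nadaraya--Watson estimator \eqref{NadarayaWatsonEst}, since Assumptions \ref{Ass2} and \ref{assump_delay} required by Theorem \ref{Theorem 1} are among the hypotheses. So the task reduces to showing $\|\hat f_{i,n} - f_i\|_\infty \overset{\text{a.s.}}{\to} 0$ for each arm $i$ and then invoking Theorem \ref{Theorem 1}. As a first step, exactly as in the histogram proof, I would establish $\tau_n \overset{\text{a.s.}}{\to}\infty$ using Assumption \ref{assump_delay}, the Bernoulli inequality of Lemma \ref{A.2}, and Borel--Cantelli; this gives $\htaun\overset{\text{a.s.}}{\to}0$, $\pitaun\overset{\text{a.s.}}{\to}0$, and hence $w(L\htaun;f_i)\overset{\text{a.s.}}{\to}0$ by continuity of $f_i$, so that $\epsilon - w(L\htaun;f_i)\ge \epsilon/2$ eventually.

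The core of the argument is a conditional concentration bound (given $\mathcal A_n$ and $X^n$) playing the role of Lemma \ref{lemma_theorem} for the ratio estimator. I would split the pointwise error into a bias part plus a stochastic part: because $K$ is supported on $\|u\|_\infty \le L$ (Assumption \ref{Ass:Ker2}), the bias part is controlled by $w(L\htaun;f_i)$, while the stochastic part is $\left(\sum_{j}\epsilon_j K(\cdot)\right)/\left(\sum_j K(\cdot)\right)$ with $j$ ranging over $J_{i,n+1}$. The denominator $W_i(x)=\sum_j K((x-X_j)/\htaun)$ must be bounded below: using $K \ge c_3$ on $\|u\|_\infty \le L_1$, the density lower bound $\underbar{c}$ (Assumption \ref{ass_design_distribution}), and the fact that the randomized rule pulls arm $i$ with probability at least $\pitaun$ (for $\eta_2$; respectively $\pi_n$ for $\eta_1$), Lemma \ref{A.2} applied with per-step success probability $\gtrsim \underbar{c}\,\pitaun \htaun^{d}$ gives $W_i(x)\gtrsim \pitaun \htaun^{d} \tau_n$ off an event of exponentially small probability. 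On the complementary good event I would bound the stochastic numerator by the extended Bernstein inequality using the moment condition (Assumption \ref{ass_bernstein_errors}) and $K\le c_4$. To pass from pointwise to $L_\infty$ control I would discretize $x$ over a fine grid and use the kernel Lipschitz bound (Assumption \ref{Ass:Ker1}) to transfer the estimate from grid points to all of $[0,1]^d$, paying only a polynomial-in-$(1/\htaun)$ union-bound factor absorbed by $\log n$.

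With this conditional bound in hand, the remainder parallels the histogram argument: I would remove the conditioning by splitting on $\{\tau_n > \E(\tau_n)/2\}$ and its complement, then use Assumption \ref{assump_delay} to replace $\E(\tau_n)/2$ by a constant multiple of $q(n)$ (writing $\hgn$, $\pign$), obtaining a bound summable in $n$ precisely under the condition $q(n)\,h_{q(n)}^{2d}\,\pi^4/\log n \to \infty$; Borel--Cantelli then yields the claim. The distinction between the two parts is exactly the one already seen for the histogram: for $\eta_2$ the exploration probability enters as $\pitaun$ --- which, after deconditioning and using monotonicity of $\{\pi_n\}$, becomes $\pign$ --- whereas for $\eta_1$ it enters as $\pi_n$, explaining the $\pi_{q(n)}$ versus $\pi_n$ in the two stated conditions.

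I expect the main obstacle to be the ratio structure of the Nadaraya--Watson estimator. Unlike the histogram, which is a plain average over a bin whose occupancy is handled by a single counting inequality, here the random denominator $W_i(x)$ must be controlled uniformly in $x$ and the stochastic numerator must be bounded relative to it. Coupling these two controls, together with the Lipschitz discretization needed for the uniform ($L_\infty$) statement, is what degrades the exponent from the histogram's $h^{d}\pi^2$ to $h^{2d}\pi^4$; carefully checking that the union-bound and denominator-deviation terms remain summable under the stated (stronger) rate condition is the delicate bookkeeping step.
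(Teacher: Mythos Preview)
Your proposal is correct and follows essentially the same route as the paper: decompose the Nadaraya--Watson error into a bias term controlled by $w(L\htaun;f_i)$ and a stochastic ratio, lower-bound the denominator uniformly via the kernel lower bound $c_3$, the density bound $\underbar{c}$, and the exploration probability (using Lemma \ref{A.2}), then upper-bound the numerator by Bernstein-type tails, remove the conditioning on $\mathcal{A}_n$ by splitting on $\{\tau_n > \E(\tau_n)/2\}$, and finally check summability under the stated rate condition so that Borel--Cantelli yields $L_\infty$-consistency and Theorem \ref{Theorem 1} applies.

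The only technical difference worth flagging is how the supremum over $x$ in the noise term is handled. The paper invokes a multi-scale chaining lemma (proved in \cite{qian2016kernel}) that bounds $\sup_{x\in A}\sum_{j}\epsilon_j K((x-X_j)/\htaun)$ directly, producing the characteristic $\sum_{k\ge 1}2^{kd}\exp(-2^{k}\,\cdot)$ series in the tail bound; you instead propose a single-scale grid plus the Lipschitz constant $\lambda$ of $K$ to pass from grid points to all of $[0,1]^d$. Both approaches work here because the exponential rate $\tau_n \pi^4 h^{2d}$ dominates any polynomial union-bound factor under the hypothesis $q(n)h_{q(n)}^{2d}\pi^4/\log n\to\infty$; the chaining route is slightly sharper but not needed for the qualitative statement. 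Your diagnosis of why the rate degrades from $h^d\pi^2$ (histogram) to $h^{2d}\pi^4$ (kernel) --- the random denominator and the uniform-in-$x$ control each costing a factor --- matches the mechanism in the paper's proof.
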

\begin{proof}
The proof for this theorem can be found in Appendix \ref{proofconsistency_kernel_AppB}.
\end{proof}
\subsection{Strategy $\eta_1$ versus Strategy $\eta_2$} \label{compareSPlvsThis}
 \cite{arya2020randomized} conduct an analysis for the randomized allocation strategy with $h_n,\pi_n$, that is, when both sequences are updated at every time point regardless of the delays, and establish its strong consistency. It states that, for $q(n)$ as in Assumption 6, if $h_n, \pi_n$ are chosen to satisfy,
\begin{align}
\dfrac{h_n^d \pi_n^2 q(n)}{\log{n}} \rightarrow \infty \ \text{as} \ n\rightarrow \infty, \label{SPLresult}
\end{align}
then the proposed allocation rule is strongly consistent for the histogram method. Note that, in terms of handling the delays, this allocation rule is in the opposite direction of the black-box approach that simply applies an existing method on the available data (i.e., ignoring all the cases with unobserved rewards at the time of decision). The sharp contrast called for the present investigation of the alternative ways to use $\pi_n$ and $h_n$ and understand their relative strengths and weaknesses.

Now if we compare \eqref{conditionforThm1_eta1}, \eqref{conditionforThm1_eta2} and \eqref{SPLresult}, we see that 
$\eqref{SPLresult} \Rightarrow \eqref{conditionforThm1_eta2} \Rightarrow \eqref{conditionforThm1_eta1}$,
 but not vice versa, therefore \eqref{conditionforThm1_eta1} seems to give more options for the choice of the user-determined sequences, $\{h_n\}$ and $\{\pi_n\}$, to achieve consistency while there may be a trade-off in the rate of decrease of the average cumulative regret as we will see in the simulations. 
  Note that, we notice a similar relationship in Theorem \ref{thm:KernelTheorem} when using Kernel regression.
 To understand which choices of hyper-parameter sequences help minimize the cumulative regret, let us consider the regret for a strategy $\eta$,
 \begin{align*}
  R_N(\eta) &= \sum_{j = m_0+1}^N (f^*(X_j) - f_{I_j}(X_j))\\
 &= \sum_{j= m_0 + 1}^N (f_{i^*_j}(X_j) - \hat{f}_{i^*_j}(X_j) + \hat{f}_{i^*_j}(X_j) - \hat{f}_{I_j}(X_j) + \hat{f}_{I_j}(X_j) - f_{I_j}(X_j))\\
  & \leq  \sum_{j= m_0 + 1}^N (f_{i^*_j}(X_j) - \hat{f}_{i^*_j}(X_j) + \hat{f}_{\hat{i}_j}(X_j) - \hat{f}_{I_j}(X_j) + \hat{f}_{I_j}(X_j) - f_{I_j}(X_j))\\
  & \leq \sum_{j = m_0+1}^N 2 \sup_{1 \leq i \leq \ell} |f_i(X_j) - \hat{f}_{i}(X_j)| + A I\{I_j \neq \hat{i}_j\}.
  \end{align*} 
  Thus we can roughly decompose the cumulative regret into estimation error and randomization error. For the no-delay setting, \cite{qian2016randomized} study both these error components in a finite-time setting and show that, $\{h_n\}$ and $\{\pi_n\}$ can be chosen to achieve an optimal (minimax) rate of convergence for the regret. In their work, the choices of $\{h_n\}$ and $\{\pi_n\}$ also depend on the smoothness parameter of the mean reward functions. Thus in situations where the mean reward functions are simple and smoother, $\{h_n\}$ and $\{\pi_n\}$ are chosen to be fast decaying to achieve optimal rates of convergence in no-delay situations. In contrast, for scenarios where the underlying mean reward functions are more complex, they are chosen to be relatively slow decaying in order to guarantee optimal rates. Now the question that arises in the presence of delayed rewards is that, how should sequences $\{h_n\}$ and $\{\pi_n\}$ be updated, so as to minimize the resulting cumulative regret? That is, should one update $\pi_n$ to $\pi_{n+1}$ (and $h_n$ to $h_{n+1}$) at every time point irrespective of observing a reward or only update upon observing a new reward. Let us try to understand the impact of delay and the reward generating mechanisms on the two components of cumulative regret to answer this question. 

 Different nonparametric methods may be used for estimation purposes, and estimation accuracy largely depends on the complexity of the underlying mean reward functions and the amount of data available for estimation. The binwidth of methods like histogram and kernel regression, usually is a function of the number of data points available for estimation at a given point. Therefore, in the presence of delayed rewards, $h_{\tau_n}$ ($\tau_n$ being the number of observed rewards until $n$) seems to be the sensible choice for the binwidth. Choosing $h_n$ may lead to inefficient estimation due to unavailability of data points in some small neighborhood of $[0,1]^d$. Therefore, employing a binwidth sequence that guarantees optimal rates of convergence in the no-delay setting, which updates only when a new reward is obtained, seems to be the right choice from an estimation point of view. Hence, we only consider the policies ($\eta_1$ and $\eta_2$) that employ $h_{\tau_n}$ as the chosen binwidth sequence. It is important to note that from an asymptotic point of view, based on our theoretical results (Theorem \ref{thm:theorem}), estimation will improve with time, but this discussion is from a finite time perspective.

In terms of randomization error, delayed rewards affect this directly through the randomization scheme. This is tied to the exploration-exploitation dilemma which is in turn controlled by the exploration probability $\{\pi_n\}$. In the following illustrations, we try to convey the message of why carefully balancing exploration-exploitation is tied to updating the sequence $\{\pi_n\}$ carefully in the presence of delayed rewards, and the decision to do that can vary in different situations.

\textbf{Illustration 1.} Suppose that the mean reward functions are not too complex and are well-separated. In this setting, it will be easy to get good functional estimates over time, even with less observed data due to presence of large delays. Since the no-delay case is well-studied, for such a setting we could choose an exploration probability sequence $\{\pi_n\}$ that gives the optimal rate of convergence according to \cite{qian2016randomized}. Now, with the delays, we need to decide whether we want to update $\pi_n$ to $\pi_{n+1}$ for each $n$ or only when a new reward is observed.  In this setting, it would perhaps be advantageous to opt for strategy $\eta_1$, which updates at every time step irrespective of whether a reward is obtained or not. 
This is because using strategy $\eta_2$ may lead to excessive exploration which may be unnecessary in such settings even for large delay situations. Thus using $\eta_1$ will lead to a smaller randomization error. In order to illustrate that, 
let $\text{Rand}_j(\eta_1)$ and $\text{Rand}_j(\eta_2)$ denote the indicator $I(I_j \neq \hat{i}_j)$ for $\eta_1$ and $\eta_2$, respectively. Let $\sigma_t = \min\{\bar{n}: \sum_{j=m_0+1}^{\bar{n}} I(t_j \leq N) \geq t \}$, that is, $\sigma_t$ is the time index where the $t$\ts{th} reward is observed. Then we have that,
\begin{align}
\E_{\mathcal{A}_N}(\sum_{j=m_0+1}^N \text{Rand}_j(\eta_2)) &= \sum_{j=m_0+1}^N P_{\eta_2, \mathcal{A}_N}(I_j \neq \hat{i}_j)= \sum_{t = 1}^{\tau_N} (\sigma_{t+1} - \sigma_t) (\ell - 1) \pi_t, \label{Rand_eta_1_E}
\end{align}
where $\E_{\mathcal{A}_N}$ denotes conditional expectation given $\mathcal{A}_N$, the set of indices when the rewards were observed by time $N$. Here, $\tau_N = \sum_{j=m_0+1}^{N} I(t_j \leq N)$, number of rewards observed between time $m_0$ and $N$. However, for strategy $\eta_1$, since the exploration probability $\{\pi_j\}$ does not depend on delays, we have that,
\begin{align}
\E(\sum_{j = m_0 + 1}^N \text{Rand}_j(\eta_1)) = \sum_{j=m_0 + 1}^N P_{\eta_1}(I_j \neq \hat{i}_j) =  \sum_{j=1}^{N-m_0-1} (\ell -1) \pi_j. \label{eta1_rand_error}
\end{align}
For brevity sake, let us denote $\bar{N} = N - m_0 - 1$ and we start the counting process at $m_0 + 1$.
Now, given $\tau_N$, the minimum value that we can get for the R.H.S. in \eqref{Rand_eta_1_E} is when all the rewards from $m_0 +1$ until $\tau_N$ are observed instantaneously and after that no reward is observed until we hit the horizon $\bar{N}$. Likewise, an approximate maximum value of R.H.S. in \eqref{Rand_eta_1_E} is achieved when the rewards for $(m_0+1)$\ts{th} through $(\bar{N}-\tau_N)$\ts{th} arms are not observed until time $(\bar{N} - \tau_N)$, and we observe $\tau_n$ many, from time $\bar{N} - \tau_N+1$ to $\bar{N}$ respectively. Therefore,
\begin{align*}
\min_{\mathcal{A}_N} {\E_{\mathcal{A}_N}(\sum_{j = {m_0 + 1}}^N \text{Rand}_j(\eta_2))}  &= (\ell-1)[\sum_{t=1}^{\tau_N-1} \pi_t + (\bar{N}-\tau_N)\pi_{\tau_N}],  
\end{align*}
\begin{align*}
\max_{\mathcal{A}_N} {\E_{\mathcal{A}_N}(\sum_{j = {m_0 + 1}}^N \text{Rand}_j(\eta_2))} &=  (\ell-1)[(\bar{N}-\tau_N)\pi_1 + \sum_{t=2}^{\tau_N} \pi_t]. 
\end{align*}
For the sake of illustration, assume that we observe a fraction of $\bar{N}$ by time $N$, that is,  $\tau_N = \alpha \bar{N}$, for some $\alpha \in (0,1)$. Then we have that, 
\begin{align}
\min {\E(\sum_{j = {m_0 + 1}}^N \text{Rand}_j(\eta_2))}  &= (\ell -1)[\sum_{t=1}^{\tau_N-1} \pi_t + (1-\alpha)\bar{N}\pi_{\tau_N}], \label{special_case_illustration_min}\\
\max {\E(\sum_{j = {m_0 + 1}}^N \text{Rand}_j(\eta_2))}  &= (\ell - 1)[(1-\alpha)\bar{N} \pi_1 + \sum_{t = 2}^{\tau_N} \pi_t]. \label{special_case_illustration_max}
\end{align}
Notice that the terms $(1-\alpha)\bar{N}\pi_1$ and $(1-\alpha)\bar{N}\pi_{\tau_N}$ in the RHS in \eqref{special_case_illustration_min} and \eqref{special_case_illustration_max} can be fairly large and grow as $N$ increases for all reasonably fast choices of $\{\pi_n\}$ such as, $n^{-1/4}, \log^{-1}{n}$. From \eqref{eta1_rand_error}, \eqref{special_case_illustration_min} and \eqref{special_case_illustration_max}, we also get that,
\begin{align}
\sum_{t = \tau_N + 1}^{\bar{N}} (\ell - 1) (\pi_{\tau_N} - \pi_t) \leq \E(\sum_{j= m_0 + 1}^N \text{Rand}_j(\eta_2) - \text{Rand}_j(\eta_1)) & \leq \sum_{t= \tau_N + 1}^{\bar{N}} (\ell -1) (\pi_1 - \pi_t), \label{rand_error_illustration}
\end{align}
where it can be seen that $\sum_{t = \tau_N + 1}^{\bar{N}} (\ell-1) (\pi_{\tau_N} - \pi_t) > 0$ for any $N$ and $\sum_{t= \tau_N + 1}^{\bar{N}}(\ell -1) (\pi_1 - \pi_t) \rightarrow \infty$ as $N \rightarrow \infty$. Therefore, we see that using strategy $\eta_1$, which updates $\pi_n$ at every time step irrespective of having observed a reward or not, gives a lower randomization error on average as compared to strategy $\eta_2$. For example, if we choose $ \{\pi_n\} = n^{-1/4}$, $\alpha = 0.25$ (one-fourth of rewards observed) and $m_0 = 30$ (initialization phase), time horizon $N = 10000$, then  we get that the average randomization error difference approximately satisfies, 
\begin{align*}
0.02 (\ell-1) \leq  \dfrac{\E(\sum_{j= m_0 + 1}^N \text{Rand}_j(\eta_2) - \text{Rand}_j(\eta_1))}{N-(m_0+1)} \leq 0.23 (\ell - 1), 
\end{align*}
for $N = 10000, m_0 = 30$.
In situations where mean reward functions are not complex, the randomization error can be quite large and potentially dominate over the estimation error. Thus, using strategy $\eta_1$ may reduce the cumulative regret substantially as compared to strategy $\eta_2$ in such situations.

\textbf{Illustration 2.} On the other hand, there are situations in which it may be better to use strategy $\eta_2$ with $\pi_{\tau_n}$ (updating only when a new reward is observed) as the exploration probability sequence. For example, scenarios where the best arms frequently alternate over regions of covariate space in terms of maximizing reward and it is hard to tell a clear winner with less information available due to presence of large delays. Another such situation is when an arm which is inferior in majority of the covariate space, but is superior with a substantial reward gain in a very small area of the domain and it might be the case that under large delays these under-represented regions remain unexplored. As described, let us assume that the underlying mean reward functions are somewhat complex. In such settings, we would need substantial exploration for a long period of time, specially in the presence of large delays. Here, in the hope of reducing the randomization error, we could employ strategy $\eta_1$ and use an exploration probability sequence $\pi_n$, which meets the conditions in \cite{qian2016randomized} that ensure optimal convergence rates in no-delay situations. However, this could be disadvantageous in such complex settings. This is because using $\eta_1$ may lead to insufficient exploration for the inferior arms. We consider the event that a seemingly inferior arm is chosen at time $t$, that is, $I(I_t \neq \hat{i}_t)$. Then to ensure enough exploration, we need that this event occurs with a positive probability that is not too small, specially in such complex settings as discussed above.  From \cite{yang2002randomized} and \cite{qian2016kernel} for no delay settings, we know that it is necessary to have $\sum_{t=1}^\infty \pi_t = \infty$ for the algorithm to perform optimally both asymptotically and in finite time. We also know that $\tau_N \overset{\text{a.s.}}{\rightarrow} \infty$ as $N \rightarrow \infty$. Therefore, using both these facts, the sum of probability of the event $\{I(I_t \neq \hat{i}_t), t \geq 1\}$, over the time points where rewards are observed for strategy $\eta_2$ goes to $\infty$,
   \begin{align*}
    \sum_{t=1}^{\tau_N} P_{\eta_2}(I_t \neq \hat{i}_t)&= \sum_{t=1}^{\tau_N} (\ell-1)\pi_t \overset{\text{a.s.}}{\rightarrow} \infty, \ \text{as} \ N \rightarrow \infty,
   \end{align*}
   whereas, for $\eta_1$, this sum could actually be summable for large delay situations. Let $\sigma_t = \min\{\bar{n}: \sum_{j=m_0+1}^{\bar{n}} I(t_j \leq N) \geq t \}$. Let us assume that the observed rewards are equally spaced, that is, $\sigma_t = tN/\tau_N$, assuming w.l.o.g that $N/\tau_N$ is an integer. Then, we have, 
   \begin{align*}
\sum_{t=1}^{\tau_N} P_{\eta_1}(I_t \neq \hat{i}_t)&= \sum_{t=1}^{\tau_N} (\ell-1)\pi_{\sigma_t}
=  \sum_{t=1}^{\tau_N} (\ell -1) \pi_{tN/\tau_N}.
   \end{align*}
Now, it can be shown that this series is summable for various choices of $\{\pi_n\}$. For example, let $\{\pi_n\} = n^{-1/2}$, then for strategy $\eta_1$, 
\begin{align}
\sum_{t=1}^{\tau_N} P_{\eta_1}(I_t \neq \hat{i}_t) = \sum_{t=1}^{\tau_N} (\ell -1) \pi_{tN/\tau_N} &= \sum_{t=1}^{\tau_N} \left(\dfrac{tN}{\tau_N}\right)^{-1/2}\nonumber\\
& = \left(\dfrac{N}{\tau_N}\right)^{-1/2} \sum_{t=1}^{\tau_N} t^{-1/2} = O \left(\dfrac{\tau_N}{\sqrt{N}}\right). \label{summable_eta1}
\end{align}
 If the number of observed rewards are small, say $\tau_N = O(\sqrt{N})$, then the series in \eqref{summable_eta1} is summable. Therefore by Borel-Cantelli Lemma, the event $\{I_t \neq \hat{i}_t\}$ occurs only finitely many times out of all instances where the rewards are observed. This will lead to insufficient exploration and may incur large regret in areas that remain unexplored, specially in the more complex settings.
  Therefore, if we employ strategy $\eta_1$ in such settings with large delays, we may end up over-exploiting certain arms and as a result obtain insufficient number of rewards pertaining to a seemingly inferior arm, which may possibly yield higher rewards in some unexplored regions in future. This would adversely affect the performance of the algorithm and lead to high cumulative regret. Therefore, in scenarios like this, it would be advantageous to use strategy $\eta_2$. 

  Note that, $\eta_2$ can be thought of as a black-box procedure, in the sense that it only updates at the time points where at least one reward is observed as if there were no delays. From the above discussion, we can conclude that taking the black-box approach might not necessarily be the best in handling delayed rewards in a contextual bandit problem.
 In the next section, we demonstrate these ideas using four different simulation setups and illustrate the performance of strategies $\eta_1$ and $\eta_2$ in the four setups respectively. These insights also suggest the need for studying adaptive strategies for updating these parameters in a local fashion, a promising direction to explore in future.

\section{Simulations}\label{sec:simulation}
 We conduct a simulation study to compare the per-round average regret for strategies $\eta_1$ and $\eta_2$ under different delayed rewards scenarios. The per-round regret for strategy $\eta$ is given by,
\begin{align*}
r_n(\eta) = \frac{1}{n} \sum_{j=1}^n (f^*(X_j) - f_{I_j}(X_j)).
\end{align*} 
Note that, if $\frac{1}{n} \sum_{j=1}^n f^*(X_j)$ is eventually bounded above and away from 0 with probability 1, then $R_n(\eta) \rightarrow 1$ a.s. is equivalent to $r_n(\eta) \rightarrow 0$ a.s. The data has been generated from the following mean reward functions. We assume $d = 2, \ell = 2 \ (\text{or}\ 3)$ and $x \in [0,1]^2$ and the simulations run  until time $N = 8000$ with first 30 rounds of initialization. For each of the setups, we define one-dimensional functions $g_1$ and $g_2$, and then for $x_1, x_2 \in [0,1]$, we define, $f_1(x_1,x_2) = g_1(x_1)*x_2$ and $f_2(x_1,x_2) = g_2(x_1)*x_2$. \\
\textbf{Setup 1: }  In this setup, we consider two well-separated sinusoidal functions, where one is a shifted above version of the other.
\begin{align*}
g_1(x) = (-2\sin(20\pi x) + 3), \ g_2(x) = (-2\sin(20 \pi x) + 2); \ \ x \in [0,1].
\end{align*}
\textbf{Setup 2: } Consider three piecewise-linear functions that are well-separated but over different regions in the covariate space. Then, $f_1(x_1,x_2) = x_2 g_1(x_1),  
f_2(x_1,x_2) = x_2 g_2(x_1),
f_3(x_1,x_2) = x_2 g_3(x_1)$.
\begin{align*}
  g_1(x) = \begin{cases}
  1 & 0 \leq x < 0.5\\
  -10x + 6 & 0.5 \leq x < 0.6\\
  0 & x \geq 0.6
  \end{cases},& \ 
  g_2(x)  = \begin{cases}
  0 & 0 \leq x < 0.5\\
  10x - 5 & 0.5 \leq x < 0.6\\
  1 & x \geq 0.6
  \end{cases},
\end{align*}
\begin{align*}
  g_3(x) = \begin{cases}
  0 & 0 \leq x < 0.3\\
  20x - 6 & 0.3 \leq x < 0.4\\
  2 & 0.4 \leq x < 0.6\\
  -20x + 14 & 0.6 \leq x < 0.7\\
  0 & x \geq 0.7.
  \end{cases}
\end{align*}
\textbf{Setup 3: } Consider two sinusoidal functions such that the best arm alternates rapidly as the functions oscillate.
\begin{align*}
g_1(x) = 2\cos(5 \pi x) + 2, \ \ g_2(x) = -2\sin(5\pi x) + 2, \ \text{for} \ x \in [0,1].
\end{align*}
\textbf{Setup 4: } Consider a setup where one arm dominates over majority of the covariate space, except for a small area where it incurs a considerably high regret.
\begin{align*}
g_1(x) = 1, \ \text{for all}\ x \in [0,1];  \ \ 
g_2(x) = 
\begin{cases}
0 & 0 \leq x < 0.5, 0.505 \leq x \leq 1\\
100000x - 50000 & 0.5 \leq x < 0.502\\
200 & 0.502 \leq x < 0.503\\
-100000*x + 50500 & 0.503 \leq x < 0.505.\\
\end{cases}
\end{align*}
We look at both the setups 1) $d = 1$, when $f_1(x) = g_1(x)$ and $f_2(x) = g_2(x)$ and 2) $d=2$, when $f_1(x_1, x_2) = g_1(x_1)*x_2$ and $f_2(x_1,x_2) = g_2(x_1)*x_2$, but only the results for 2) are displayed in Figure \ref{fig: Simulation_result_eta2_better}. The one dimensional functions $g_i$ for each of these setups are plotted in Figure \ref{fig: Simulation_result_eta2_better}.

\subsection{The simulation process and results}
We simulate the data from the above mentioned true mean reward functions as:
$
Y_{i,j} = f_i(X_j) + 0.5 \epsilon_{j}, \ i \in \{1,2,3\}, j \in \mathbb{N},
$
where $\epsilon_{j} \overset{\text{i.i.d.}}{\sim} N(0,1)$. We use Nadaraya-Watson estimator with Gaussian kernel to estimate the mean reward functions. We run both strategies $\eta_1$ and $\eta_2$ as in Section \ref{algorithm}. We consider the following choices of hyper-parameter sequences but in our discussion, we only illustrate a few combinations to make a comparison for the sake of brevity. 
 $$\pi_n = \{n^{-1/4}, \log^{-1}{n}, \log^{-2}{n}; n \geq 1\}\  \text{and} \  h_n =\{ n^{-1/4}, n^{-1/6}, \log^{-1}{n}; n \geq 1\}.$$

Both strategies $\eta_1$ and $\eta_2$ are run for 60 independent replications (time horizon $N = 8000$). Then the regret is averaged for each time point over the replications, to give a more accurate estimate of the total regret accumulated up to a given time horizon. We create delay scenarios governing when a reward will be observed. We consider the following delay scenarios in the increased order of severity of delays,\\
 \textbf{No delay}; Every reward is observed instantaneously. \\
 \textbf{Delay 1:} Geometric delay with probability of success (observing the reward) $p = 0.3$.\\
  \textbf{Delay 2:} Every 5\ts{th} reward is not observed by time $N$ and other rewards are obtained with a geometric ($p = 0.3$) delay.\\
    \textbf{Delay 3:} Each case has probability 0.7 to delay and the delay is half-normal with scale parameter, $\sigma = 1500$.\\
    \begin{figure}[H]
 \centering
   \includegraphics[scale=0.3]{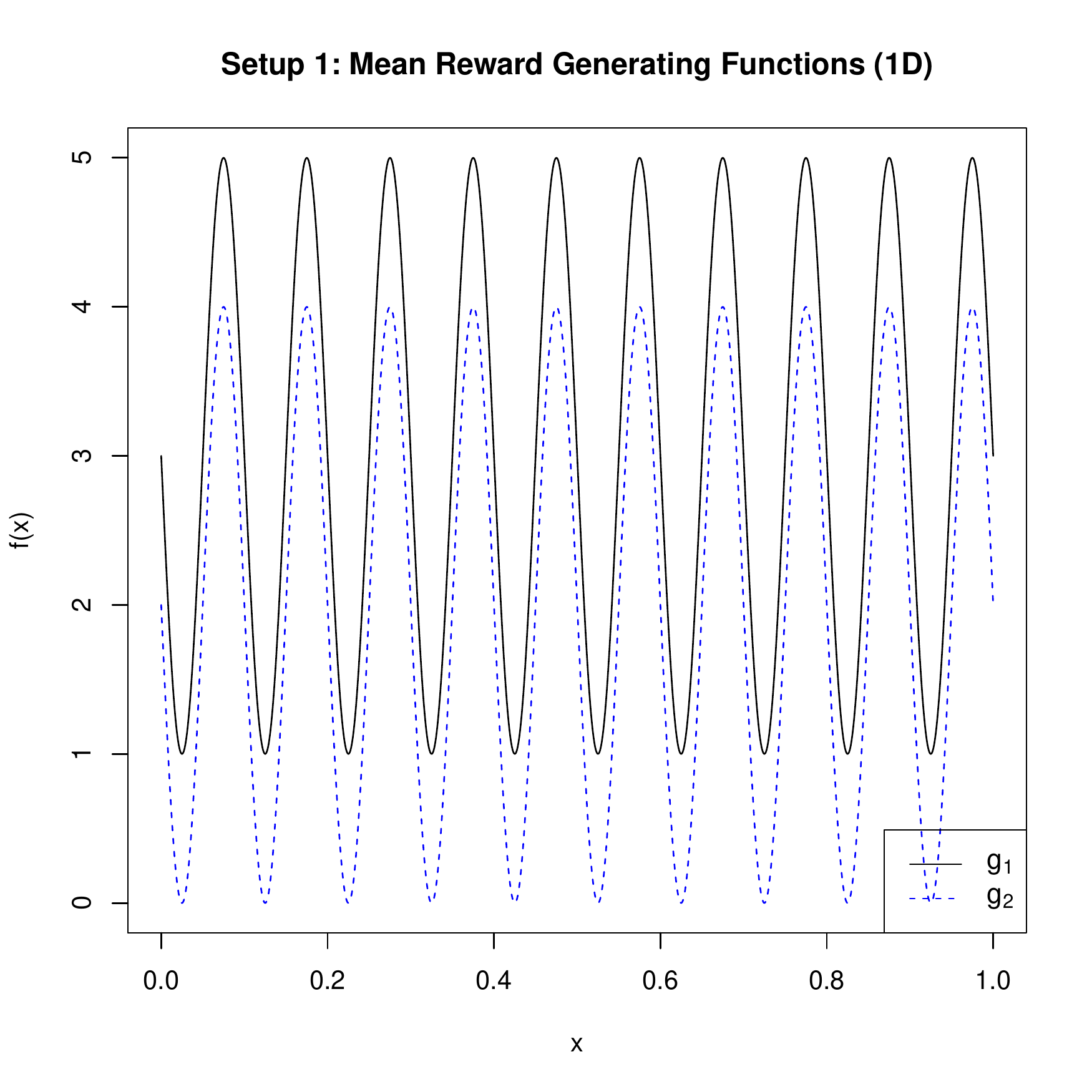}
   \includegraphics[scale=0.3]{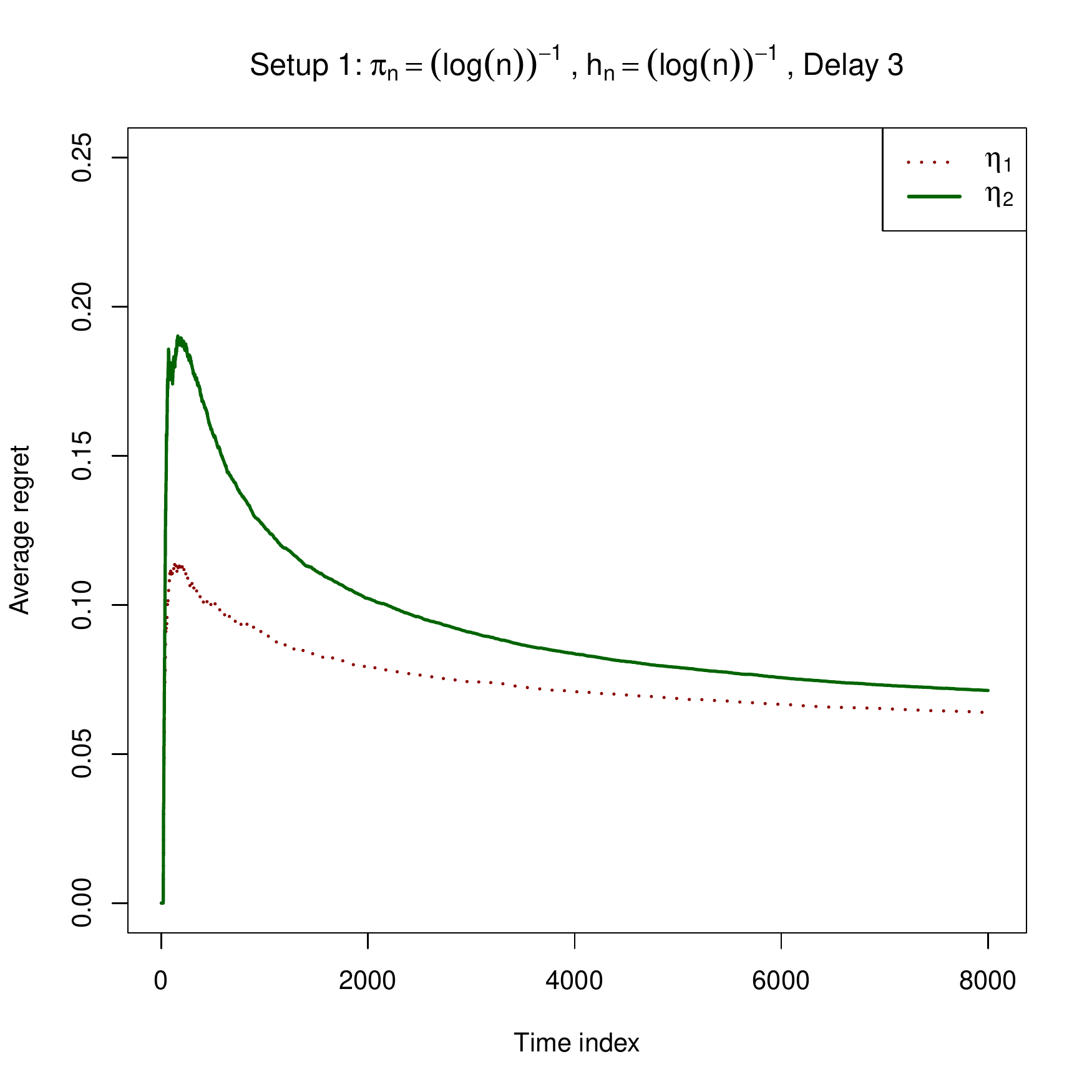}
   \includegraphics[scale=0.3]{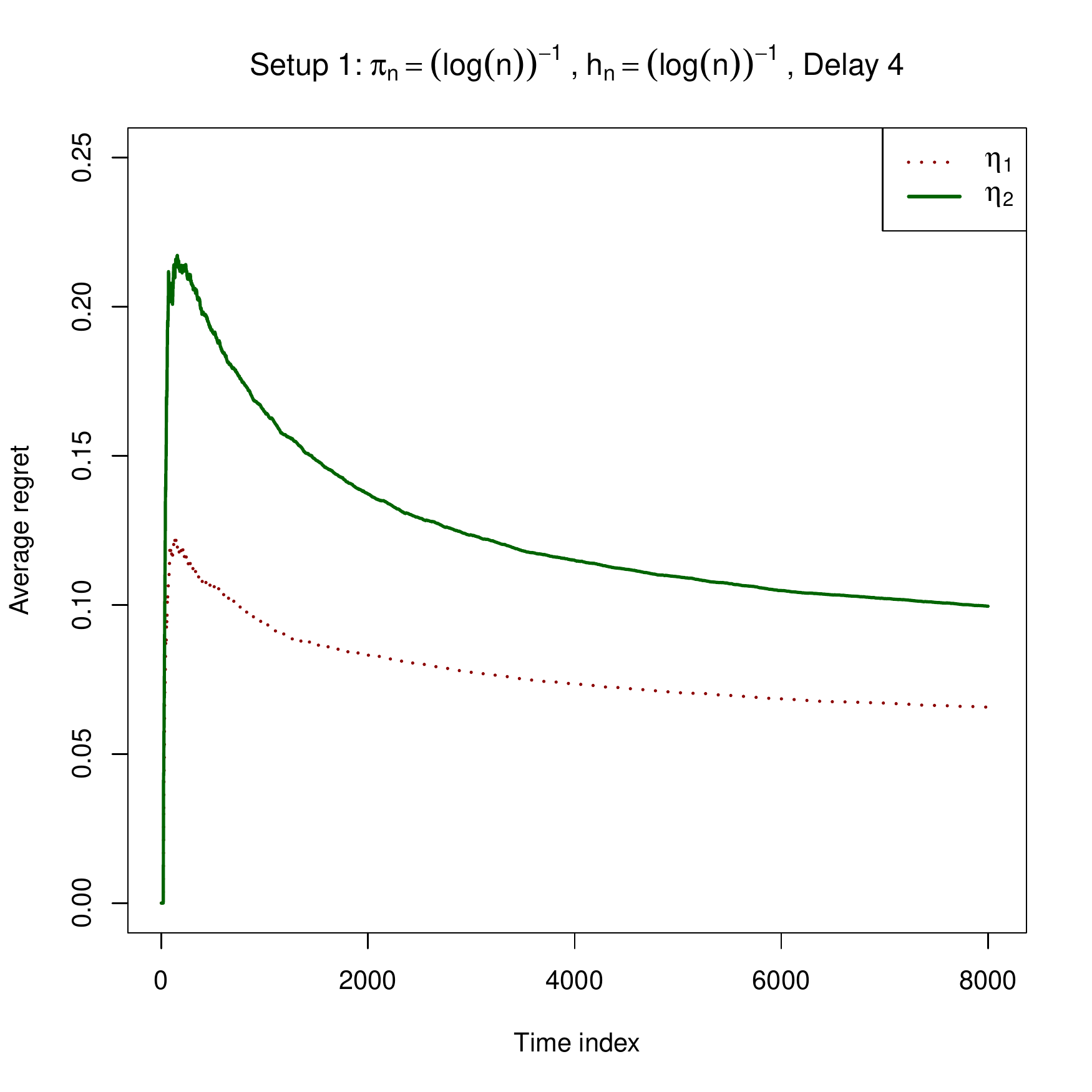}\\
   \includegraphics[scale=0.3]{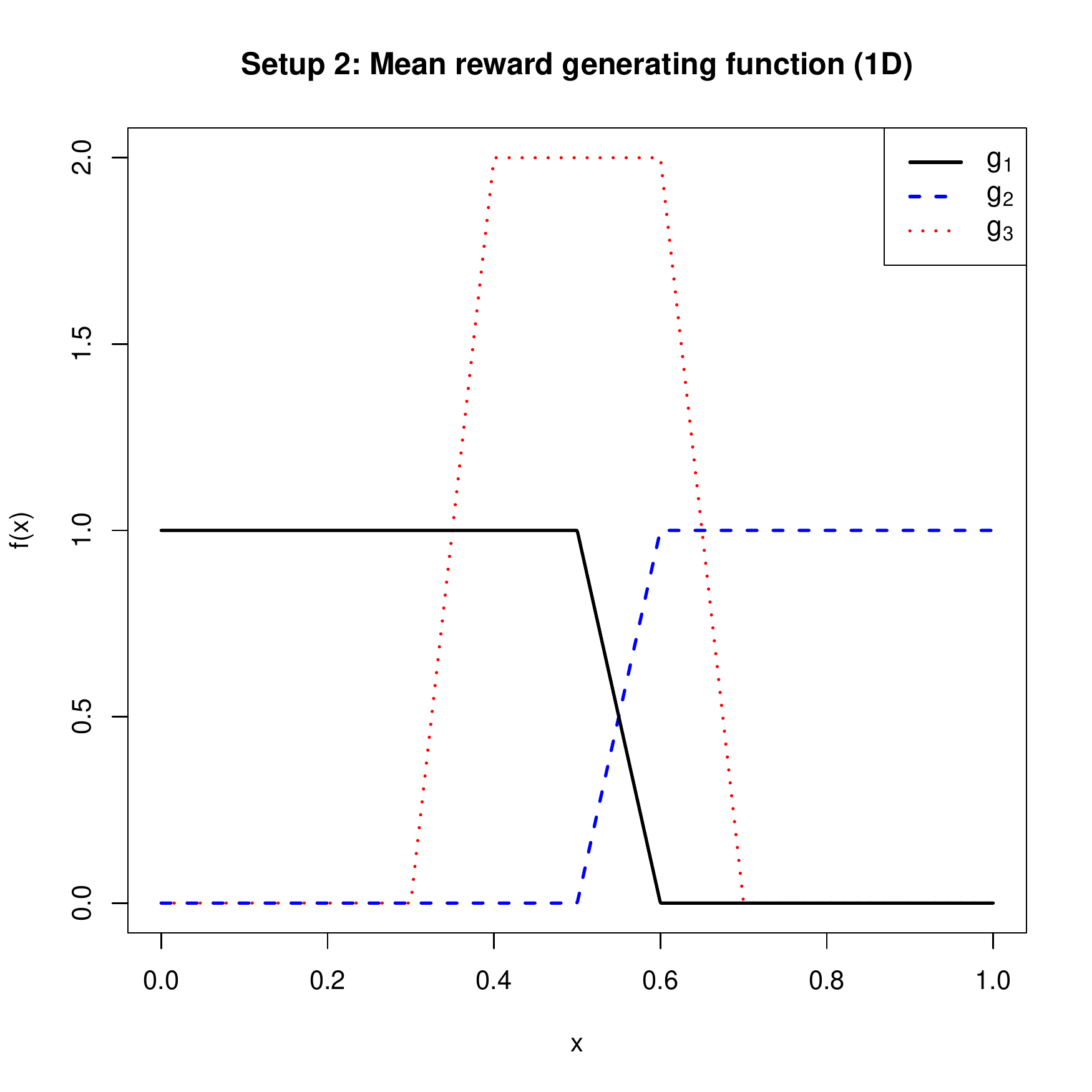}
   \includegraphics[scale=0.3]{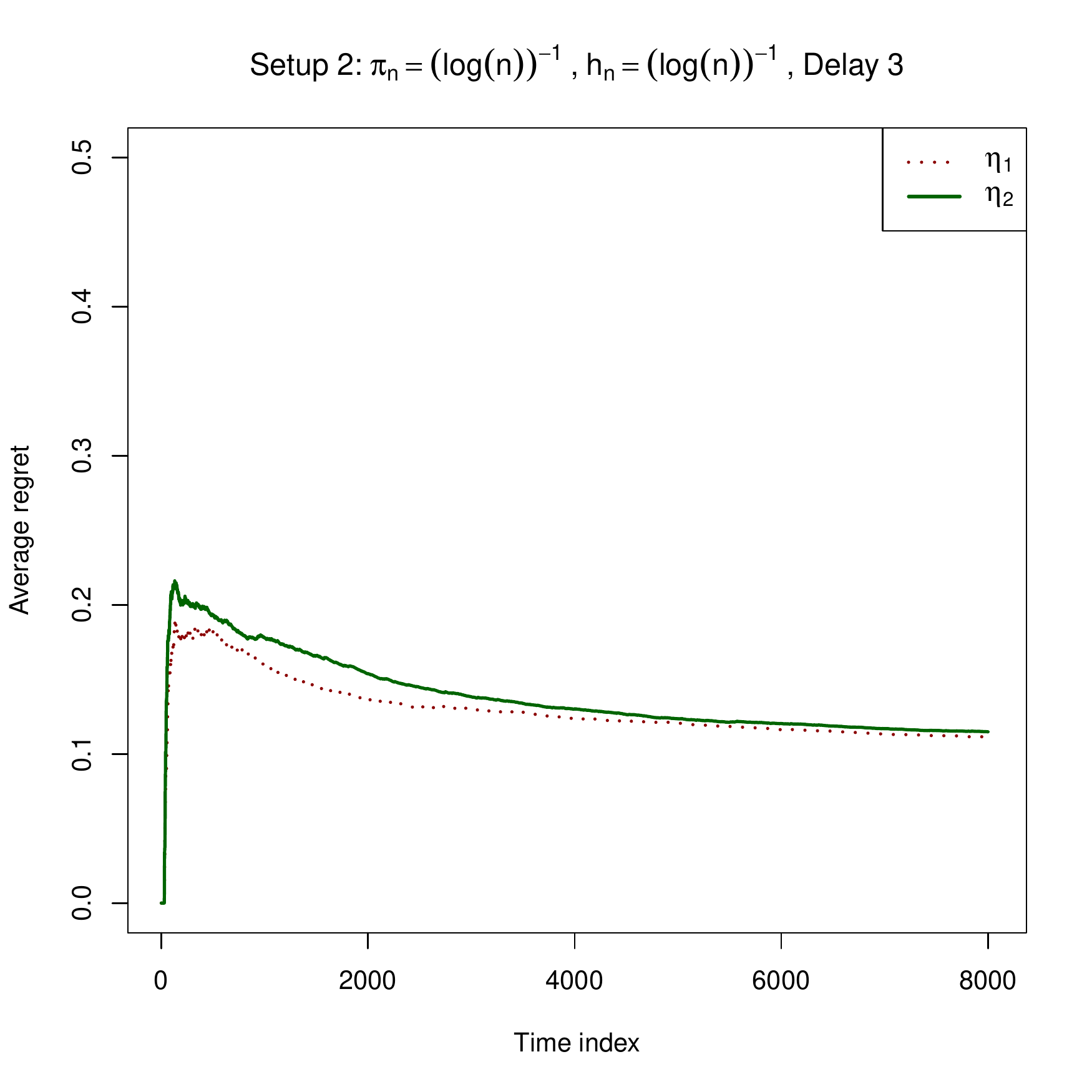}
   \includegraphics[scale=0.3]{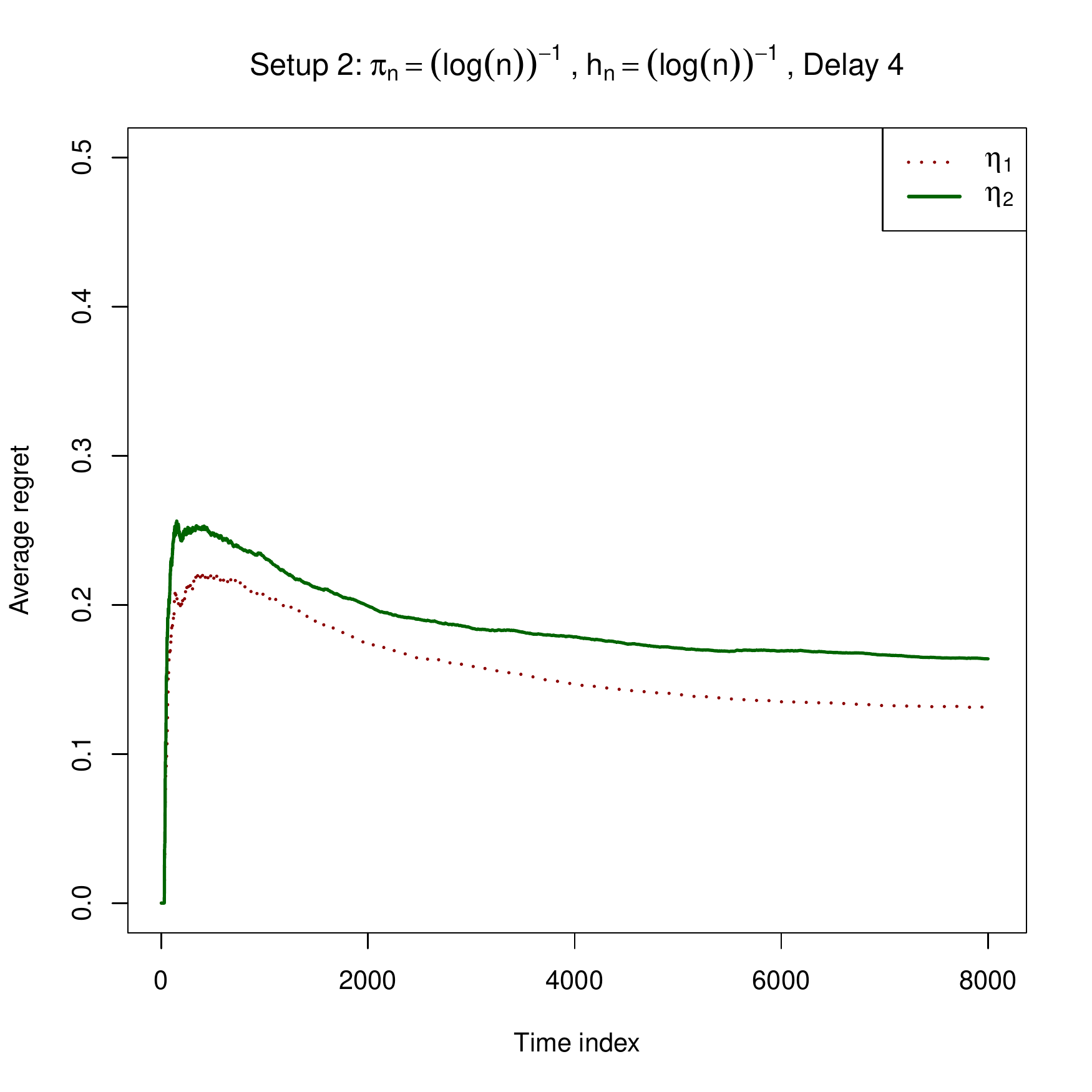}\\
\includegraphics[scale=0.3]{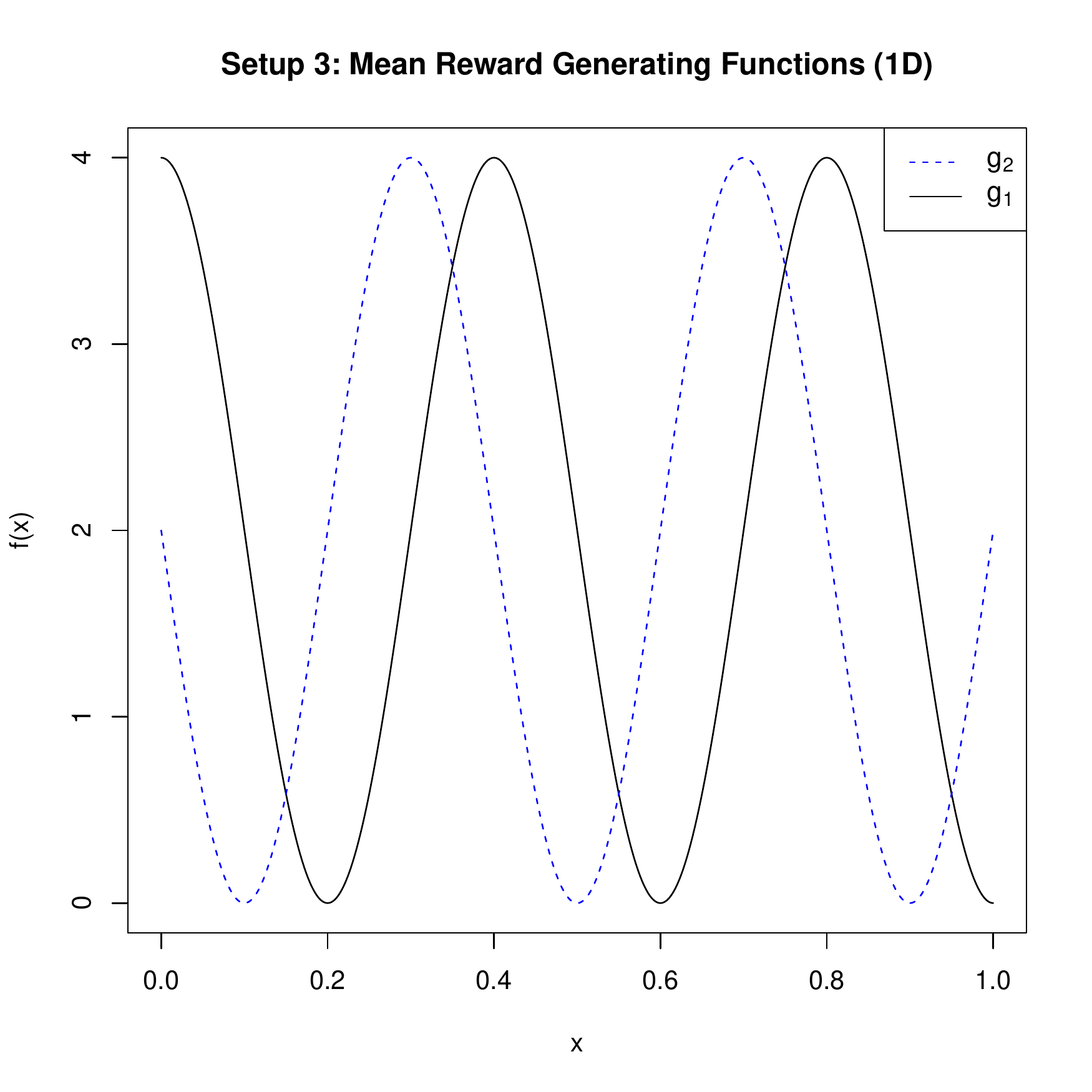}
   \includegraphics[scale=0.3]{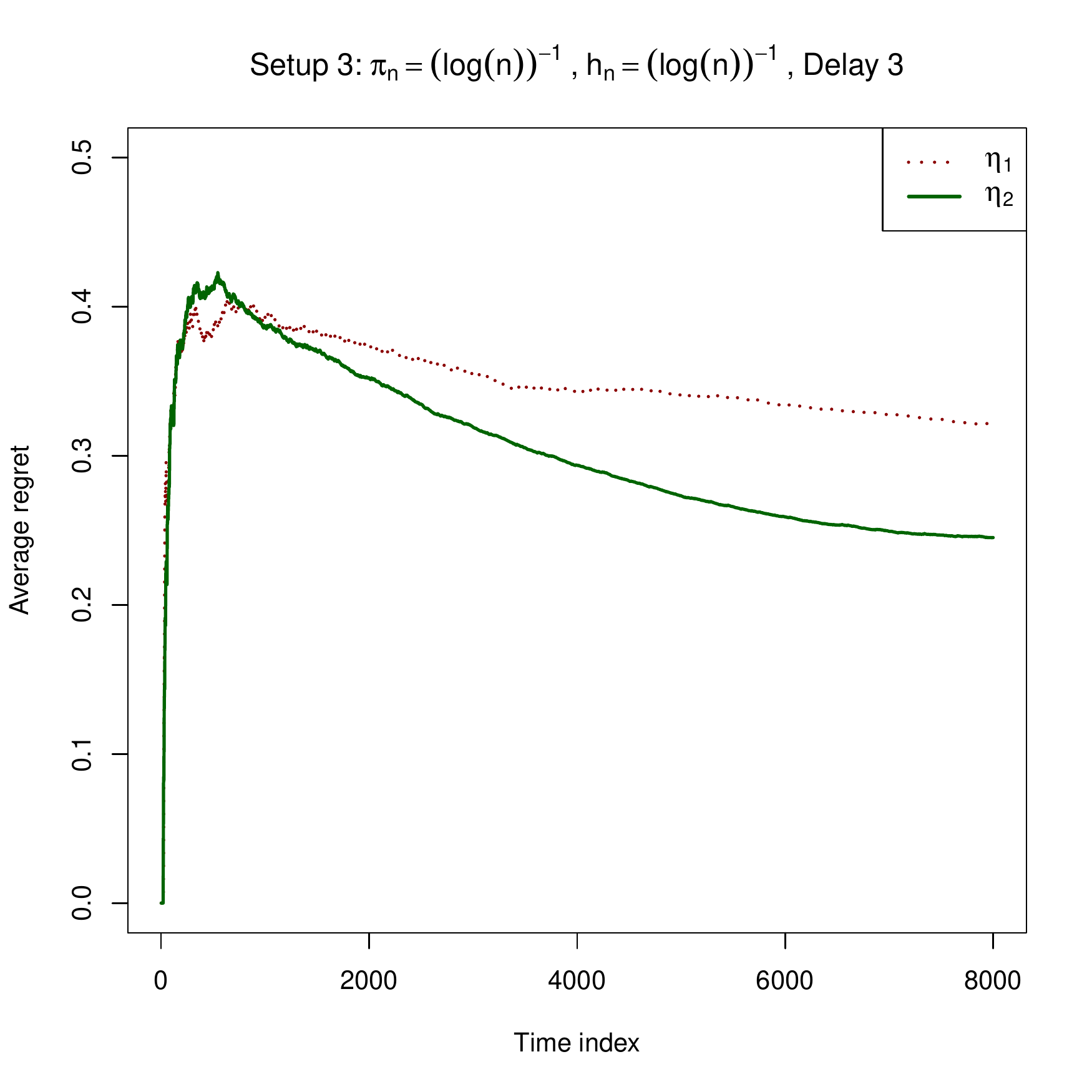}
    \includegraphics[scale=0.3]{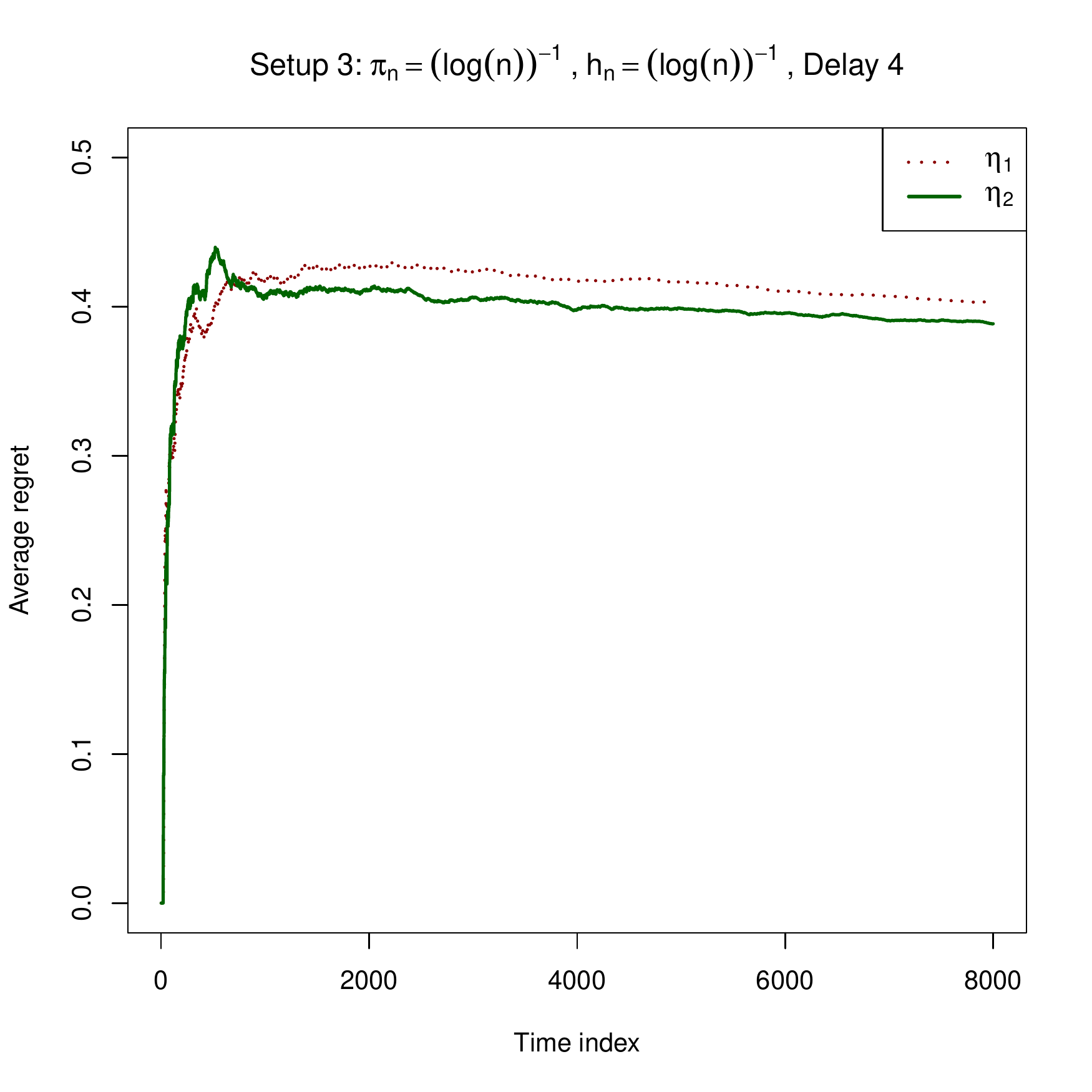}\\
   \includegraphics[scale=0.3]{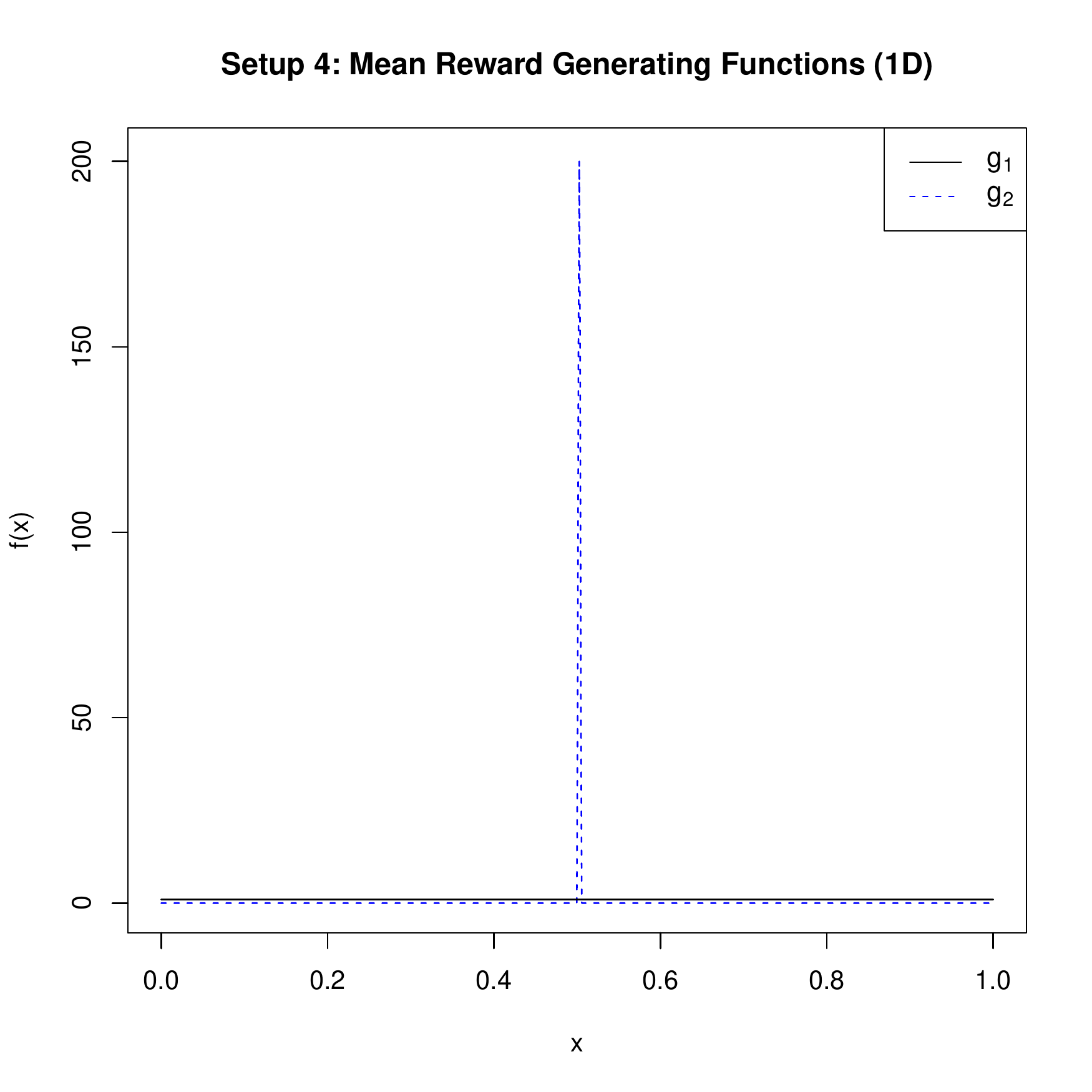}
   \includegraphics[scale=0.3]{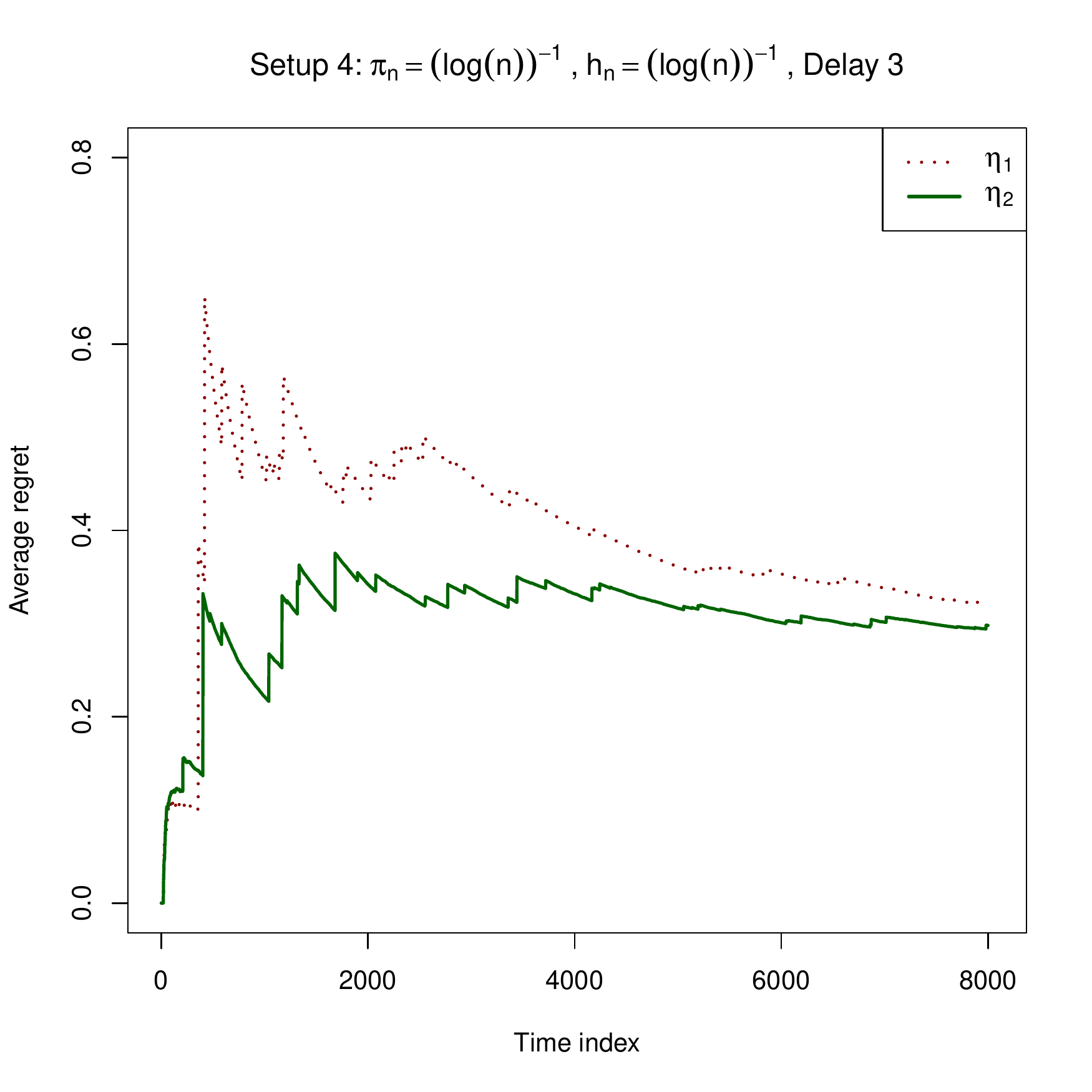}
   \includegraphics[scale=0.3]{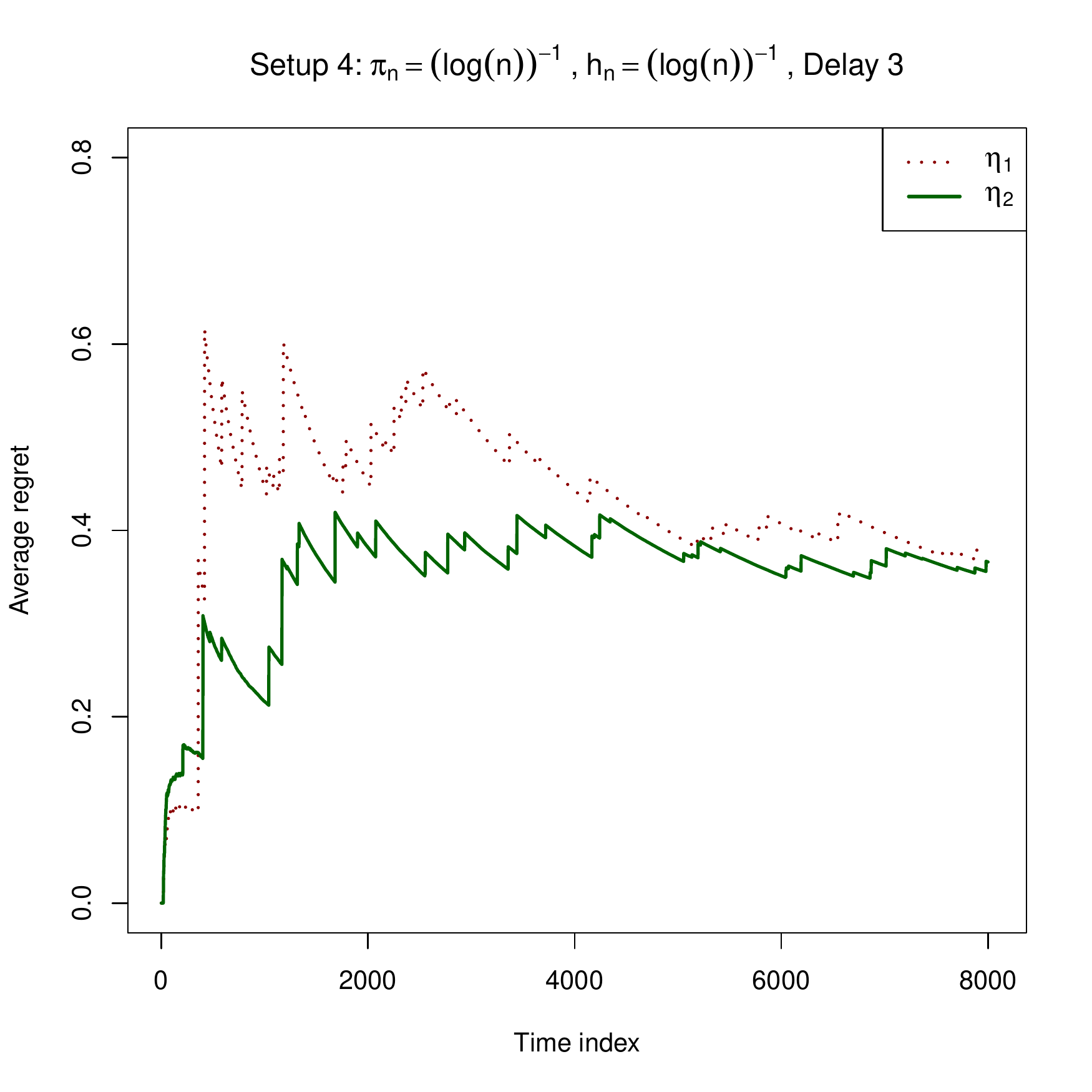}
  \caption{Strategy $\eta_1$ has lower cumulative average regret in setups 1 and 2 (first two rows) and strategy $\eta_2$ has lower cumulative average regret in setups 3 and 4 (rows third and fourth).}
 \label{fig: Simulation_result_eta2_better}
 \end{figure}
   \textbf{Delay 4:} In this case we increase the number of non-observed rewards. Divide the data into four equal consecutive parts (quarters), such that, in part 1, we only observe every 10\ts{th} (with Geom(0.3) delay) observation by time $N$ and not observe the remaining; in part 2, we only observe every 15\ts{th} observation; in part 3, only observe every 20\ts{th} observation; in part 4, only observe every 25\ts{th} observation.

In our simulations, we note that the difference in the cumulative regret is most discernible in the more extreme delay situations, that is, delay 3 and delay 4 in our setup. Therefore, we only illustrate the results on those two delay scenarios. The plots in Figure \ref{fig: Simulation_result_eta2_better} can be used to compare performance of strategy $\eta_1$ and $\eta_2$. On the $y$-axis is the average regret plotted against time on the $x$-axis. The rows in the figure correspond to the simulation setups and columns 2 and 3 correspond to Delay 3 and Delay 4 respectively. For illustration, we only show the plots corresponding to one choice of hyper-parameter sequences, $\{h_n\} = (\log{n})^{-1}$ and $\{\pi_n\} = (\log{n})^{-1}$, however results from other combinations show similar trends and are included in Appendix \ref{Simulation_extra_Appendix}.

Note that in setups 1 and 2, $\eta_1$ performs better than $\eta_2$ in terms of reducing the overall average regret. Both these setups consist of mean reward functions that are well-separated and clear winners in terms of reward gain in substantial portions of the covariate space. Therefore, it is likely that one can get good estimation even in large delay setting when only small amount of observed data is available for estimation. Thus, in these settings, controlling for the randomization error is crucial, which is better achieved by using $\pi_n$ instead of $\pi_{\tau_n}$, as illustrated in Section \ref{compareSPlvsThis}. On the contrary, in Setup 3 and 4, we notice that strategy $\eta_2$ performs better than $\eta_1$ in terms of lower average regret. This can be attributed to the fact that under large delay settings, one may require more exploration for a longer period of time to get good estimates for the complex mean reward functions. Therefore, using $\pi_{\tau_n}$ instead of $\pi_n$ helps improve the mean reward function estimation by exploring for a longer time, leading to a greater chance of exploring the more localized high regret incurring regions of the covariate space.  Another interesting observation is that for setups 1 and 2, the average regret curves for strategies $\eta_1$ and $\eta_2$ are closer with Delay 3 and much separated with Delay 4. Whereas, in setups 3 and 4, an opposite trend is seen, where the  difference in the average regret curves for $\eta_1$ and $\eta_2$ is more pronounced with Delay 3 as compared to Delay 4. A possible reason for this could be that the mean reward functions for setups 1 and 2 are easily distinguishable even with as few observations as with Delay 4, thus fast and continuous exploitation helps reduce the regret. However, the mean reward functions in setups 3 and 4 are harder to distinguish and perhaps with so few observations as in Delay 4, it is hard to do a good job in estimation even while exploring more using $\pi_{\tau_n}$.

\section{Conclusion}\label{sec:conclusion}

In this work, we present a case on the importance of carefully choosing a contextual bandit strategy based on the expected delay situation. Delays are assumed to be independent, but unbounded and could potentially be infinite as long as we expect to see a minimum number of observed rewards in finite time, and have some knowledge of a lower bound to the expected number of observations. We propose two $\epsilon$-greedy like strategies, adopting a nonparametric approach to modeling the mean reward regression functions. In both strategies, the binwidth sequence $\{h_n\}$ is updated only when new rewards are observed, but the difference lies in updating the exploration probability $\{\pi_n\}$. In one strategy, $\{\pi_n\}$ is only updated when a new reward is observed (like a black-box procedure), while in the second strategy, $\{\pi_n\}$ is updated at every time point irrespective of having observed a reward or not. We establish strong consistency for both the strategies and compare the necessary condition required to achieve consistency with the analogous condition that appeared in \cite{arya2020randomized}. Then, using some theoretical illustrations and simulation examples, we show that both these strategies may be advantageous in different settings depending on the underlying data generating scenarios and the severity of the delays in observing rewards. Therefore, based on these empirical results, we recommend that the choice of hyper-parameters $\{h_n\}$ and $\{\pi_n\}$ should depend on the context of the problem, delay scenario, and some broad knowledge of the data generating process. An immediate future direction based on these results is to devise an adaptive strategy which decides whether to update the hyperparameter sequences or not in a more localized way. Conducting a finite-time regret analysis to theoretically prove the insights obtained would help better understand the problem and we hope to address it in future work. It is important to note that optimal arm identifiability and regret minimization may not agree with each other in all problems. It is possible that two different algorithms achieve about the same cumulative regret, despite of one being poor at identifying the best arms as compared to the other, thus is a different problem altogether and requires a different set of tools to address the problem. In our knowledge, best arm identification in delayed rewards for contextual bandits has not been studied so far and would be an interesting future work to consider.

\bibliographystyle{apalike}
\bibliography{STAT_DelayedBandits_SakshiArya.bib}

\appendix
\section{Appendix} 
Here, we present supporting material that includes detailed proofs of Theorems 1 and 3 in the main paper and additional figures for more simulation results. 
\subsection{Proof of consistency of the proposed strategy}\label{Proof_consistency_eta2}
While strong consistency for strategy $\eta_1$ follows exactly from the proof of strong consistency in \cite{arya2020randomized}, some changes are required for proving the same for strategy $\eta_2$.
\begin{proof}[Proof of Theorem 1 for strategy $\eta_2$]
Since the ratio $R_n(\eta_2)$ is always upper bounded by 1, we only need to work on the lower bound direction. Note that,
\begin{align}
R_n(\eta_2) &= \dfrac{\sum_{j=1}^n f_{\hat{i}_j}(X_j)}{\sum_{j=1}^n f^*(X_j)} + \dfrac{\sum_{j=1}^n (f_{I_j}(X_j) - f_{\hat{i}_j}(X_j))}{\sum_{j=1}^n f^*(X_j)}\nonumber \\
&\geq \dfrac{\sum_{j=1}^n f_{\hat{i}_j}(X_j)}{\sum_{j=1}^n f^*(X_j)} - \dfrac{\frac{1}{n} \sum_{j=1}^n A I_{\{I_j \neq \hat{i}_j\}}}{\frac{1}{n} \sum_{j=1}^n f^*(X_j)}, \label{inequality_begin}
\end{align}
where the inequality follows from Assumption 2. Let $U_j = I_{\{I_j \neq \hat{i}_j\}}$. Since $(1/n)\sum_{j=1}^n f^*(X_j)$ converges a.s. to $\E f^*(X) > 0$, the second term on the right hand side in the above inequality converges to zero almost surely if $({1}/{n}) \sum_{j=1}^n U_j \overset{\text{a.s.}}{\rightarrow} 0$.
Note that for $j \geq m_0 +1$, $U_j$'s are independent Bernoulli random variables with success probability $(\ell-1)\pi_{\tau_j}$.
Now consider,
\begin{align*}
\sum_{j=m_0+1}^\infty \text{Var} \left(\dfrac{U_j}{j} \mid \mathcal{A}_j \right) &= \sum_{j=m_0+1}^{\infty} \dfrac{(\ell - 1)\pi_j (1-(\ell-1)\pi_j)}{j^2}
& \leq \sum_{j=m_0+1}^{\infty} \dfrac{(\ell - 1)\pi_1 (1-(\ell-1)\pi_1)}{j^2}.
\end{align*}
As the right hand side is a non-random quantity, we get,
\begin{align*}
\sum_{j=m_0+1}^\infty \text{Var} \left(\dfrac{U_j}{j} \right) \leq \sum_{j=m_0+1}^{\infty} \dfrac{(\ell - 1)\pi_1 (1-(\ell-1)\pi_1)}{j^2} < \infty.
\end{align*}
Therefore, we have that $\sum_{m_0+1}^\infty ((U_j - (\ell -1)\pi_j)/j)$ converges almost surely. It then follows by Kronecker's lemma that,
\begin{align*}
\dfrac{1}{n} \sum_{j=1}^n (U_j - (\ell-1)\pi_j) \overset{\text{a.s.}}{\rightarrow} 0.
\end{align*}
We know that $\tau_j \overset{\text{a.s.}}{\rightarrow} \infty$ as $j \rightarrow \infty$ using Assumption 3 as shown in the proof of Theorem 2 of the paper. Hence, $\pi_{\tau_j} \rightarrow 0$ almost surely, as $j \rightarrow \infty$ (the speed depending on the delay times). Thus, we will have ${1}/{n} \sum_{j=1}^n (\ell-1)\pi_j \rightarrow 0$ since $\pi_j\rightarrow 0$ a.s., as $j\rightarrow \infty$. Hence, ${1}/{n} \sum_{j=1}^n U_j\rightarrow 0$ a.s., as $n\rightarrow \infty$.

To show that $R_n(\delta_\pi) \overset{\text{a.s.}}{\rightarrow} 1$, it remains to show that
\begin{align*}
 \dfrac{\sum_{j=1}^n f_{\hat{i}_j}(X_j)}{\sum_{j=1}^n f^*(X_j)} \overset{\text{a.s.}}{\rightarrow} 1 \ \text{or equivalently,}\ \dfrac{\sum_{j=1}^n (f_{\hat{i}_j}(X_j) - f^*(X_j))}{\sum_{j=1}^n f^*(X_j)} \overset{\text{a.s.}}{\rightarrow} 0.
 \end{align*} 
Given the observed reward timings $\{t_j: t_j \leq n , 1\leq j \leq n\}$, let $\sigma_j = \min\{\bar{n}: \sum_{k=m_0+1}^{\bar{n}} I(t_k \leq N) \geq j \}$, that is, $\sigma_j$ is the time index where the $j$\ts{th} reward is observed.
 By the definition of $\hat{i}_j$, for $j \geq m_0 + 1$, $\hat{f}_{\hat{i}_j,\sigma_j}(X_j) \geq \hat{f}_{i^*(X_j),\sigma_j}(X_j)$ and thus,
\begin{align*}
f_{\hat{i}_j}(X_j) - f^*(X_j) &= f_{\hat{i}_j}(X_j) - \hat{f}_{\hat{i}_j,\sigma_j}(X_j) + \hat{f}_{\hat{i}_j, \sigma_j}(X_j) - \hat{f}_{i^*(X_j),\sigma_j}(X_j)\\
&\quad \quad \quad + \hat{f}_{i^*(X_j),\sigma_j}(X_j) - f^*(X_j)\\
&\geq f_{\hat{i}_j}(X_j) - \hat{f}_{\hat{i}_j, \sigma_j}(X_j) + \hat{f}_{i^*(X_j),\sigma_j}(X_j) - f_{i^*(X_j)}(X_j)\\
&\geq -2 \sup_{1\leq i \leq \ell} ||\hat{f}_{i,\sigma_j} - f_i||_\infty.
\end{align*}
For $1\leq j \leq m_0$, we have $f_{\hat{i}_j}(X_j) - f^*(X_j) \geq -A$.  Based on Assumption 1, $||\hat{f}_{i,\sigma_j} - f_i||_\infty \overset{\text{a.s.}}{\rightarrow} 0$ as $j \rightarrow \infty$ for each $i$, and thus $\sup_{1\leq i \leq \ell} || \hat{f}_{i,\sigma_j} - f_i||_\infty \overset{\text{a.s.}}{\rightarrow} 0$. Then it follows that, for $n > m_0$,
\begin{align*}
&\dfrac{\sum_{j=1}^n (f_{\hat{i}_j}(X_j) - f^*(X_j))}{\sum_{j=1}^n f^*(X_j)} \\
&\quad \quad \geq \dfrac{-Am_0/n - (2/n)\sum_{j=m_0+1}^n \sup_{1\leq i \leq \ell} ||\hat{f}_{i,\sigma_j} - f_i||_\infty}{(1/n)\sum_{j=1}^n f^*(X_j)}.
\end{align*}
The right hand side converges to 0 almost surely and hence the conclusion follows. 
\end{proof}

Next, we recall some important definitions and inequalities that will be used in the proof for Theorem 3.\\
\begin{definition}
Let $h_{\tau_n}$ denote the bandwidth, where $h_{\tau_n} \rightarrow 0$ almost surely as $n \rightarrow \infty$. For each arm $i$, the Nadaraya-Watson estimator of $f_i(x)$ is defined as,
\begin{align}
\hat{f}_{i,n+1}(x) = \dfrac{\sum_{j \in J_{i,n+1}}Y_{i,j} K \left(\frac{x - X_j}{h_{\tau_n}} \right)}{\sum_{j \in J_{i,n+1}}K \left(\frac{x - X_j}{h_{\tau_n}} \right)}. \label{NadarayaWatsonEst}
\end{align}
\end{definition}

\begin{definition} \label{mod_of_continuity}
Let $x_1, x_2 \in [0,1]^d$. Then $w(h;f)$ denotes a modulus of continuity defined by,
$
w(h;f) = \sup\{|f(x_1) - f(x_2)|: |x_{1k} - x_{2k}| \leq h \ \text{for all}\ 1 \leq k \leq d\}.$

\end{definition}

\subsection{\bf An inequality for Bernoulli trials.}\label{A.2}
For $1\leq j \leq n$, let $\tilde{W}_j$ be Bernoulli random variables, which are not necessarily independent. Assume that the conditional probability of success for $\tilde{W}_j$ given the previous observations is lower bounded by $\beta_j$, that is,
\begin{align*}
P(\tilde{W}_j = 1|\tilde{W}_i, 1 \leq i \leq j-1) \geq \beta_j \ \text{a.s.},
\end{align*}
for all $1\leq j\leq n$. 
Appylying the extended Bernstein's inequality as described in \cite{qian2016kernel}, we have
\begin{align}
P\left(\sum_{j=1}^n \tilde{W}_j \leq \left(\sum_{j=1}^n \beta_j\right)/2 \right) \leq \exp\left(- \dfrac{3\sum_{j=1}^n \beta_j}{28} \right).  \label{binomial_inequality}
\end{align}

\subsection{Proof for consistency using Kernel Regression}\label{proofconsistency_kernel_AppB}
Recall, $J_{i,n+1} = \{j: I_j = i, t_j \leq n,1 \leq j \leq n\}$ and $M_{i,n+1}$ is the size of $J_{i,n+1}$, $\mathcal{A} = \{j: t_j \leq n\}$ and $\tau_n = \sum_{j=1}^n I(t_j \leq n)$.\\
\begin{lemma} \label{Lemma_kernel_theorem}
Under the setting of the kernel estimation in Section 5.2 of the paper, let $A \subset [0,1]^d$ be a hypercube with side-width $h$. For a given arm $i$, if Assumptions $4,5,7$ and $8$ are satisfied, then for any $\epsilon >0$,
\begin{align*}
P_{\mathcal{A}_n, X^n}& \left(\sup_A \sum_{j \in J_{i,n+1}} \epsilon_j K \left(\dfrac{x - X_j}{h_{\tau_n}}\right) > \dfrac{\tau_n \epsilon}{1 - 1/\sqrt{2}} \right) \\
& \leq \exp \left(- \dfrac{\tau_n \epsilon^2}{4 c_4^2 v^2} \right) + \exp \left(- \dfrac{\tau_n \epsilon}{4 c_4 c} \right) + \sum_{k=1}^\infty 2^{kd} \exp \left(- \dfrac{2^k \tau_n \epsilon^2}{\lambda^2 v^2} \right)+ \sum_{k=1}^\infty 2^{kd} \exp \left(- \dfrac{2^{k/2} \tau_n \epsilon}{2 \lambda c} \right),
\end{align*} 
where $P_{\mathcal{A}_n, X^n}$ denotes conditional probability given $\mathcal{A}_n = \{j: t_j \leq n\}$ and $X^n = \{X_1, \hdots, X_n\}$. 
\end{lemma}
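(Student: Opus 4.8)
The plan is to establish this as a maximal (chaining) inequality for the conditionally-independent, mean-zero array $\{\epsilon_j K((x-X_j)/h_{\tau_n})\}$. Throughout I would work conditionally on $\mathcal{A}_n$ and $X^n$, so that the kernel weights $K((x-X_j)/h_{\tau_n})$ are fixed constants and the only randomness is in the errors $\epsilon_j$. For a fixed $x$, write $S(x) = \sum_{j\in J_{i,n+1}} \epsilon_j K((x-X_j)/h_{\tau_n})$; since $\E(\epsilon_j)=0$ each summand is conditionally centered, and combining $|K|\le c_4$ (Assumption \ref{Ass:Ker2}) with the extended Bernstein moment bound (Assumption \ref{ass_bernstein_errors}) gives $\E|\epsilon_j K|^m \le \tfrac{m!}{2}(c_4^2 v^2)(c_4 c)^{m-2}$, i.e.\ an extended Bernstein condition with variance proxy $c_4^2 v^2$ and scale $c_4 c$. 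Using $M_{i,n+1}\le \tau_n$ as a conservative count of summands, the extended Bernstein inequality of \cite{qian2016kernel} applied to $S(x_0)$ at a single reference point $x_0$ with threshold $\tau_n\epsilon$ yields exactly the two non-summed terms $\exp(-\tau_n\epsilon^2/(4c_4^2 v^2)) + \exp(-\tau_n\epsilon/(4c_4 c))$.

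Next I would handle the supremum over $A$ by dyadic chaining. I partition the side-$h$ hypercube $A$ into $2^{kd}$ congruent subcubes at each level $k\ge 1$, pick a representative in each, and for $x\in A$ let $x_k$ be the level-$k$ representative of the subcube containing $x$, with $x_0$ the base reference point. Using the telescoping identity $S(x)=S(x_0)+\sum_{k\ge1}(S(x_k)-S(x_{k-1}))$, valid in the limit by Lipschitz continuity of $K$, the Lipschitz condition (Assumption \ref{Ass:Ker1}) bounds each increment's kernel difference by $|K((x_k-X_j)/h_{\tau_n})-K((x_{k-1}-X_j)/h_{\tau_n})|\le \lambda\,\|x_k-x_{k-1}\|_\infty/h_{\tau_n}$, which is of order $\lambda 2^{-k}$ since consecutive representatives lie within $O(h_{\tau_n}2^{-k})$. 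Hence each increment $S(x_k)-S(x_{k-1})$ is again a centered sum obeying an extended Bernstein condition, now with variance proxy of order $\lambda^2 2^{-2k}v^2$ and scale of order $\lambda 2^{-k}c$.

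The key device is a geometric allocation of the deviation budget: assign threshold $t_k=\tau_n\epsilon\,2^{-k/2}$ to level $k$, so that $\sum_{k\ge 0} t_k = \tau_n\epsilon/(1-1/\sqrt{2})$, matching the stated right-hand side. If $\sup_A S(x)$ exceeds this budget, then by the telescoping decomposition at least one level must exceed its own threshold $t_k$. A union bound over the $2^{kd}$ representatives at level $k$, combined with the extended Bernstein inequality at that level with threshold $t_k$, produces the two summed terms: the quadratic branch contributes $2^{kd}\exp(-2^k\tau_n\epsilon^2/(\lambda^2 v^2))$ (since $t_k^2/(\tau_n \tilde v_k^2)\propto 2^k\tau_n\epsilon^2/(\lambda^2 v^2)$) and the linear branch contributes $2^{kd}\exp(-2^{k/2}\tau_n\epsilon/(2\lambda c))$ (since $t_k/\tilde c_k\propto 2^{k/2}\tau_n\epsilon/(\lambda c)$). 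Summing over $k\ge 1$ and adding the base-level bound from the first paragraph gives the claim.

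The main obstacle will be the chaining bookkeeping rather than the probabilistic core, which is just repeated Bernstein. In particular I must (i) fix the net construction so that the increment constants $\|x_k-x_{k-1}\|_\infty/h_{\tau_n}$ line up with the exact constants $1/(\lambda^2 v^2)$ and $1/(2\lambda c)$ appearing in the two exponents, and (ii) verify that replacing the summand count by $\tau_n$ is legitimate — it only inflates each variance proxy and so weakens (enlarges) every exponential bound, leaving the stated inequality valid. The compact support and boundedness of $K$ (Assumption \ref{Ass:Ker2}) together with the bounded design density (Assumption \ref{ass_design_distribution}) ensure the number of nonzero summands is controlled and the chaining series converges, but since I bound that number crudely by $\tau_n$ these assumptions enter only to guarantee finiteness.
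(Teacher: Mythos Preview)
Your proposal is correct and is precisely the chaining-plus-Bernstein argument that underlies the result: the paper does not reproduce the argument but simply observes that, conditional on $\mathcal{A}_n$ and $X^n$, $\tau_n$ is fixed and plays the role of $n$ in the no-delay lemma of \cite{qian2016kernel}, whose proof is exactly the dyadic chaining with geometric budget $t_k=\tau_n\epsilon\,2^{-k/2}$ that you outline. Your bookkeeping (Bernstein at the base point with proxy $(c_4^2v^2,c_4c)$, Lipschitz-controlled increments with proxy scaling like $(\lambda^2 2^{-2k}v^2,\lambda 2^{-k}c)$, union over $2^{kd}$ cells) matches the constants in the stated bound; just be explicit that in the intended application $A$ has side-width $h=h_{\tau_n}$, so that $\|x_k-x_{k-1}\|_\infty/h_{\tau_n}$ is genuinely $O(2^{-k})$.
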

\begin{proof}
The proof of this lemma follows exactly from the analogous lemma but without delays in \cite{qian2016kernel}. The results follow because we condition on $\mathcal{A}_n$, and given $\mathcal{A}_n$, $\tau_n$ is a known quantity which plays the role of $n$ in the no-delay situation as in \cite{qian2016kernel}. 
\end{proof}
Next, we restate Theorem 3 from the paper and provide a proof.
\begin{theorem}\label{theorem 3 proof}
Suppose Assumptions 2-8 are satisfied.
\begin{enumerate}
\item If $\{h_{\tau_n}\}$ and $\{\pi_{\tau_n}\}$ are chosen to satisfy, \begin{align}
\dfrac{q(n)h_{q(n)}^{2d} \pi_{n}^4}{\log{n}} \rightarrow \infty,\label{ConditionEta1Thm3}
\end{align}
then the Nadaraya-Watson estimator defined in \eqref{NadarayaWatsonEst} is strongly consistent in $L_\infty$ norm for strategy $\eta_1$.
  \item If $\{h_{\tau_n}\}$ and $\{\pi_{\tau_n}\}$ are chosen to satisfy, \begin{align}
\dfrac{q(n)h_{q(n)}^{2d} \pi_{q(n)}^4}{\log{n}} \rightarrow \infty, \label{ConditionEta2Thm3}
\end{align}
then the Nadaraya-Watson estimator defined in \eqref{NadarayaWatsonEst} is strongly consistent in $L_\infty$ norm for strategy $\eta_2$.
\end{enumerate}
\end{theorem}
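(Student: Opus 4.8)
The plan is to follow the template of the histogram argument in Theorem~\ref{thm:theorem}: fix an arm $i$, verify Assumption~\ref{Ass1} for the Nadaraya--Watson estimator \eqref{NadarayaWatsonEst}, and then invoke Theorem~\ref{Theorem 1} to pass from uniform consistency of the $\hat f_{i,n}$ to strong consistency of $R_n(\eta_1)$ and $R_n(\eta_2)$. Throughout I would condition on $\mathcal{A}_n$ (and, when applying the concentration lemma, also on $X^n$), so that $\tau_n$, $\htaun$ and $\pitaun$ are treated as known constants, exactly as in the histogram proof. I would first reuse verbatim the argument given there to deduce, from Assumption~\ref{assump_delay} and the Bernoulli inequality \eqref{binomial_inequality}, that $\tau_n\overset{\text{a.s.}}{\to}\infty$; by construction this forces $\htaun\to0$ and $\pitaun\to0$, and continuity of $f_i$ then gives $w(L\htaun;f_i)\to0$ almost surely, so that $\epsilon-w(L\htaun;f_i)\geq\epsilon/2$ eventually.

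Writing $D_i(x)=\sum_{j\in J_{i,n+1}}K\!\left(\tfrac{x-X_j}{\htaun}\right)$ for the denominator, I would split the error as
\begin{align*}
\hat f_{i,n+1}(x)-f_i(x)
&=\frac{\sum_{j\in J_{i,n+1}}\bigl(f_i(X_j)-f_i(x)\bigr)K\!\left(\tfrac{x-X_j}{\htaun}\right)}{D_i(x)}\\
&\quad+\frac{\sum_{j\in J_{i,n+1}}\epsilon_j\,K\!\left(\tfrac{x-X_j}{\htaun}\right)}{D_i(x)}.
\end{align*}
Because $K$ vanishes for $\|u\|_\infty>L$ by Assumption~\ref{Ass:Ker2}, only design points with $\|x-X_j\|_\infty\leq L\htaun$ enter either sum, so the first (bias) term is bounded in absolute value by $w(L\htaun;f_i)$ uniformly in $x$, hence below $\epsilon/2$ for large $n$ almost surely. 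It then remains to bound the noise term below $\epsilon/2$, uniformly over $x\in[0,1]^d$, off an exceptional event whose probability is summable in $n$.

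The core step is a uniform lower bound on $D_i(x)$. Using $K(u)\geq c_3$ on $\|u\|_\infty\leq L_1$ (Assumption~\ref{Ass:Ker2}), I bound $D_i(x)\geq c_3\,\#\{j\in J_{i,n+1}:\|x-X_j\|_\infty\leq L_1\htaun\}$. Conditioned on $\mathcal{A}_n$, each observed index supplies such a point with probability at least $\underbar{c}\,(2L_1\htaun)^d\,\pitaun$ — the design-density floor of Assumption~\ref{ass_design_distribution} times the exploration floor $\pitaun$ of the allocation rule — so the Bernoulli inequality \eqref{binomial_inequality} yields, after a union bound over the $M=(1/\htaun)^d$ cubes covering $[0,1]^d$, that $\inf_x D_i(x)\geq\tfrac12 c_3\underbar{c}(2L_1)^d\htaun^{d}\pitaun\tau_n$ off an event of probability $\leq M\exp\!\bigl(-\text{const}\cdot\htaun^{d}\pitaun\tau_n\bigr)$. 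On the complementary good event the noise term can exceed $\epsilon/2$ only if $\sum_{j\in J_{i,n+1}}\epsilon_j K(\cdot)$ exceeds a threshold of order $\epsilon\,\htaun^{d}\pitaun\tau_n$, which is precisely the quantity controlled by Lemma~\ref{Lemma_kernel_theorem}. Feeding this threshold into Lemma~\ref{Lemma_kernel_theorem} (its parameter set to order $\epsilon\,\htaun^{d}\pitaun$) produces the four exponential terms there; collecting these with the denominator term and the $\tau_n$-control term $\exp(-3a_1q(n)/28)$, then removing the conditioning on $\mathcal{A}_n$ exactly as in \eqref{after_removing_conditioning}--\eqref{bound_q(n)} (replacing $\tau_n$ by $n_e=\floor{\E(\tau_n)/2}$ and lower-bounding $n_e$ by a multiple of $q(n)$), every resulting exponential is summable in $n$ under condition \eqref{ConditionEta2Thm3}. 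Borel--Cantelli gives $\|\hat f_{i,n}-f_i\|_\infty\to0$ a.s.\ for each $i$, verifying Assumption~\ref{Ass1}; part~1 for $\eta_1$ is identical with $\pitaun$ replaced by $\pi_n$, giving \eqref{ConditionEta1Thm3}.

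The main obstacle is the uniform-in-$x$ control of the ratio. Unlike the histogram case, where the per-cell count and the per-cell noise average factor cleanly, here the same localizing weights $K\!\left(\tfrac{x-X_j}{\htaun}\right)$ enter both numerator and denominator and must be controlled simultaneously over a continuum of evaluation points; this is exactly what forces the dyadic/chaining structure of Lemma~\ref{Lemma_kernel_theorem}. Its conservative, non-localized variance bookkeeping is what produces the exponent scale $\htaun^{2d}$ rather than the $\htaun^{d}$ of the histogram condition \eqref{conditionforThm1_eta2}, and carrying the exploration floor $\pitaun$ through both the denominator lower bound and the numerator concentration is what makes \eqref{ConditionEta2Thm3} the natural hypothesis guaranteeing that all the collected exponents are summable.
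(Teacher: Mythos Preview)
Your proposal is correct and follows essentially the same route as the paper's proof: the same bias--noise decomposition, the same use of $\tau_n\overset{\text{a.s.}}{\to}\infty$ to kill the bias, the same lower bound on the kernel denominator via the Bernoulli inequality \eqref{binomial_inequality} and a covering of $[0,1]^d$, the same appeal to Lemma~\ref{Lemma_kernel_theorem} for the noise numerator, and the same deconditioning on $\mathcal{A}_n$ followed by Borel--Cantelli. The one organizational difference is that the paper normalizes both numerator and denominator by $M_{i,n+1}\htaun^d$ and then invokes a separate event $\{M_{i,n+1}/\tau_n>\pitaun/2\}$ before applying Lemma~\ref{Lemma_kernel_theorem}; this inserts an additional factor of $\pitaun$ into the threshold, so the paper's exponential rates carry $\pitaun^4$ rather than the $\pitaun^2$ your direct bound on the unnormalized $D_i(x)$ produces. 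Either set of rates is summable under \eqref{ConditionEta2Thm3} (and under \eqref{ConditionEta1Thm3} for $\eta_1$), so the distinction does not affect the statement being proved.
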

\begin{proof}[Proof of Theorem 3]
Here, we prove the result for strategy $\eta_2$ and discuss how the proof for strategy $\eta_1$ follows similarly. For each $x \in [0,1]^d$,
\begin{align*}
|\hat{f}_{i,n+1} - f_i(x)| &= \left|\dfrac{\sum_{j \in J_{i,n+1}}Y_{i,j}K \left(\dfrac{x-X_j}{h_{\tau_n}} \right)}{\sum_{j \in J_{i,n+1}} K \left(\dfrac{x - X_j}{h_{\tau_n}} \right)} - f_i(x) \right|\\
&= \left|\dfrac{\sum_{j \in J_{i,n+1}}(f_i(X_j) + \epsilon_j) K \left(\dfrac{x-X_j}{h_{\tau_n}} \right)}{\sum_{j \in J_{i,n+1}} K \left(\dfrac{x - X_j}{h_{\tau_n}} \right)} - f_i(x) \right|\\
&= \left|\dfrac{\sum_{j \in J_{i,n+1}}(f_i(X_j) - f_i(x)) K \left(\dfrac{x-X_j}{h_{\tau_n}} \right)}{\sum_{j \in J_{i,n+1}} K \left(\dfrac{x - X_j}{h_{\tau_n}} \right)} + \dfrac{\sum_{j \in J_{i,n+1}} \epsilon_j K \left(\dfrac{x-X_j}{h_{\tau_n}} \right)}{\sum_{j \in J_{i,n+1}} K \left(\dfrac{x - X_j}{h_{\tau_n}} \right)} \right|\\
& \leq \sup_{x,y: ||x-y||_\infty \leq Lh_{\tau_n}} |f_i(x) - f_i(y)| + \left| \dfrac{\frac{1}{M_{i,n+1}h_{\tau_n}^d}\sum_{j \in J_{i,n+1}} \epsilon_j K \left(\dfrac{x-X_j}{h_{\tau_n}} \right)}{\frac{1}{M_{i,n+1}h_{\tau_n}^d}\sum_{j \in J_{i,n+1}} K \left(\dfrac{x - X_j}{h_{\tau_n}} \right)} \right|,
\end{align*}
where the last inequality follows from the bounded support assumption of kernel function $K(\cdot)$. It was shown in the proof of Theorem 2 of the paper, $\tau_n \overset{\text{a.s.}}{\rightarrow} \infty$ as $n \rightarrow \infty$. Thus, by uniform continuity of the function $f_i$,
\begin{align*}
\lim_{n\rightarrow \infty} \sup_{x,y: ||x-y||_\infty \leq Lh_{\tau_n}} |f_i(x) - f_i(y)| = 0, \ \ \text{almost surely.}
\end{align*}
Therefore we only need,
\begin{align}
\sup_{x \in [0,1]^d} \left|\dfrac{\frac{1}{M_{i,n+1}h_{\tau_n}^d}\sum_{j \in J_{i,n+1}} \epsilon_j K \left(\dfrac{x-X_j}{h_{\tau_n}} \right)}{\frac{1}{M_{i,n+1}h_{\tau_n}^d}\sum_{j \in J_{i,n+1}} K \left(\dfrac{x - X_j}{h_{\tau_n}} \right)} \right| \overset{\text{a.s.}}{\rightarrow} 0 \ \text{as} \ n \rightarrow \infty. \label{ToShowConsistencyResult}
\end{align}
We first show that,
\begin{align}
\inf_{x \in [0,1]^d} \dfrac{1}{M_{i,n+1} h_{\tau_n}^d} \sum_{j \in J_{i,n+1}} K \left(\dfrac{x - X_j}{h_{\tau_n}} \right) > \dfrac{c_3\underbar{c} L_1^d \pi_{\tau_n}}{2}, \label{denominatorPostiveCondition}
\end{align}
almost surely for large enough $n$. Indeed, for each $n \geq m_0 + 1$, given $\tau_n$, we can partition the unit cube $[0,1]^d$ into $\tilde{B}$ bins with bin width $L_1h_{\tau_n}$ such that $\tilde{B} \leq 1/(L_1h_{\tau_n})^d$. We denote these bins  by $\tilde{A}_1, \tilde{A}_2, \hdots, \tilde{A}_{\tilde{B}}$. Let $\sigma_t = \inf\{\tilde{n}: \sum_{j=1}^{\tilde{n}} I(t_j \leq n) \geq t\}$. Given an arm $i$ and $1\leq k \leq \tilde{B}$, for every $x \in \tilde{A}_{k}$, given $\tau_n$ we have that,
\begin{align*}
\sum_{j \in J_{i,n+1}} K \left(\dfrac{x - X_j}{h_{\tau_n}}\right) &= \sum_{t=1}^{\tau_n} I(I_{\sigma_t} = i) K \left(\dfrac{x - X_{\sigma_t}}{h_{\tau_n}}\right)\\
& \geq \sum_{t=1}^{\tau_n} I(I_{\sigma_t} = i, X_{\sigma_t} \in \tilde{A}_k) K \left(\dfrac{x - X_{\sigma_t}}{h_{\tau_n}}\right)\\
& \geq c_3 \sum_{j=1}^{\tau_n} I(I_{\sigma_t} = i, X_{\sigma_t} \in \tilde{A}_k),
\end{align*}
where the last inequality follows from Assumption 8 (boundedness of kernels) in the paper. Therefore,
\begin{align*}
P_{\mathcal{A}_n, X^n} & \left(\inf_{x \in \tilde{A}_k} \dfrac{1}{M_{i,n+1} h_{\tau_n}^d} \sum_{j \in J_{i,n+1}} K \left(\dfrac{x-X_j}{h_{\tau_n}} \right) \leq \dfrac{c_3 \underbar{c} L_1^d \pi_{\tau_n}}{2}\right)\\
& \leq P_{\mathcal{A}_n, X^n} \left(\inf_{x \in \tilde{A}_k} \dfrac{1}{\tau_{n} h_{\tau_n}^d}\sum_{j \in J_{i,n+1}} K \left(\dfrac{x-X_j}{h_{\tau_n}} \right) \leq \dfrac{c_3 \underbar{c} L_1^d \pi_{\tau_n}}{2} \right)\\
& \leq P_{\mathcal{A}_n, X^n} \left(\dfrac{c_3}{\tau_{n} h_{\tau_n}^d} \sum_{t = 1}^{\tau_n} I(I_{\sigma_t} = i, X_{\sigma_t} \in \tilde{A}_k)  \leq \dfrac{c_3 \underbar{c} L_1^d \pi_{\tau_n}}{2} \right)\\
& \leq P_{\mathcal{A}_n, X^n}  \left( \sum_{t = 1}^{\tau_n} I(I_{\sigma_t} = i, X_{\sigma_t} \in \tilde{A}_k) \leq \dfrac{\underbar{c} \tau_n (L_1 h_{\tau_n})^d \pi_{\tau_n}}{2} \right).
\end{align*}
Note that, $P_{\mathcal{A}_n, X^n}(I_{\sigma_t} = i, X_{\sigma_t} \in \tilde{A}_k) \geq \underbar{c} (L_1 h_{\tau_n})^d \pi_{\tau_n}$ by independence of arms chosen and covariates (Assumption 6), for all $1 \leq t \leq n$.
\begin{align*}
 P_{\mathcal{A}_n, X^n} \left(\sum_{t=1}^{\tau_n} I(I_{\sigma_t} = i, X_{\sigma_t} \in \tilde{A}_k) \leq \dfrac{\underbar{c}\tau_n (L_1h_{\tau_n})^d \pi_{\tau_n}}{2} \right) &\leq \exp \left(- \dfrac{3 \underbar{c} \tau_n (L_1h_{\tau_n})^d \pi_{\tau_n}}{28} \right).
 \end{align*} 
 Therefore we get that,
 \begin{align}
 P_{\mathcal{A}_n,X^n} \left(\inf_{x \in \tilde{A}_k} \dfrac{1}{M_{i,n+1} h_{\tau_n}^d} \sum_{j \in J_{i,n+1}} K \left(\dfrac{x-X_j}{h_{\tau_n}} \right) \leq \dfrac{c_3 \underbar{c} L_1^d \pi_{\tau_n}}{2} \right) \leq \exp \left(- \dfrac{3 \underbar{c} \tau_n (L_1h_{\tau_n})^d \pi_{\tau_n}}{28} \right). \label{conditiononAnXn}
 \end{align}
 Now consider,
 \begin{align*}
 P& \left( \inf_{x \in \tilde{A}_k} \dfrac{1}{M_{i,n+1} h_{\tau_n}^d} \sum_{j \in J_{i,n+1}} K \left(\dfrac{x-X_j}{h_{\tau_n}} \right) \leq \dfrac{c_3 \underbar{c} L_1^d \pi_{\tau_n}}{2}\right)\\
& = P\left( \inf_{x \in \tilde{A}_k} \dfrac{1}{M_{i,n+1} h_{\tau_n}^d} \sum_{j \in J_{i,n+1}} K \left(\dfrac{x-X_j}{h_{\tau_n}} \right) \leq \dfrac{c_3 \underbar{c} L_1^d \pi_{\tau_n}}{2}, \tau_n > \dfrac{\E(\tau_n)}{2}\right) \\
& \quad \quad + P \left( \inf_{x \in \tilde{A}_k} \dfrac{1}{M_{i,n+1} h_{\tau_n}^d} \sum_{j \in J_{i,n+1}} K \left(\dfrac{x-X_j}{h_{\tau_n}} \right) \leq \dfrac{c_3 \underbar{c} L_1^d \pi_{\tau_n}}{2}, \tau_n \leq \dfrac{\E(\tau_n)}{2}\right)\\
& \leq EP_{\mathcal{A}_n,X^n} \left( \inf_{x \in \tilde{A}_k} \dfrac{1}{M_{i,n+1} h_{\tau_n}^d} \sum_{j \in J_{i,n+1}} K \left(\dfrac{x-X_j}{h_{\tau_n}} \right) \leq \dfrac{c_3 \underbar{c} L_1^d \pi_{\tau_n}}{2}, \tau_n > \dfrac{\E(\tau_n)}{2}\right) + P \left(\tau_n \leq \dfrac{\E(\tau_n)}{2} \right)\\
& \leq \exp \left(- \dfrac{3\underbar{c}(L_1h_{\tau_n})^d \pi_{\tau_n} (\E(\tau_n))}{56} \right) + \exp \left(-\dfrac{3 \E(\tau_n)}{28} \right),
 \end{align*}
 where the last inequality followed from \eqref{conditiononAnXn} and the Bernstein's inequality. Hence,
 \begin{align*}
 P& \left(\inf_{x \in [0,1]^d} \dfrac{1}{M_{i,n+1} h_{\tau_n}^d} \sum_{j \in J_{i,n+1}} K \left(\dfrac{x - X_j}{h_{\tau_n}} \right) \leq \dfrac{c_3 \underbar{c} L_1^d \pi_{\tau_n}}{2} \right)\\
& \leq \sum_{k = 1}^{\tilde{B}} P \left(\inf_{\tilde{A}_k} \dfrac{1}{M_{i,n+1} h_{\tau_n}^d} \sum_{j \in J_{i,n+1}} K \left(\dfrac{x - X_j}{h_{\tau_n}} \right) \leq \dfrac{c_3 \underbar{c} L_1^d \pi_{\tau_n}}{2} \right)\\
& \leq \tilde{B} \left(\exp \left(- \dfrac{3\underbar{c}(L_1h_{\tau_n})^d \pi_{\tau_n} (\E(\tau_n))}{56} \right) + \exp \left(-\dfrac{3 \E(\tau_n)}{28} \right) \right)\\
& \leq \tilde{B} \left(\exp \left(- \dfrac{3\tilde{\underbar{c}}(L_1h_{q(n)})^d \pi_{q(n)} (q(n))}{56} \right) + \exp \left(-\dfrac{3 a_1 q(n)}{28} \right) \right),
 \end{align*}
 where the last inequality follows from Assumption 3 and \eqref{ConditionEta2Thm3}.
 Here, $\tilde{\underbar{c}}$ and $a_1$ are constants due to the use of Assumption 3, which says that $\E(\tau_n) \geq a_1 q(n)$ for some constant $a_1 > 0$. Also, the same condition $\frac{q(n)h_{q(n)}^{2d} \pi_{q(n)}^4}{\log{n}} \rightarrow \infty$ ensures that the RHS above is summable, and by Borel-Cantelli Lemma, we have \eqref{denominatorPostiveCondition}.  

Now, in order to prove \eqref{ToShowConsistencyResult}, we now need to show that,
\begin{align}
\sup_{x \in [0,1]^d} \left|\dfrac{1}{M_{i,n+1} h_{\tau_n}^d} \sum_{j \in J_{i,n+1}} \epsilon_j K \left(\dfrac{x - X_j}{h_{\tau_n}}\right) \right| = o(\pi_{\tau_n}), \ \text{almost surely.} \label{AlmostSurelyNum}
\end{align}
For each $n > m_0+1$, we can partition the unit cube $[0,1]^d$ into $B$ bins with bin length $h_{\tau_n}$ such that $B \leq 1/h_{\tau_n}^d$. We denote these bins by $A_1, A_2,\hdots, A_B$. Then given $\epsilon > 0$, consider,
\begin{align}
P_{\mathcal{A}_n, X^n} &\left(\sup_{x \in [0,1]^d} \left|\dfrac{1}{M_{i,n+1}h_{\tau_n}^d} \sum_{j \in J_{i,n+1}} \epsilon_j K \left(\dfrac{x - X_j}{h_{\tau_n}} \right) \right| > \pi_{\tau_n} \epsilon \right)\nonumber\\
&\leq B \max_{1\leq k \leq B} P_{\mathcal{A}_n, X^n} \left(\sup_{x \in A_k} \left|\dfrac{1}{M_{i,n+1}h_{\tau_n}^d} \sum_{j \in J_{i,n+1}} \epsilon_j K \left(\dfrac{x - X_j}{h_{\tau_n}} \right) \right| > \pi_{\tau_n} \epsilon \right)\nonumber\\
& \leq B \max_{1\leq k \leq B} P_{\mathcal{A}_n, X^n} \left(\sup_{x \in A_k} \left|\dfrac{1}{M_{i,n+1}h_{\tau_n}^d} \sum_{j \in J_{i,n+1}} \epsilon_j K \left(\dfrac{x - X_j}{h_{\tau_n}} \right) \right| > \pi_{\tau_n} \epsilon, \dfrac{M_{i,n+1}}{\tau_n} > \dfrac{\pi_{\tau_n}}{2} \right)\nonumber \\
&\quad \quad + B P_{\mathcal{A}_n,X^n} \left(\dfrac{M_{i,n+1}}{\tau_n}\leq \dfrac{\pi_{\tau_n}}{2} \right)\nonumber \\
&\leq B \max_{1\leq k \leq B} P_{\mathcal{A}_n, X^n} \left(\sup_{x \in A_k} \left|\sum_{j \in J_{i,n+1}} \epsilon_j K \left(\dfrac{x - X_j}{h_{\tau_n}} \right) \right| > \dfrac{\tau_n \pi_{\tau_n}^2 h_{\tau_n}^d \epsilon}{2}\right)+ B P_{\mathcal{A}_n,X^n} \left(\dfrac{M_{i,n+1}}{\tau_n}\leq \dfrac{\pi_{\tau_n}}{2} \right)\nonumber \\
& \leq  B \max_{1\leq k \leq B} P_{\mathcal{A}_n, X^n} \left(\sup_{x \in A_k} \left|\sum_{j \in J_{i,n+1}} \epsilon_j K \left(\dfrac{x - X_j}{h_{\tau_n}} \right) \right| > \dfrac{\tau_n \pi_{\tau_n}^2 h_{\tau_n}^d \epsilon}{2}\right)+ B \exp \left(- \dfrac{3 \tau_n \pi_{\tau_n}}{28} \right), \label{UpperBoundNumConditionAnXn}
\end{align}
where the last inequality follows from \eqref{binomial_inequality}. Note that using Lemma \ref{Lemma_kernel_theorem},
\begin{align}
P_{\mathcal{A}_n, X^n}& \left(\sup_{x \in A_k} \left|\sum_{j \in J_{i,n+1}} \epsilon_j K \left(\dfrac{x - X_j}{h_{\tau_n}} \right) \right| > \dfrac{\tau_n \pi_{\tau_n}^2 h_{\tau_n}^d \epsilon}{2}\right)\nonumber\\
&\leq 2\exp \left(- \dfrac{(\sqrt{2} - 1)^2 \tau_n \pi_{\tau_n}^4 h_{\tau_n}^{2d} \epsilon^2}{32 c_4^2 v^2} \right) + 2\exp \left(- \dfrac{(\sqrt{2} - 1) \tau_n \pi_{\tau_n}^2 h_{\tau_n}^{d} \epsilon}{8 \sqrt{2} c_4 c} \right)\nonumber\\
&\quad  + 2\sum_{k=1}^\infty 2^{kd} \exp \left(- \dfrac{2^k (\sqrt{2} - 1)^2 \tau_n \pi_{\tau_n}^4 h_{\tau_n}^{2d} \epsilon^2}{8 \lambda^2 v^2} \right)+ 2\sum_{k=1}^\infty 2^{kd} \exp \left(- \dfrac{2^{k/2} (\sqrt{2} - 1) \tau_n \pi_{\tau_n}^2 h_{\tau_n}^{d} \epsilon}{4 \sqrt{2} \lambda c} \right). \label{UpperBoundFirstTermNumConditionOnAnXn}
\end{align}
Using \eqref{UpperBoundNumConditionAnXn} and \eqref{UpperBoundFirstTermNumConditionOnAnXn}, we get that,
\begin{align*}
P_{\mathcal{A}_n, X^n} &\left(\sup_{x \in [0,1]^d} \left|\dfrac{1}{M_{i,n+1}h_{\tau_n}^d} \sum_{j \in J_{i,n+1}} \epsilon_j K \left(\dfrac{x - X_j}{h_{\tau_n}} \right) \right| > \pi_{\tau_n} \epsilon \right) \\
& \leq 2B \exp \left(- \dfrac{(\sqrt{2} - 1)^2 \tau_n \pi_{\tau_n}^4 h_{\tau_n}^{2d} \epsilon^2}{32 c_4^2 v^2} \right) + 2B \exp \left(- \dfrac{(\sqrt{2} - 1) \tau_n \pi_{\tau_n}^2 h_{\tau_n}^{d} \epsilon}{8 \sqrt{2} c_4 c} \right)\nonumber\\
&\quad  + 2B \sum_{k=1}^\infty 2^{kd} \exp \left(- \dfrac{2^k (\sqrt{2} - 1)^2 \tau_n \pi_{\tau_n}^4 h_{\tau_n}^{2d} \epsilon^2}{8 \lambda^2 v^2} \right)+ 2B \sum_{k=1}^\infty 2^{kd} \exp \left(- \dfrac{2^{k/2} (\sqrt{2} - 1) \tau_n \pi_{\tau_n}^2 h_{\tau_n}^{d} \epsilon}{4 \sqrt{2} \lambda c} \right) \\
& \quad \quad + B \exp \left(- \dfrac{3 \tau_n \pi_{\tau_n}}{28} \right).
\end{align*}
Now consider,
\begin{align*}
P& \left(\sup_{x \in [0,1]^d} \left|\dfrac{1}{M_{i,n+1}h_{\tau_n}^d} \sum_{j \in J_{i,n+1}} \epsilon_j K \left(\dfrac{x - X_j}{h_{\tau_n}} \right) \right| > \pi_{\tau_n} \epsilon\right) \\
& \leq \E P_{\mathcal{A}_n, X^n} \left(\sup_{x \in [0,1]^d} \left|\dfrac{1}{M_{i,n+1}h_{\tau_n}^d} \sum_{j \in J_{i,n+1}} \epsilon_j K \left(\dfrac{x - X_j}{h_{\tau_n}} \right) \right| > \pi_{\tau_n} \epsilon, \tau_n > \dfrac{\E(\tau_n)}{2}\right)  + P \left(\tau_n \leq \dfrac{\E(\tau_n)}{2} \right).
\end{align*}
Let $n_e = \lfloor \E(\tau_n)/2 \rfloor$, then using condition \eqref{ConditionEta2Thm3},
\begin{align*}
P& \left(\sup_{x \in [0,1]^d} \left|\dfrac{1}{M_{i,n+1}h_{\tau_n}^d} \sum_{j \in J_{i,n+1}} \epsilon_j K \left(\dfrac{x - X_j}{h_{\tau_n}} \right) \right| > \pi_{\tau_n} \epsilon\right) \\
& \leq 2B\exp \left(- \dfrac{(\sqrt{2} - 1)^2 n_e \pi_{n_e}^4 h_{n_e}^{2d} \epsilon^2}{32 c_4^2 v^2} \right) + 2B\exp \left(- \dfrac{(\sqrt{2} - 1) n_e \pi_{n_e}^2 h_{n_e}^{d} \epsilon}{8 \sqrt{2} c_4 c} \right)\\
&\quad + 2B \sum_{k=1}^\infty 2^{kd} \exp \left(- \dfrac{2^k (\sqrt{2} - 1)^2 n_e \pi_{n_e}^4 h_{n_e}^{2d} \epsilon^2}{8 \lambda^2 v^2} \right)+ 2B\sum_{k=1}^\infty 2^{kd} \exp \left(- \dfrac{2^{k/2} (\sqrt{2} - 1) n_e \pi_{n_e}^2 h_{n_e}^{d} \epsilon}{4 \sqrt{2} \lambda c} \right) \\
& \quad \quad + B \exp \left(- \dfrac{3 n_e \pi_{n_e}}{28} \right) + \exp \left(- \dfrac{3 \E(\tau_n)}{28} \right)
\end{align*}
\begin{align*}
& \leq 2B \exp \left(- \dfrac{(\sqrt{2} - 1)^2 \tilde{a}_1 q(n) \pi_{q(n)}^4 h_{q(n)}^{2d} \epsilon^2}{64 c_4^2 v^2} \right) + 2B \exp \left(- \dfrac{(\sqrt{2} - 1)\tilde{a}_2 q(n) \pi_{q(n)}^2 h_{q(n)}^{d} \epsilon}{16 \sqrt{2} c_4 c} \right)\\
&\quad + 2B \sum_{k=1}^\infty 2^{kd} \exp \left(- \dfrac{2^k (\sqrt{2} - 1)^2 \tilde{a}_1 q(n) \pi_{q(n)}^4 h_{q(n)}^{2d} \epsilon^2}{16 \lambda^2 v^2} \right)\\
& \quad \quad + 2B \sum_{k=1}^\infty 2^{kd} \exp \left(- \dfrac{2^{k/2} (\sqrt{2} - 1) \tilde{a}_2 q(n) \pi_{q(n)}^2 h_{q(n)}^{d} \epsilon}{8 \sqrt{2} \lambda c} \right)\\
& \quad \quad \quad + B \exp \left(- \dfrac{3 \tilde{a}_3  q(n) \pi_{q(n)}}{56} \right) + \exp \left(- \dfrac{3 a_1 q(n)}{28} \right),
\end{align*}
where $\tilde{a}_1$ is a constant that comes from Assumption 3 and the choice of hyperparameter sequence when applied to the constant $a_1$, where $a_1$ is a positive constant such that $\E(\tau_n) \geq a_1 q(n)$, for large enough $n$. Using condition \eqref{ConditionEta2Thm3}, $\frac{q(n)\pi_{q(n)}^4 h_{q(n)}^{2d}}{\log{n}} \rightarrow \infty$, it is easy to see that RHS above is summable. Then, by Borel-Cantelli Lemma we can conclude \eqref{AlmostSurelyNum}, thus proving the theorem. Note, following the same lines of proof, we could prove the strong consistency for $\eta_1$ by just replacing $\pi_{\tau_n}$ with $\pi_n$.
\end{proof}
\begin{figure}[h!]
 \centering
   \includegraphics[scale=0.3]{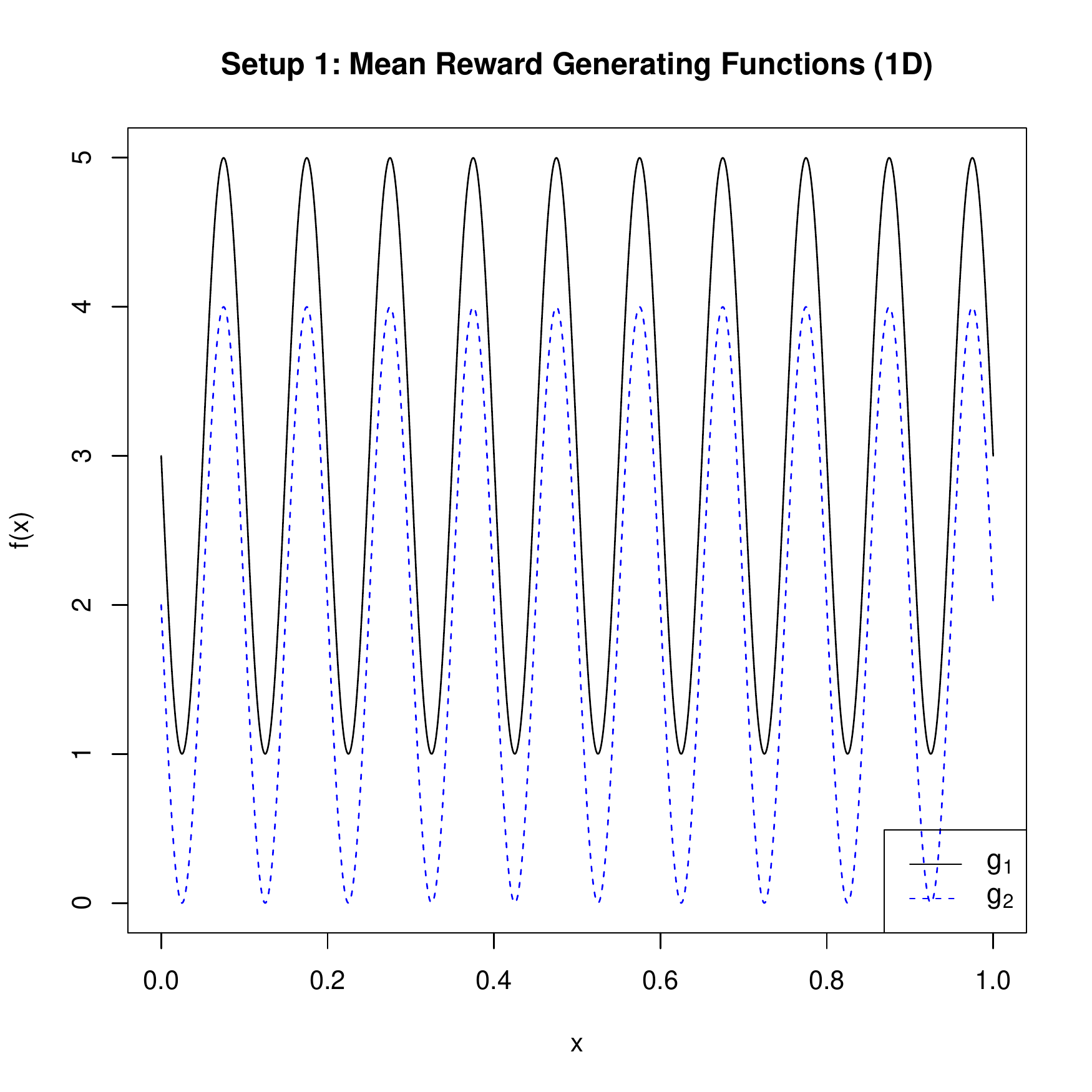}
   \includegraphics[scale=0.3]{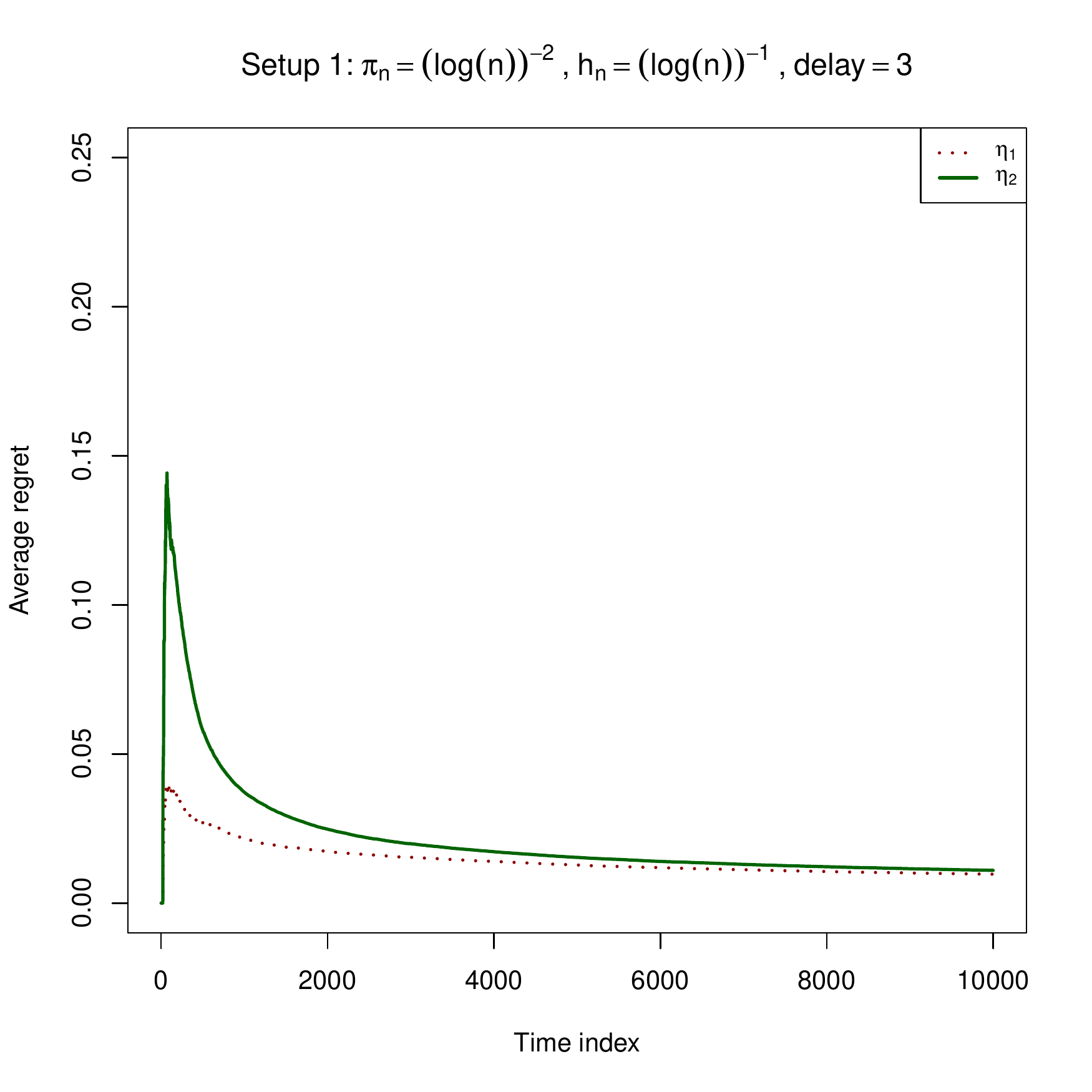}
   \includegraphics[scale=0.3]{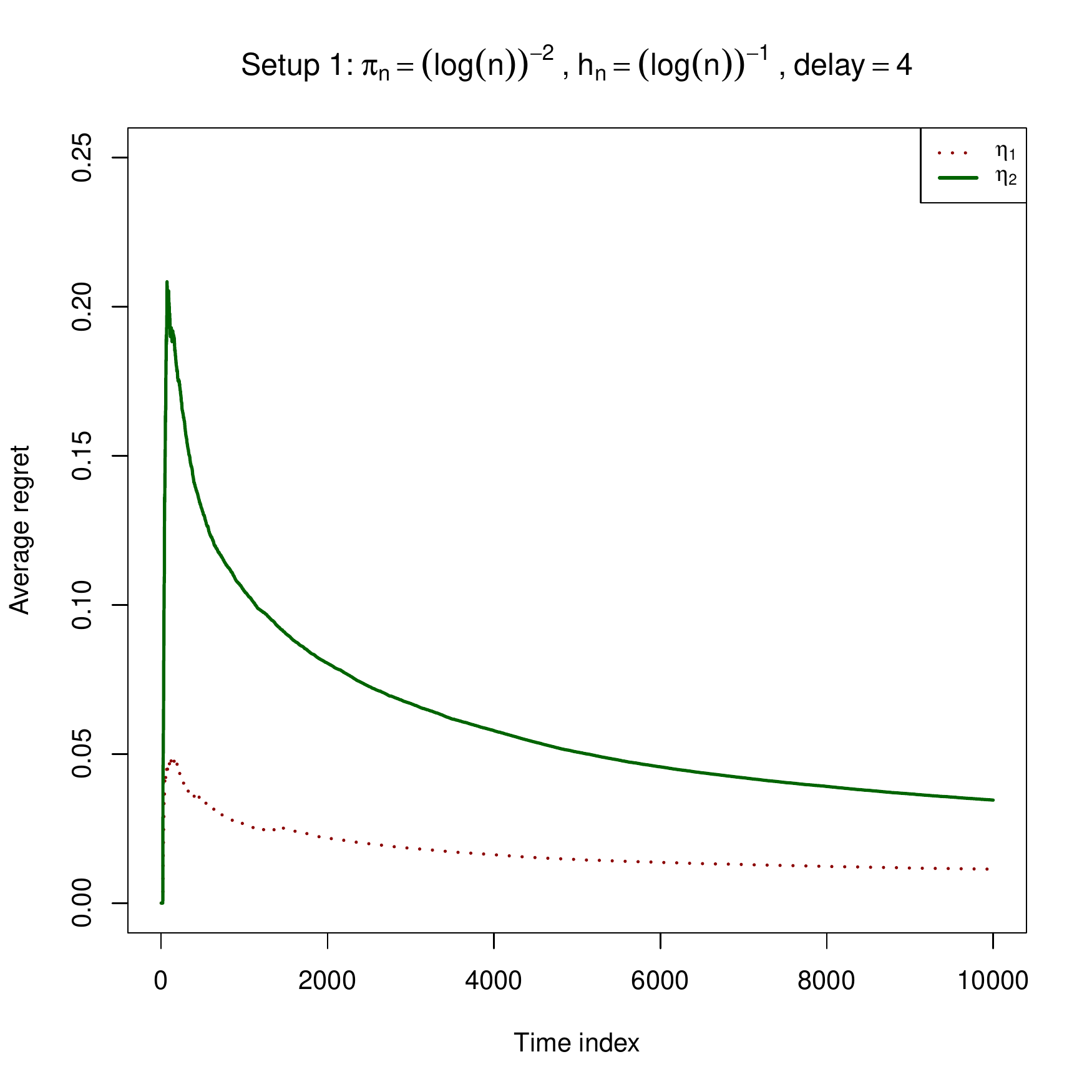}\\
   \includegraphics[scale=0.3]{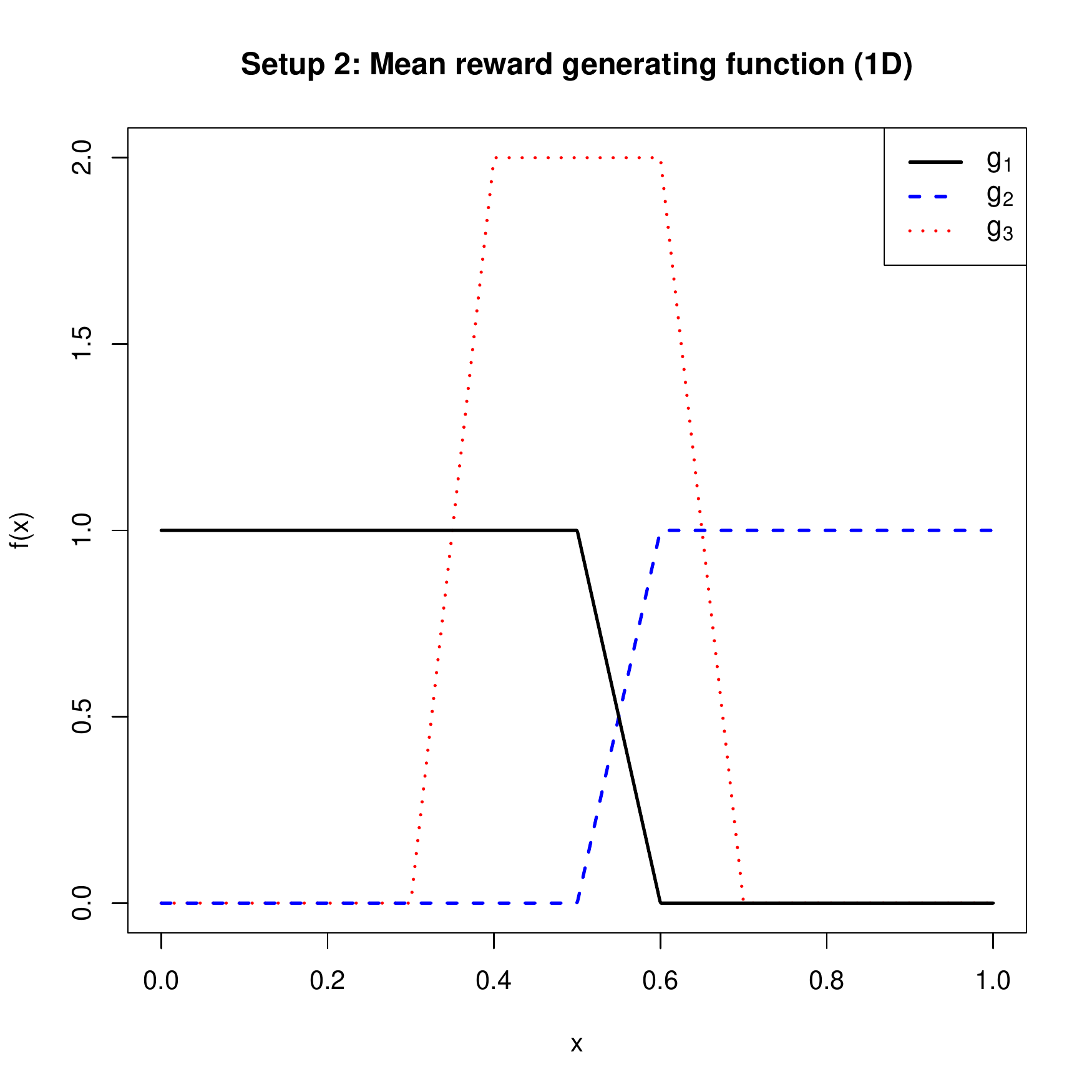}
   \includegraphics[scale=0.3]{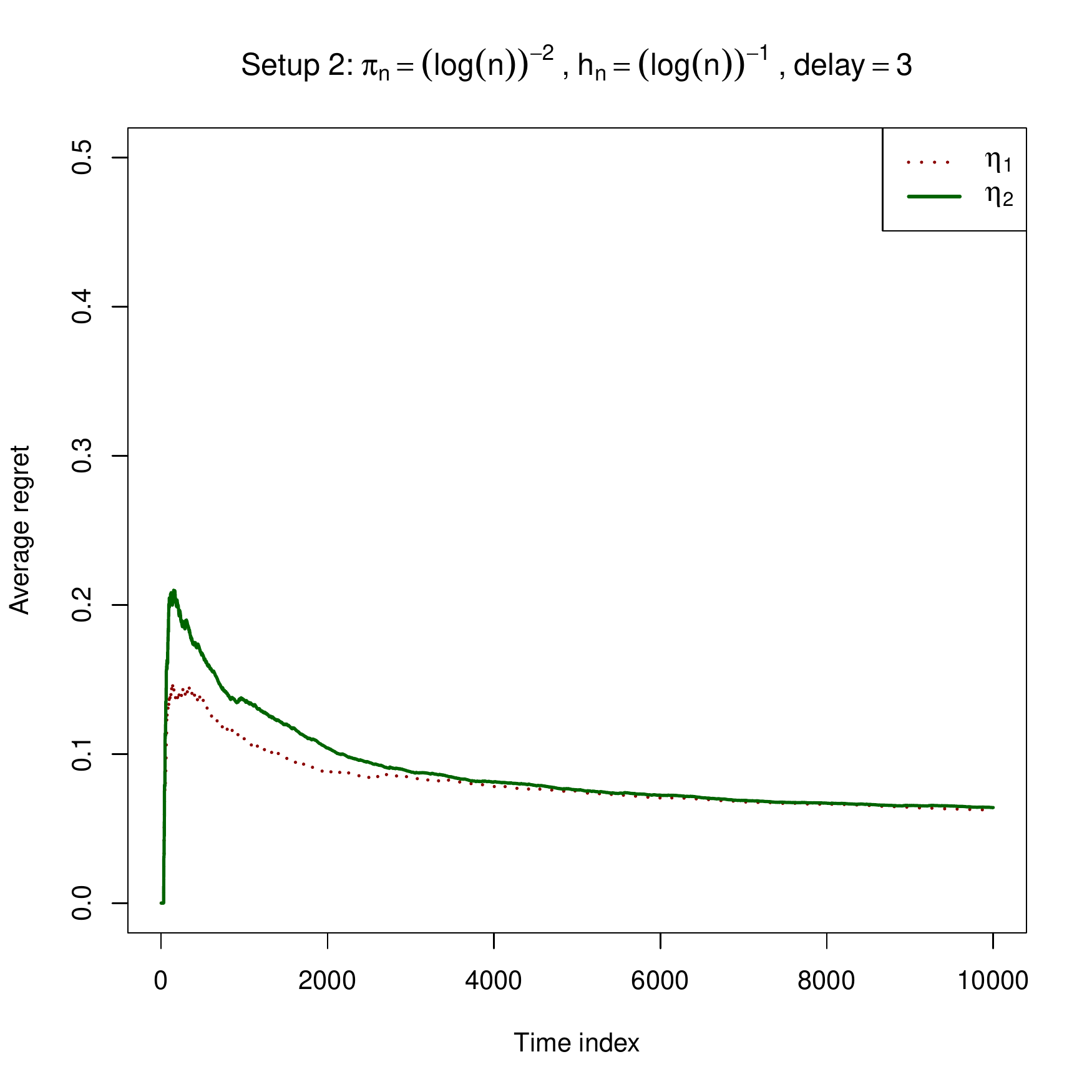}
   \includegraphics[scale=0.3]{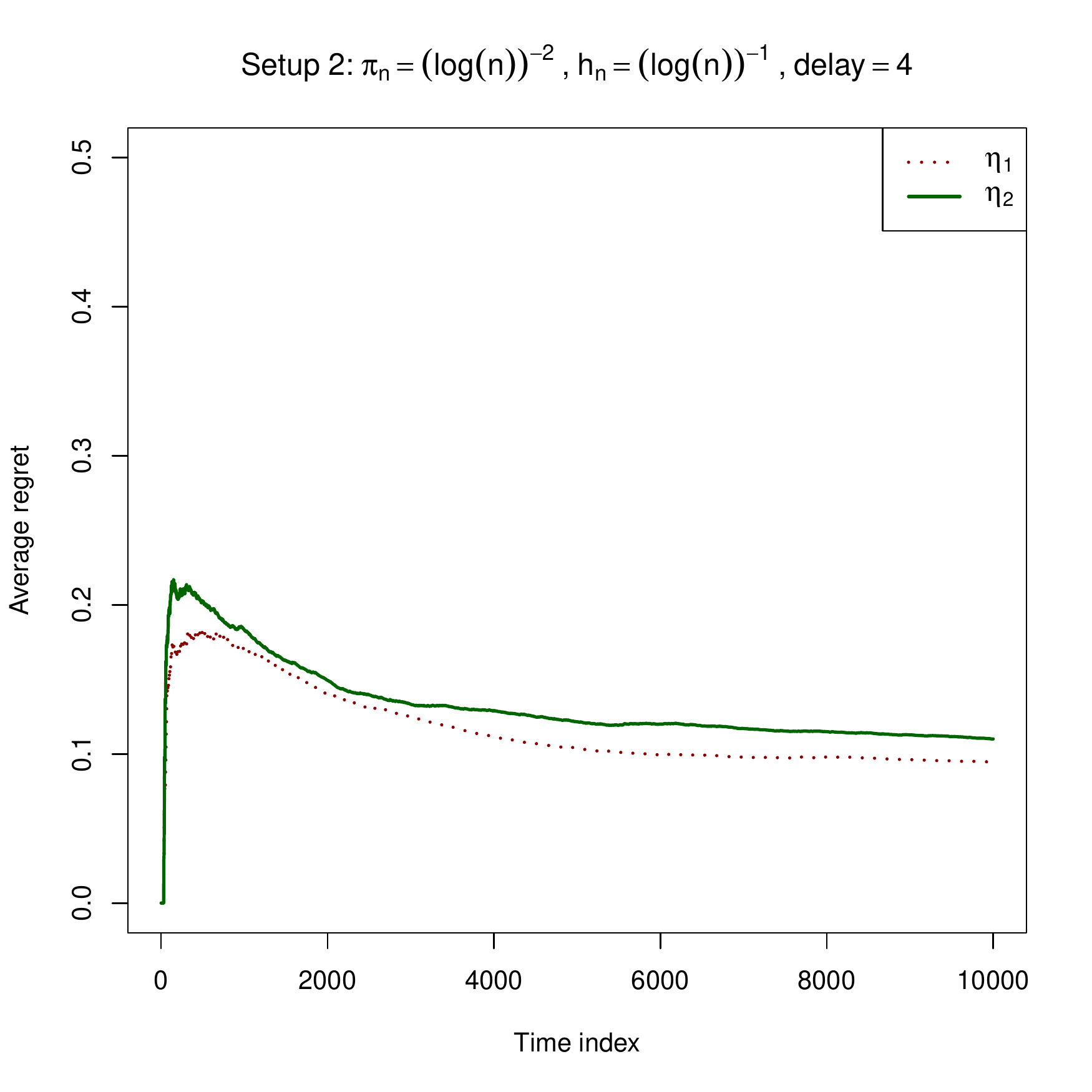}\\
\includegraphics[scale=0.3]{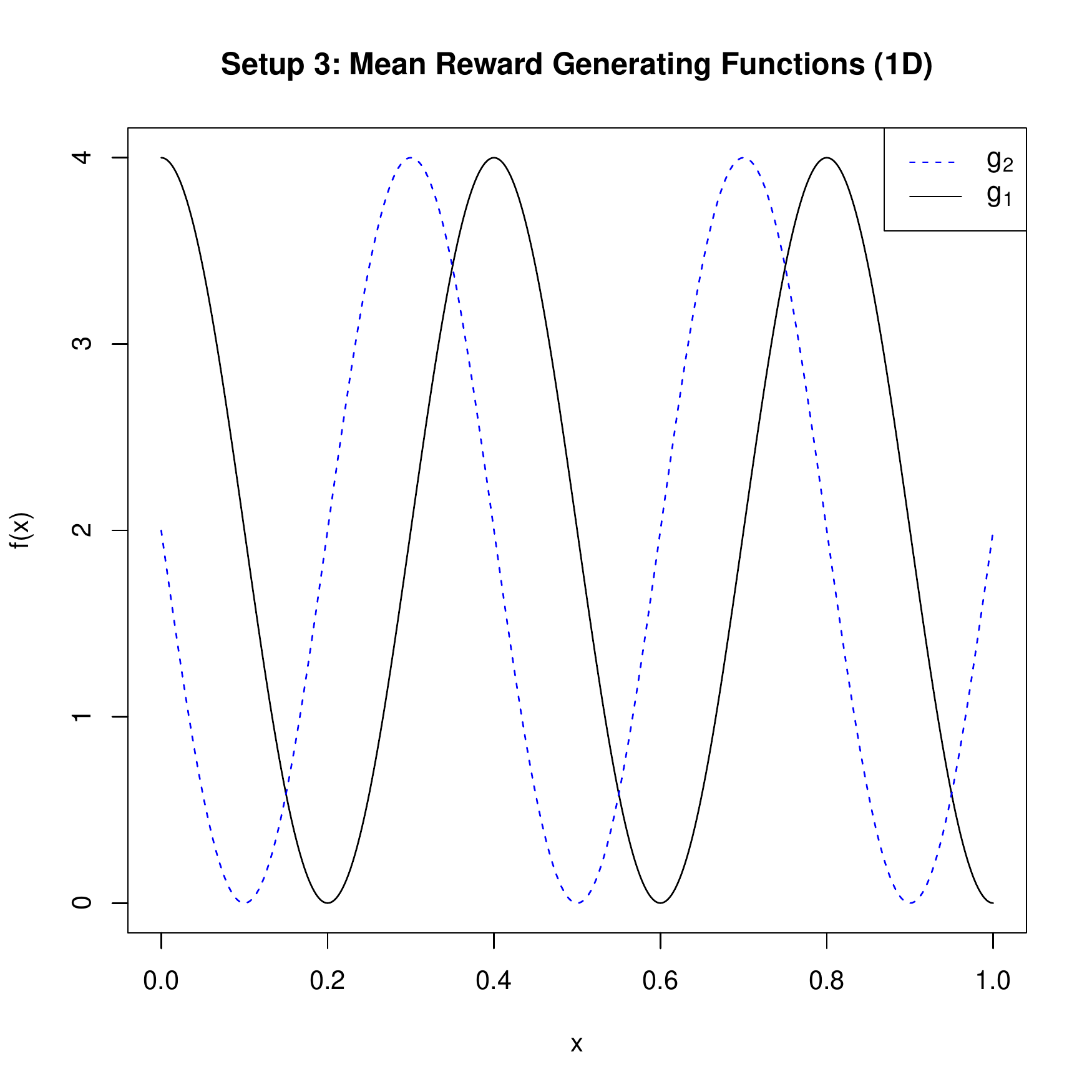}
   \includegraphics[scale=0.3]{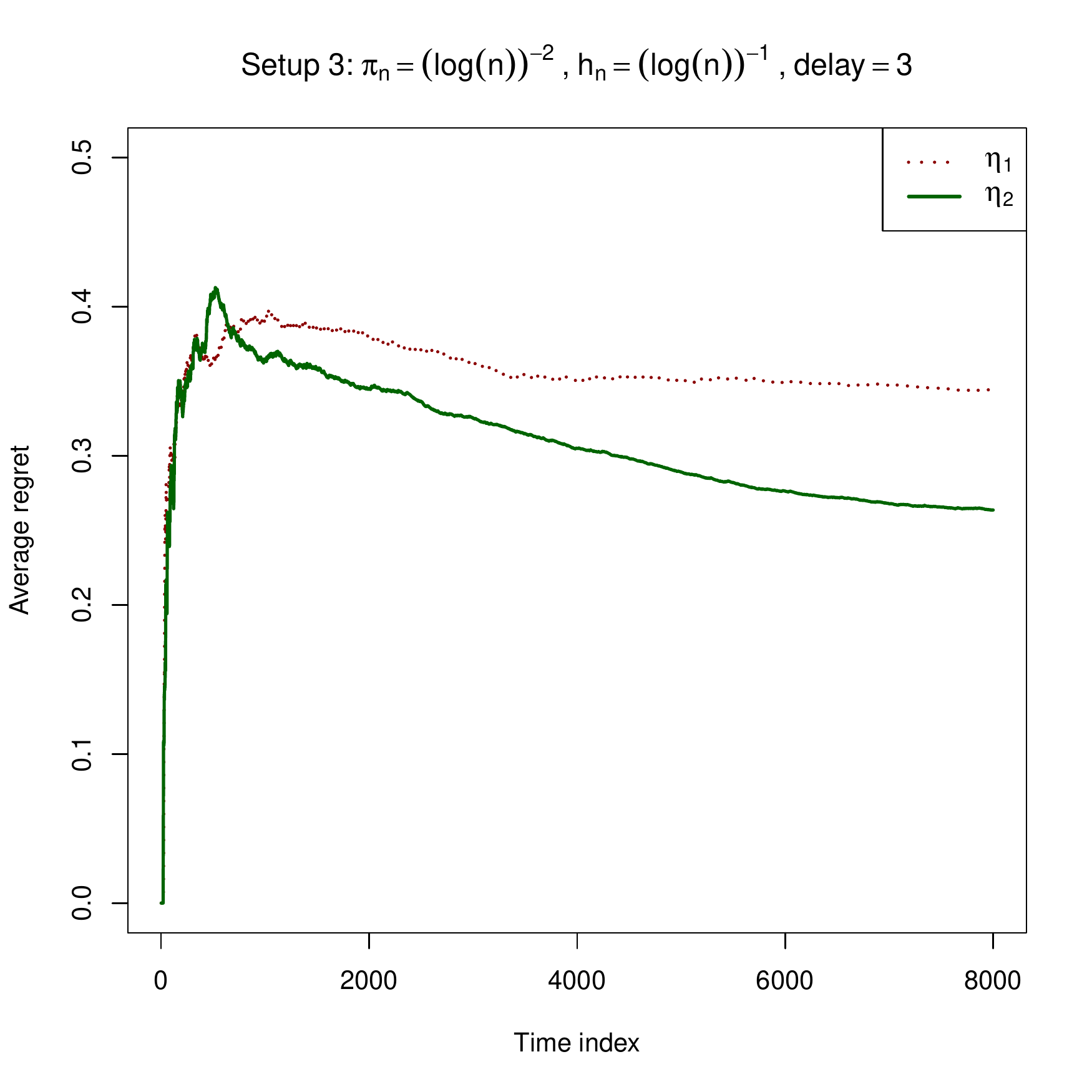}
    \includegraphics[scale=0.3]{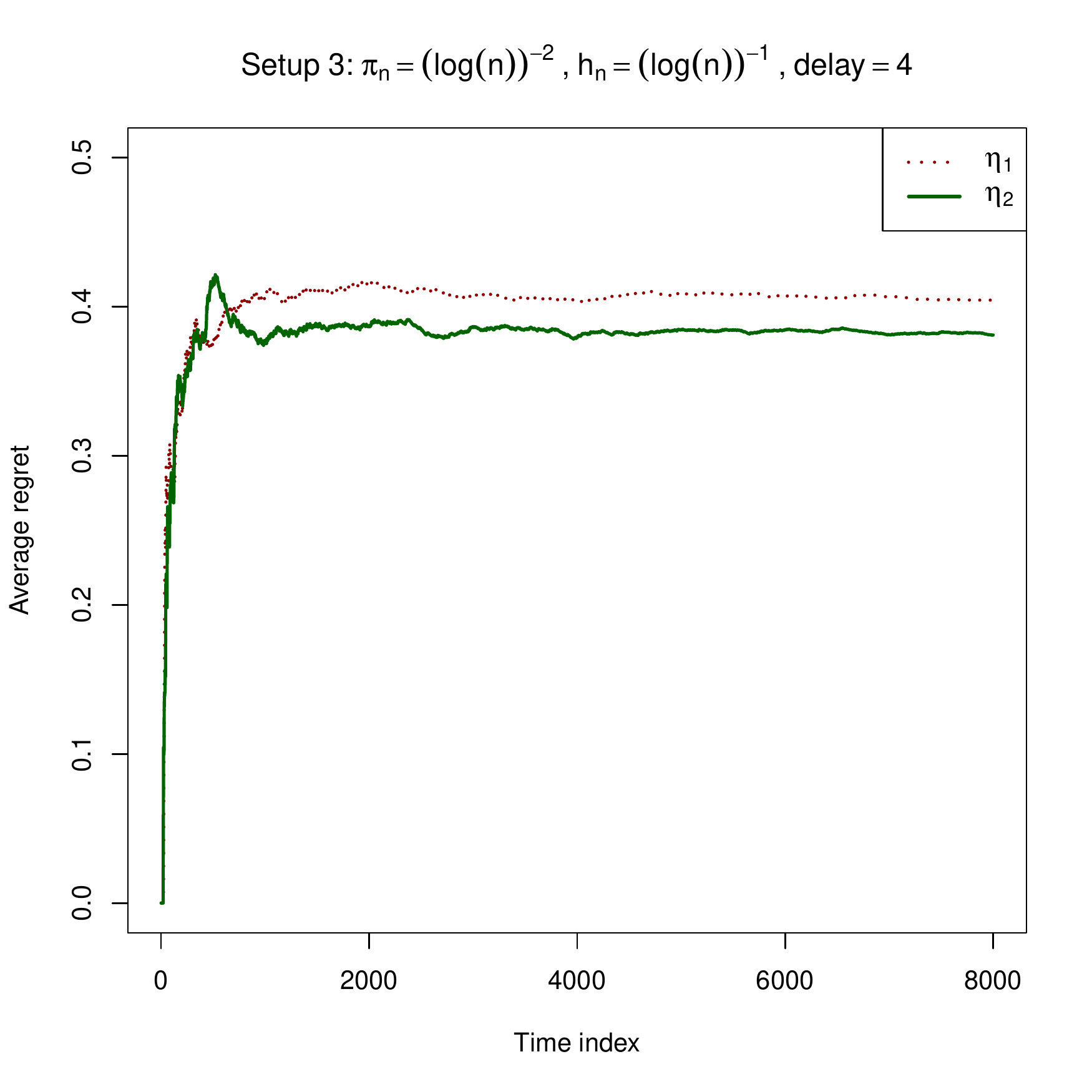}\\
   \includegraphics[scale=0.3]{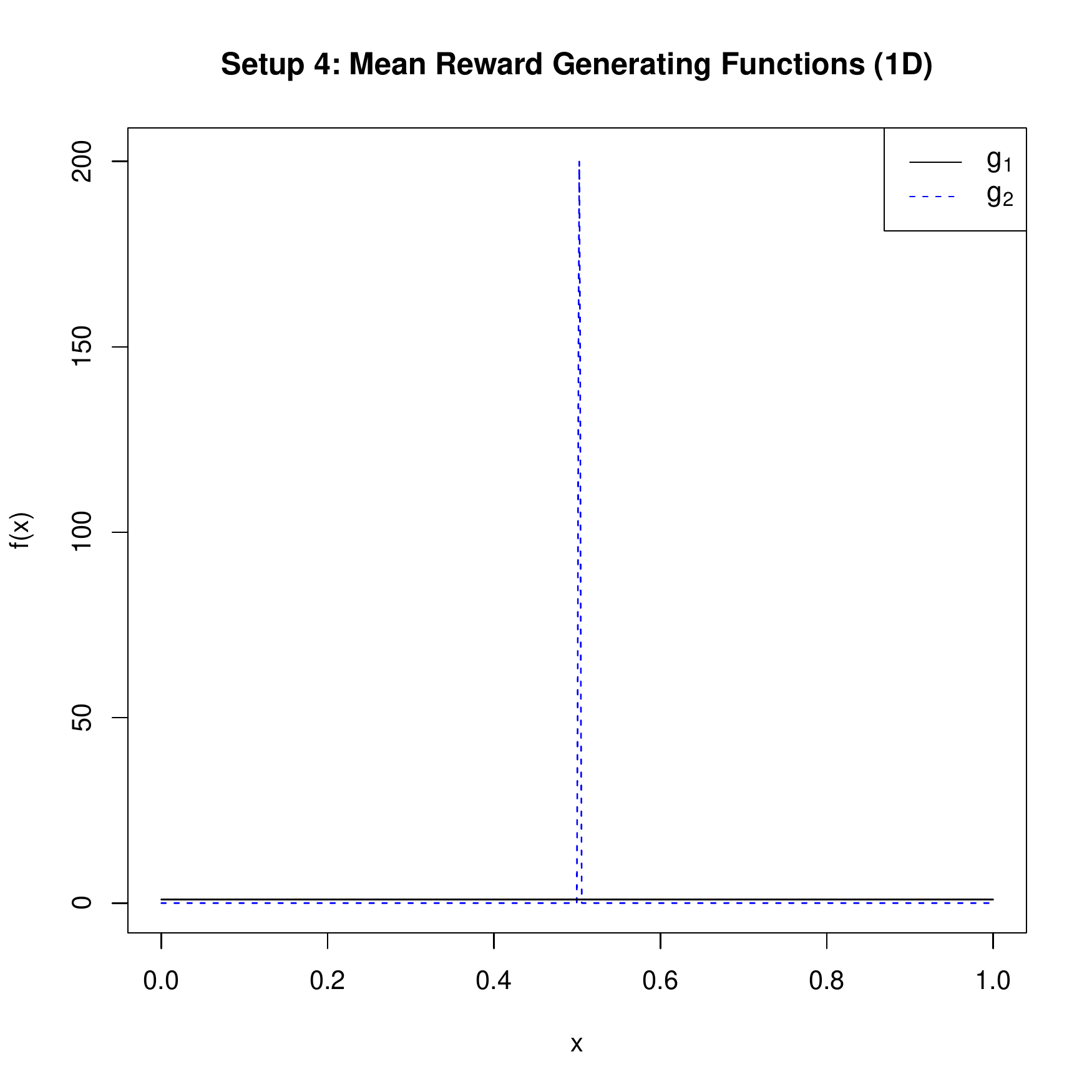}
   \includegraphics[scale=0.3]{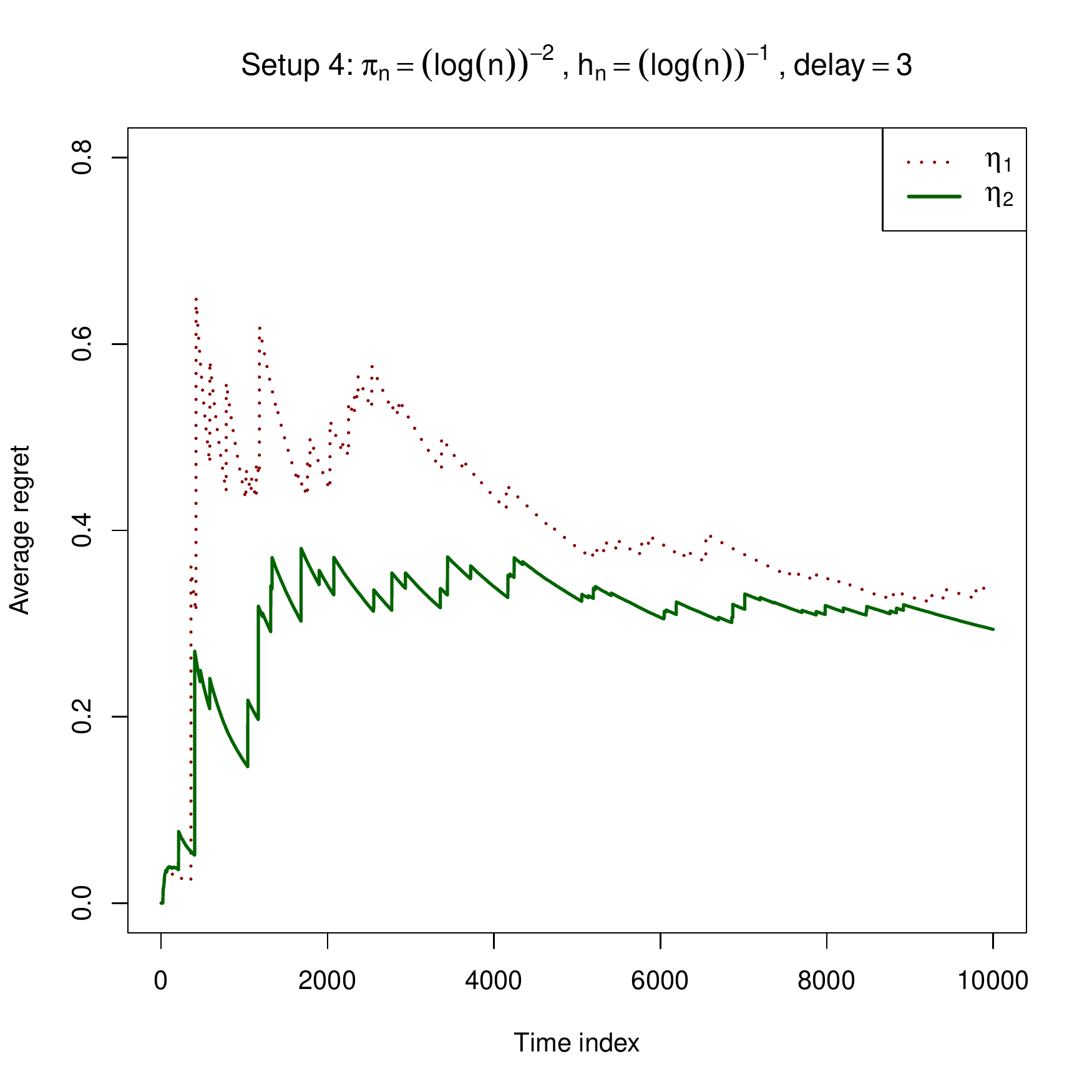}
   \includegraphics[scale=0.3]{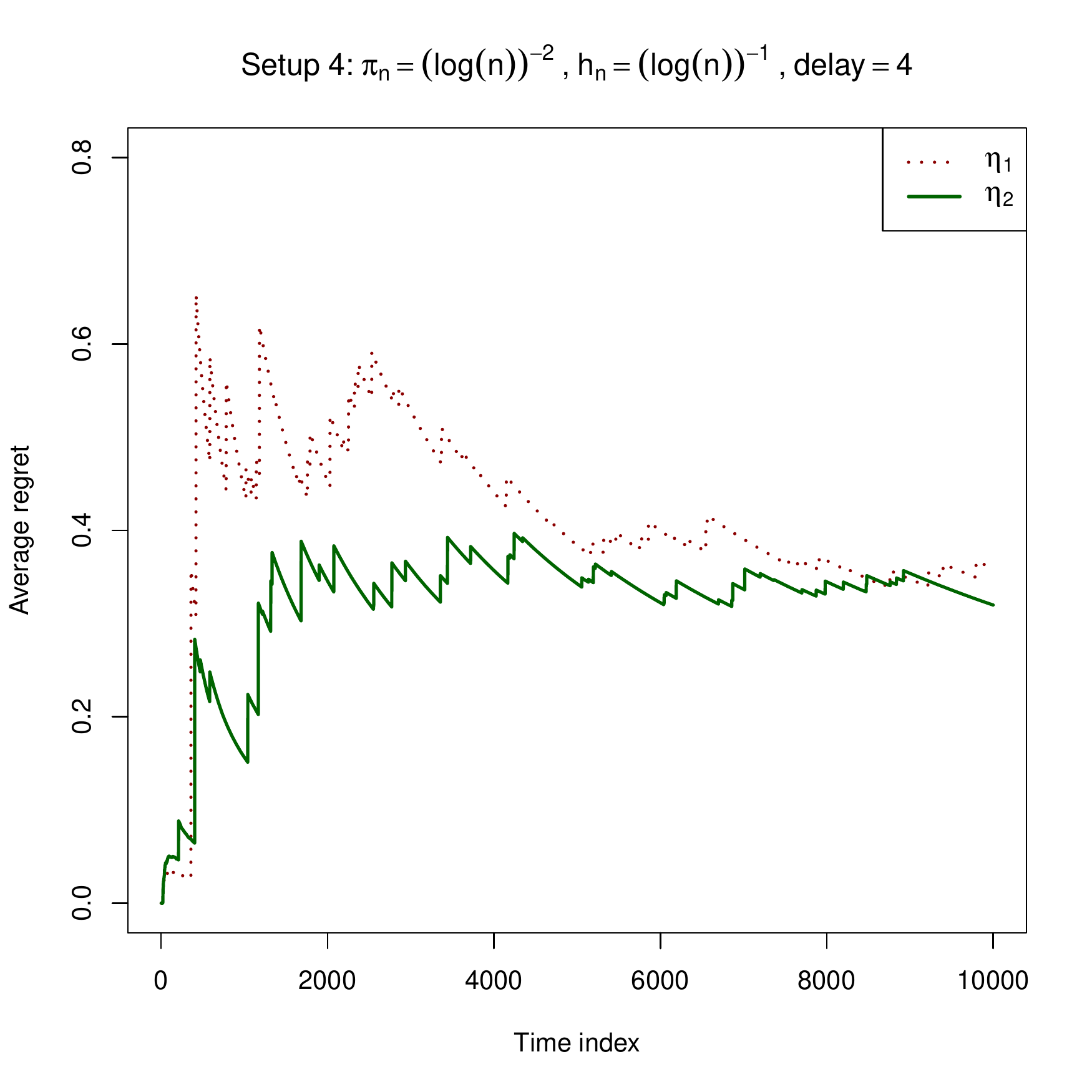}
  \caption{Each row represents a Setup, with first column depicting a one-dimensional function used to generate the mean reward functions. The second and the third column depict the average regret over time for Delay 3 and Delay 4 respectively.}
 \label{fig: Simulation_result_h5_pi6}
 \end{figure}

 \begin{figure}[h!]
 \centering
   \includegraphics[scale=0.3]{PDF_Figures_Supporting/Supp_S1S2S3_Arya_11.pdf}
   \includegraphics[scale=0.3]{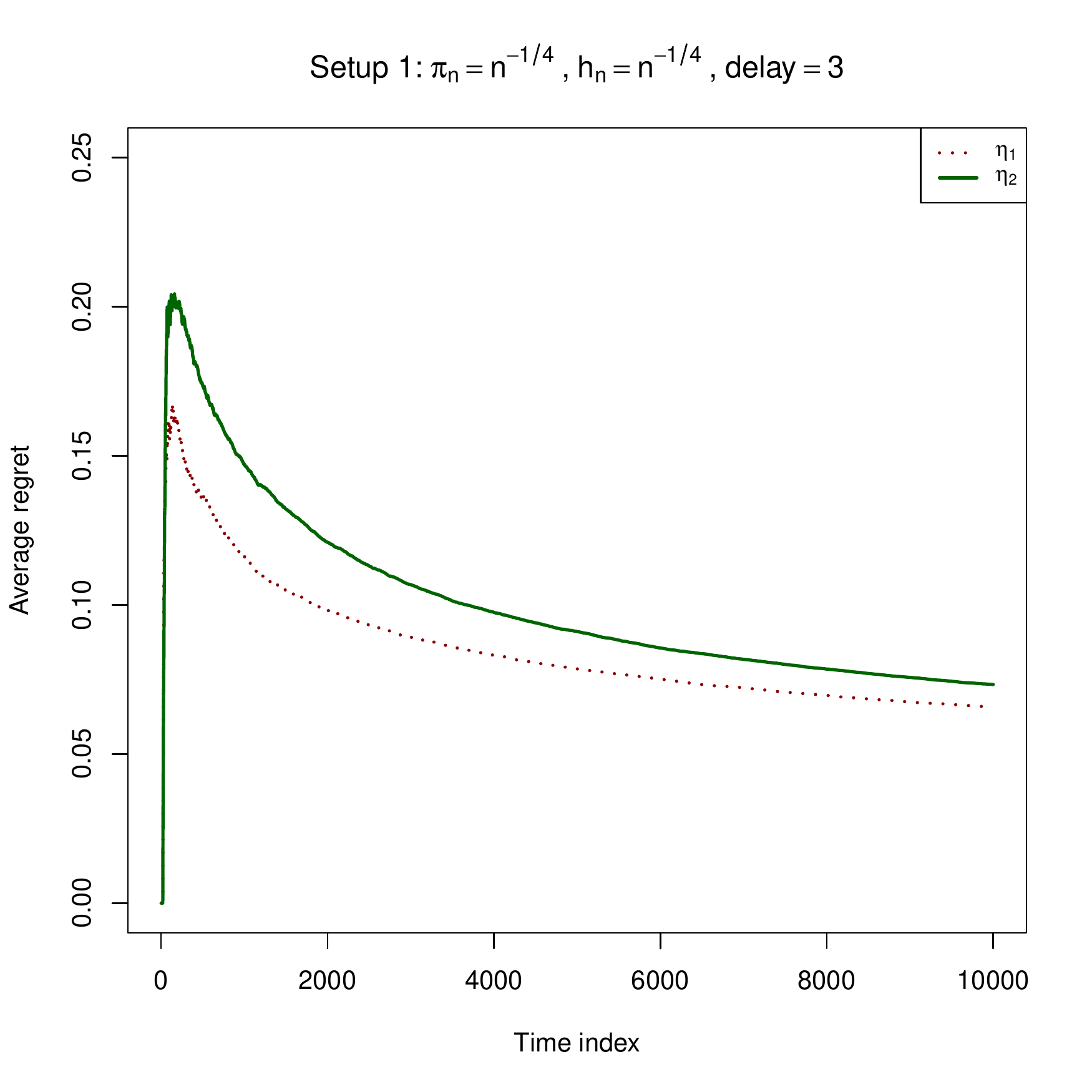}
   \includegraphics[scale=0.3]{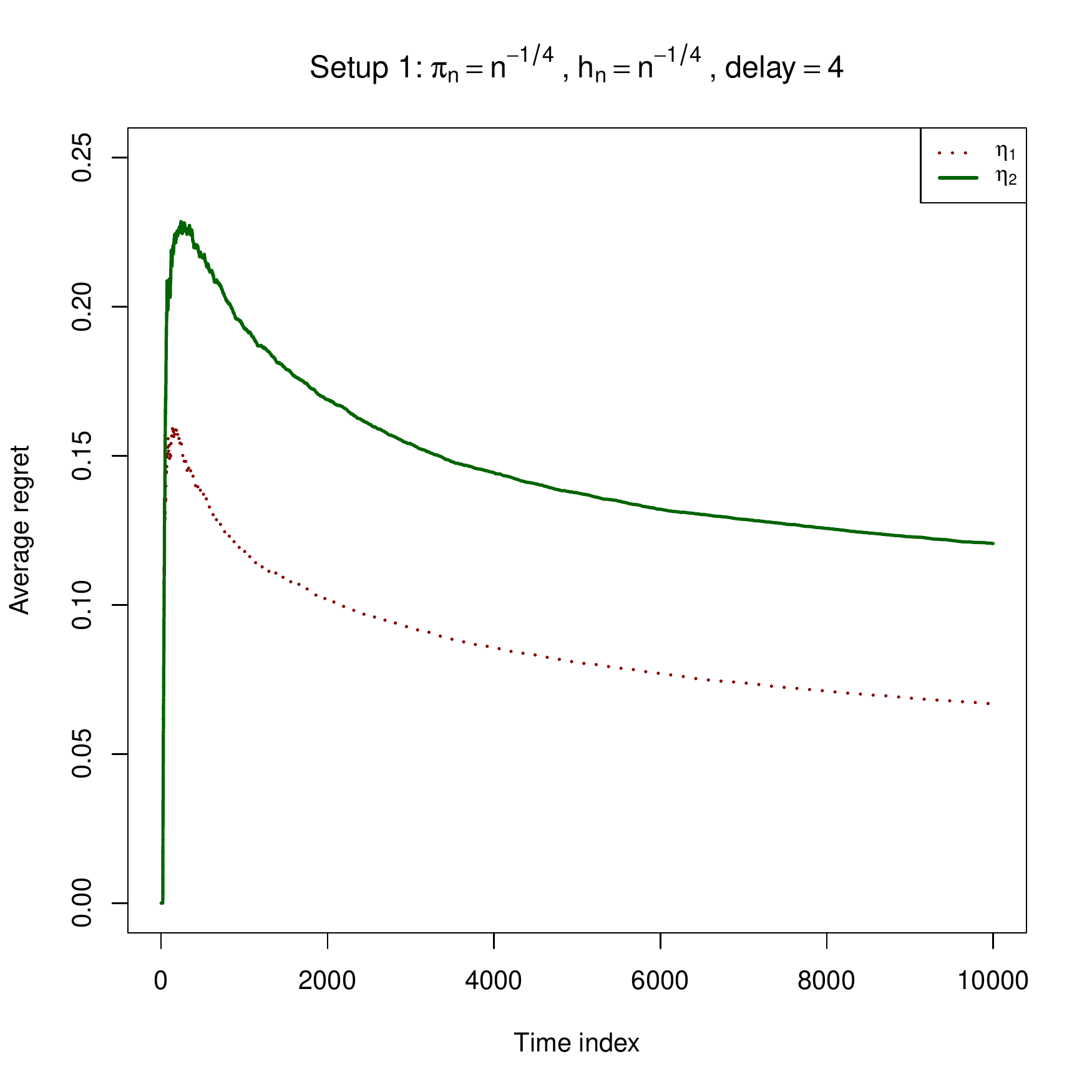}\\
   \includegraphics[scale=0.3]{PDF_Figures_Supporting/Supp_S1S2S3_Arya_21.pdf}
   \includegraphics[scale=0.3]{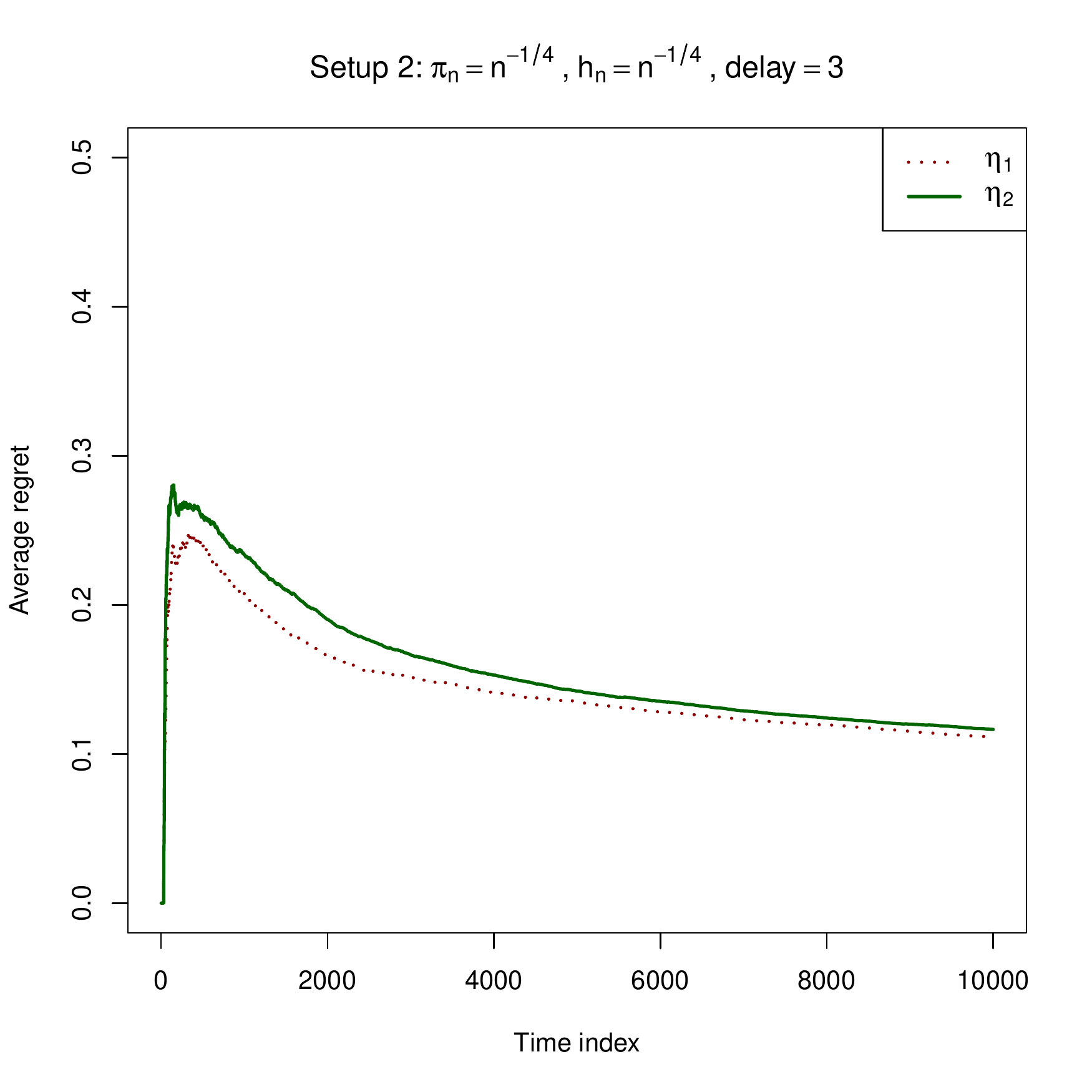}
   \includegraphics[scale=0.3]{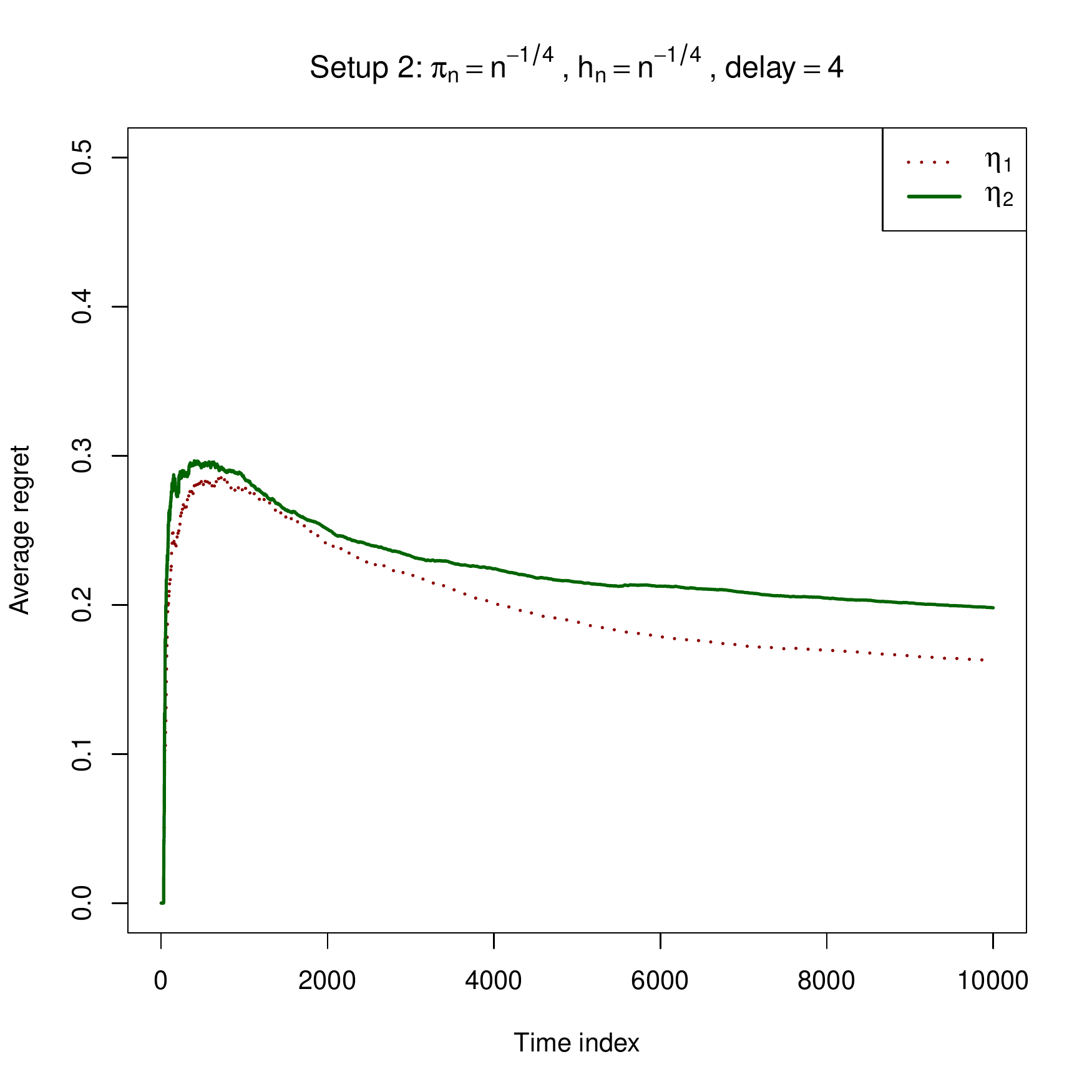}\\
\includegraphics[scale=0.3]{PDF_Figures_Supporting/Supp_S1S2S3_Arya_31.pdf}
   \includegraphics[scale=0.3]{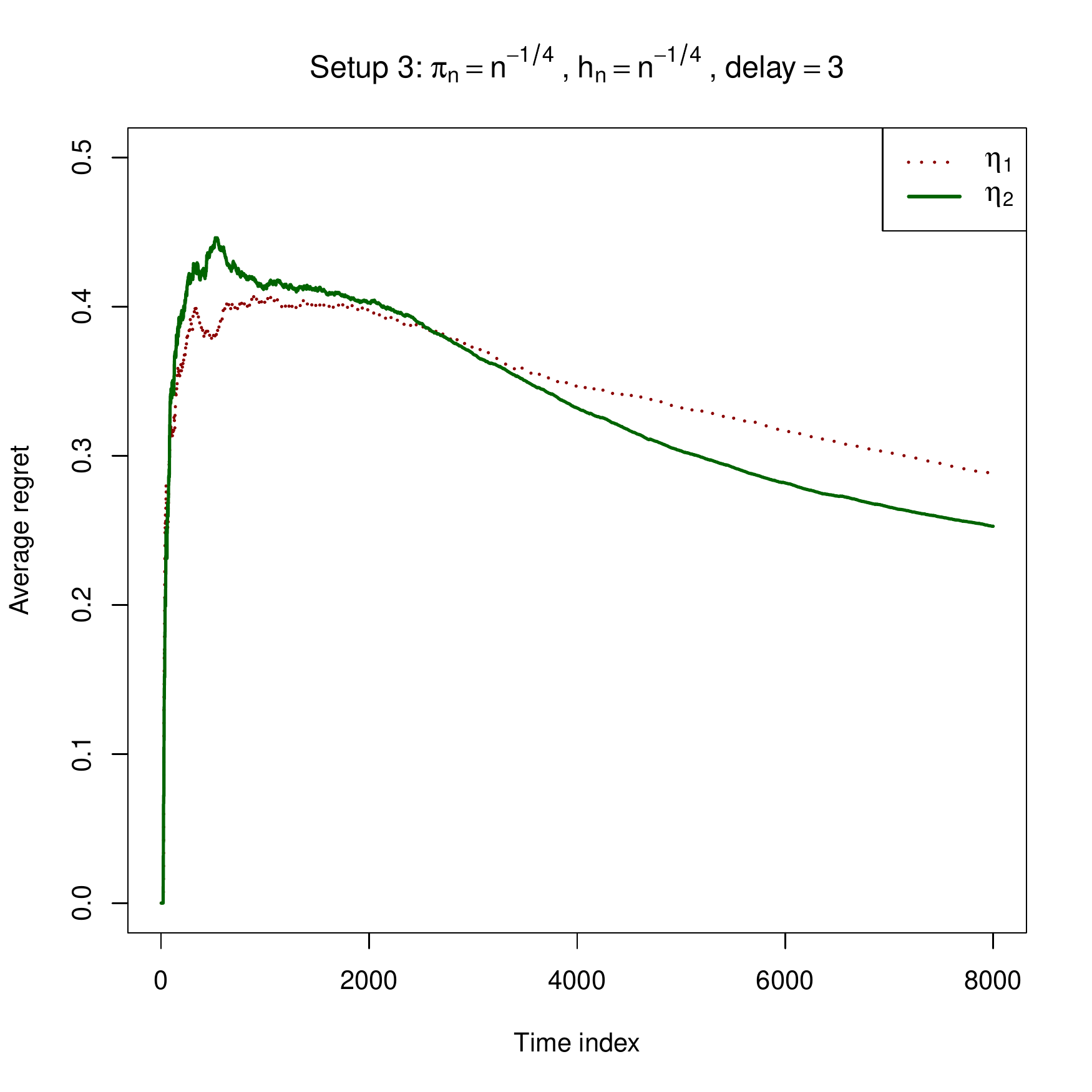}
    \includegraphics[scale=0.3]{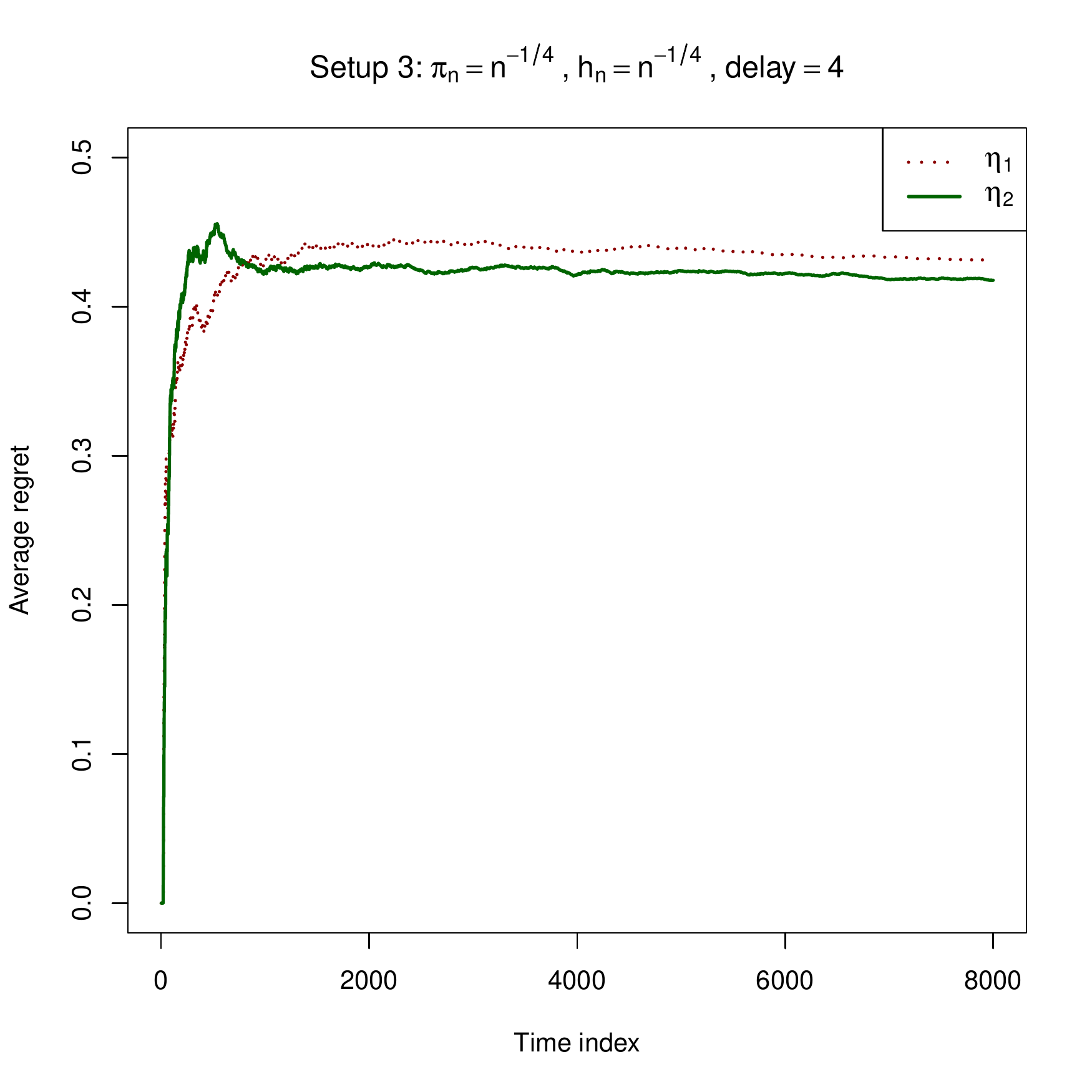}\\
   \includegraphics[scale=0.3]{PDF_Figures_Supporting/Supp_S1S2S3_Arya_41.pdf}
   \includegraphics[scale=0.3]{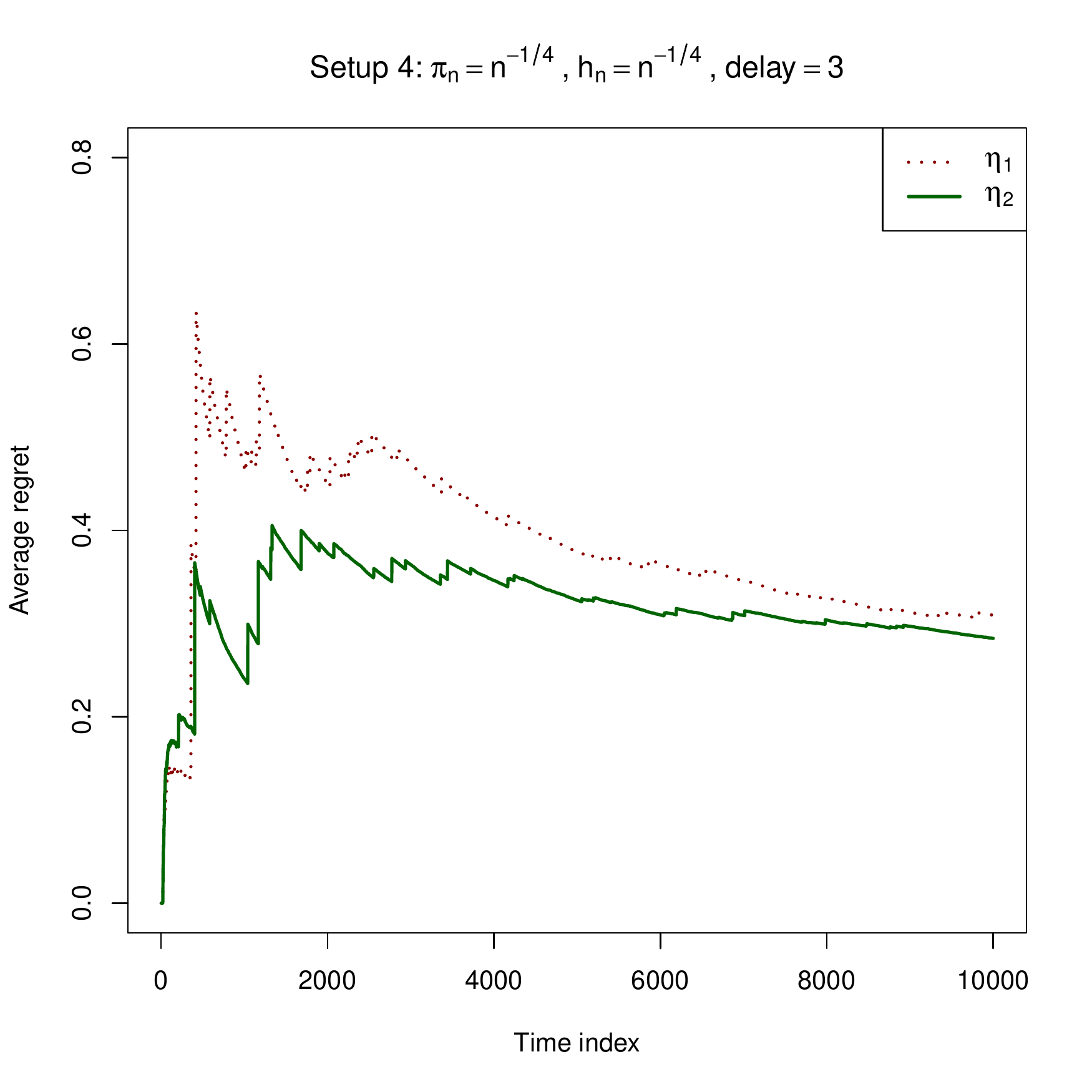}
   \includegraphics[scale=0.3]{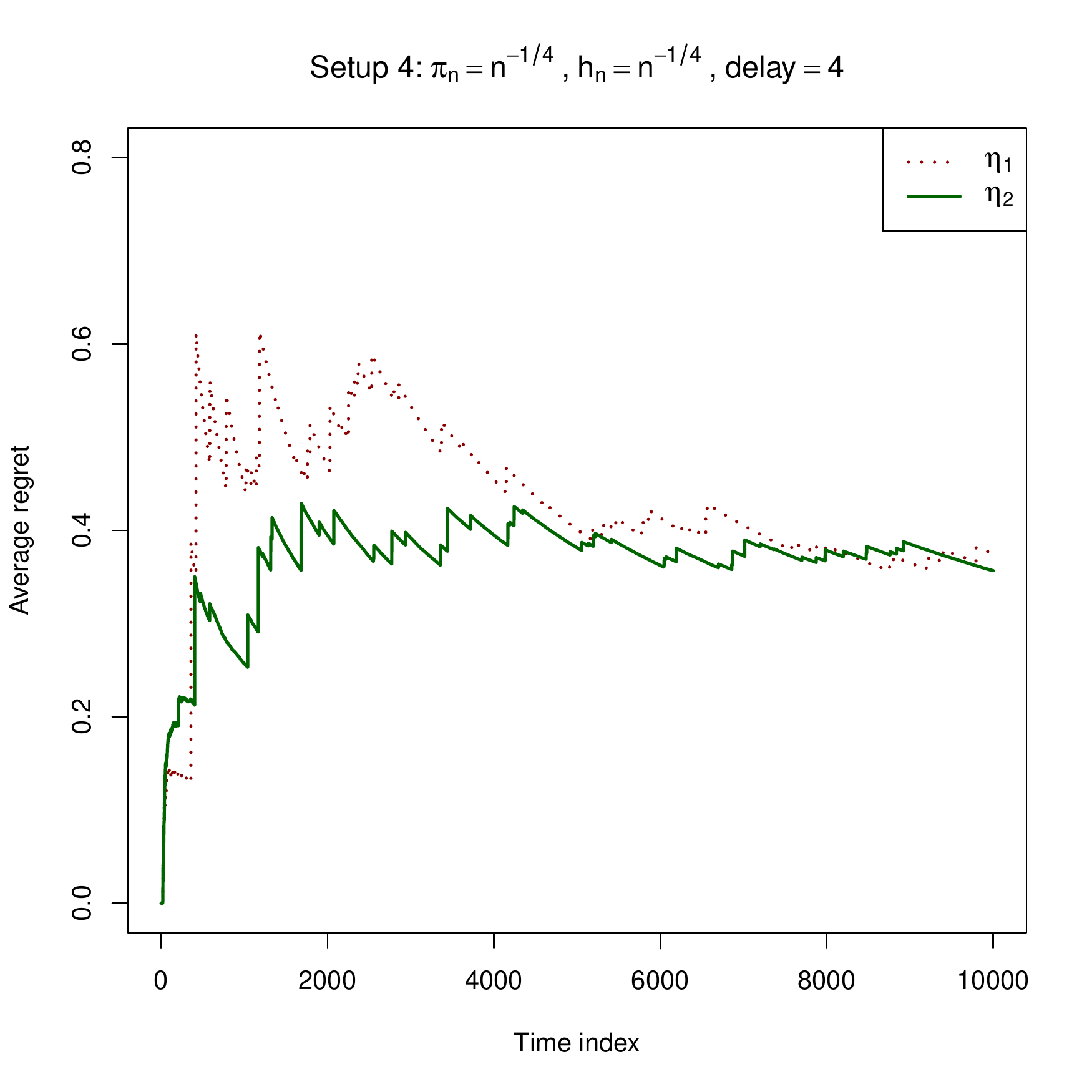}
  \caption{Strategy $\eta_1$ has lower cumulative average regret in Setup 1 and 2 (first two rows) and strategy $\eta_2$ has lower cumulative average regret in Setup 3 and 4 (rows third and fourth).}
 \label{fig: Simulation_result_h2_pi2}
 \end{figure}

 \begin{figure}[h!]
 \centering
   \includegraphics[scale=0.3]{PDF_Figures_Supporting/Supp_S1S2S3_Arya_11.pdf}
   \includegraphics[scale=0.3]{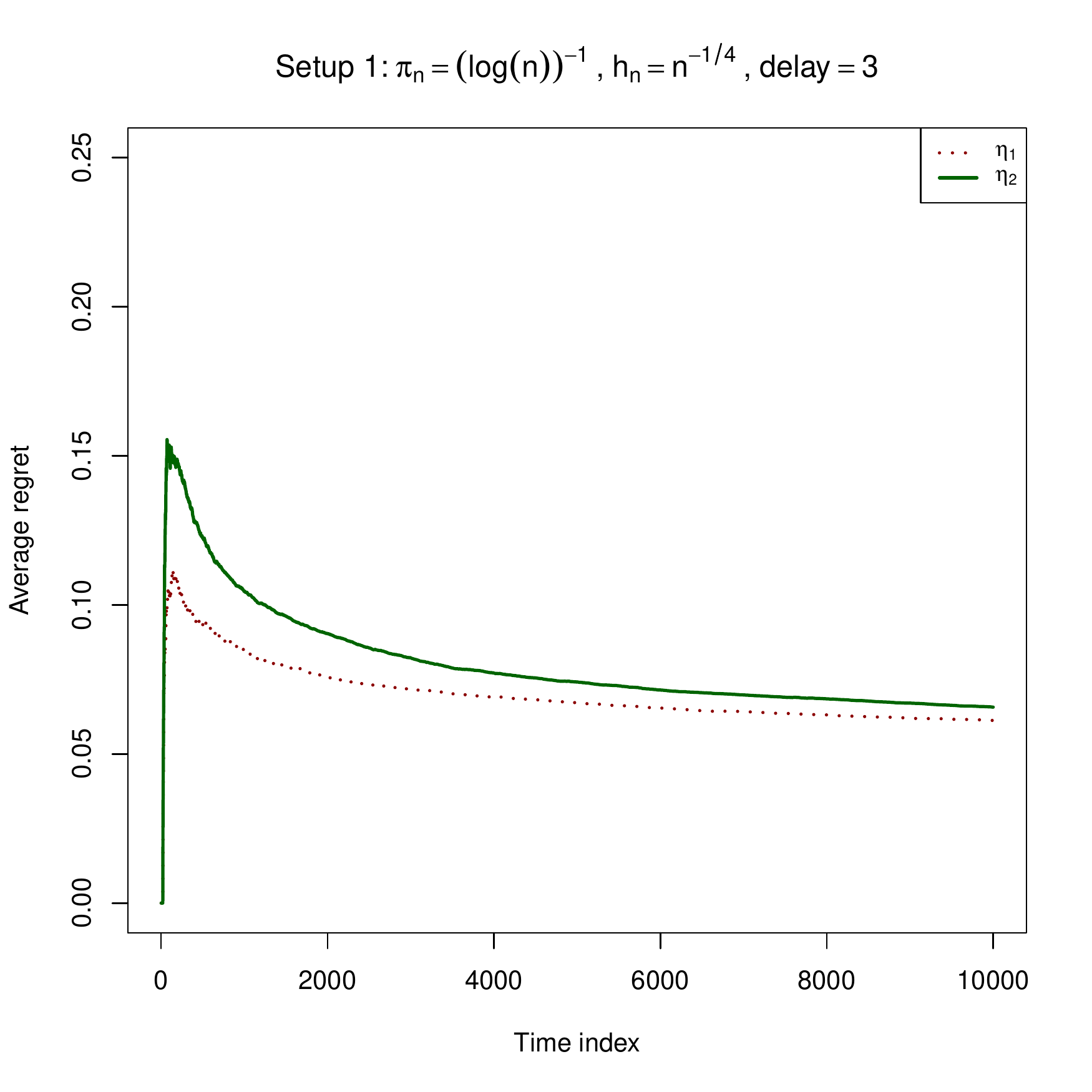}
   \includegraphics[scale=0.3]{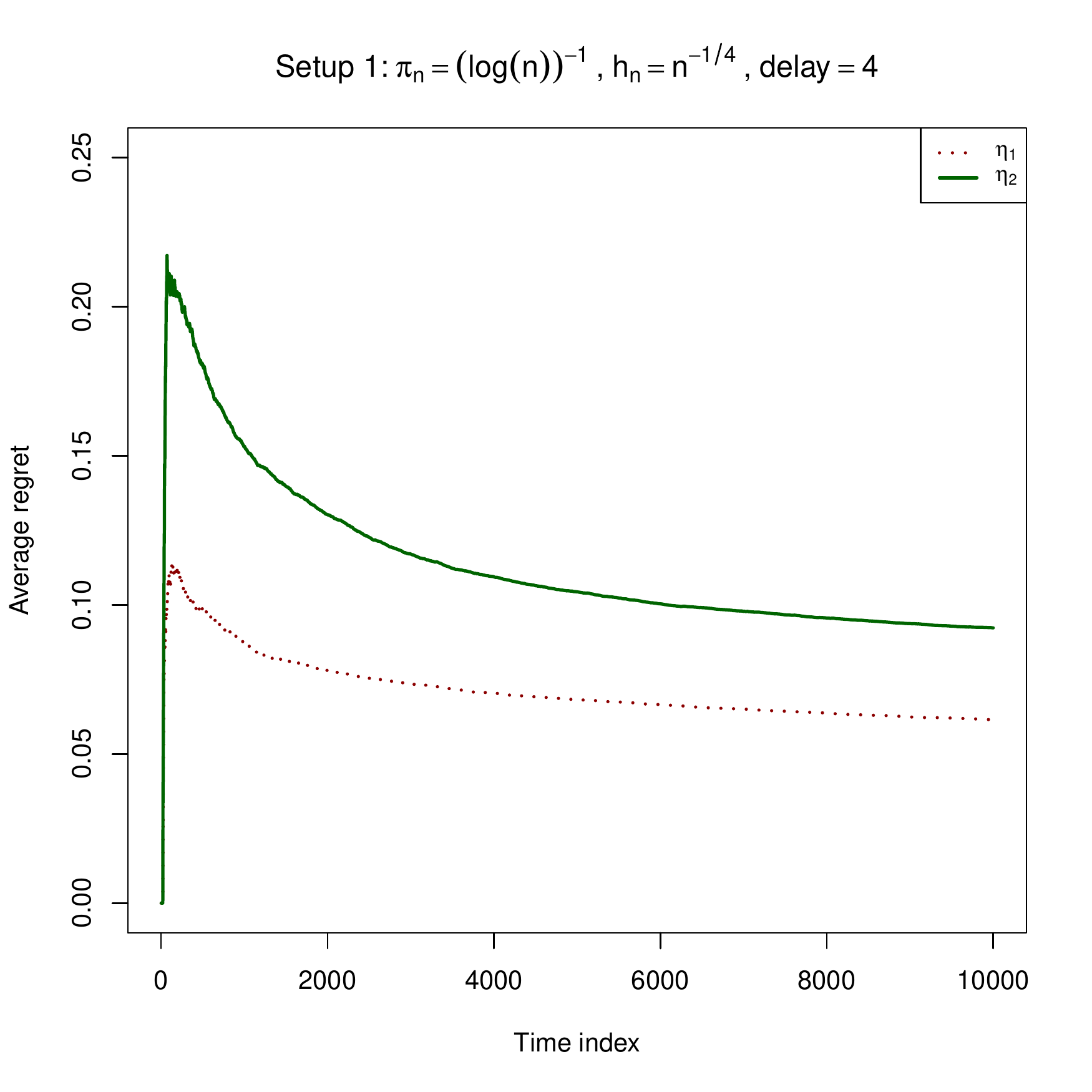}\\
   \includegraphics[scale=0.3]{PDF_Figures_Supporting/Supp_S1S2S3_Arya_21.pdf}
   \includegraphics[scale=0.3]{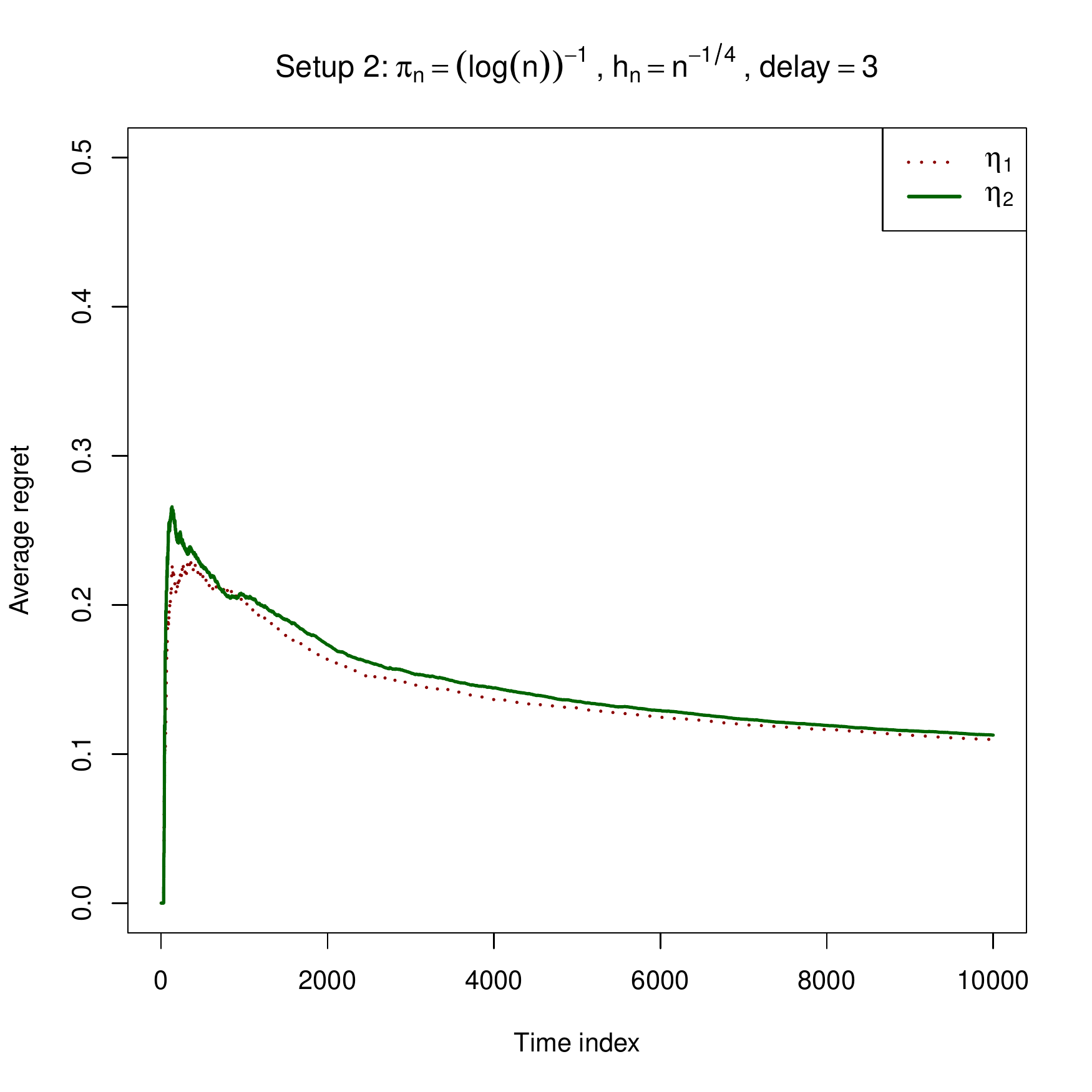}
   \includegraphics[scale=0.3]{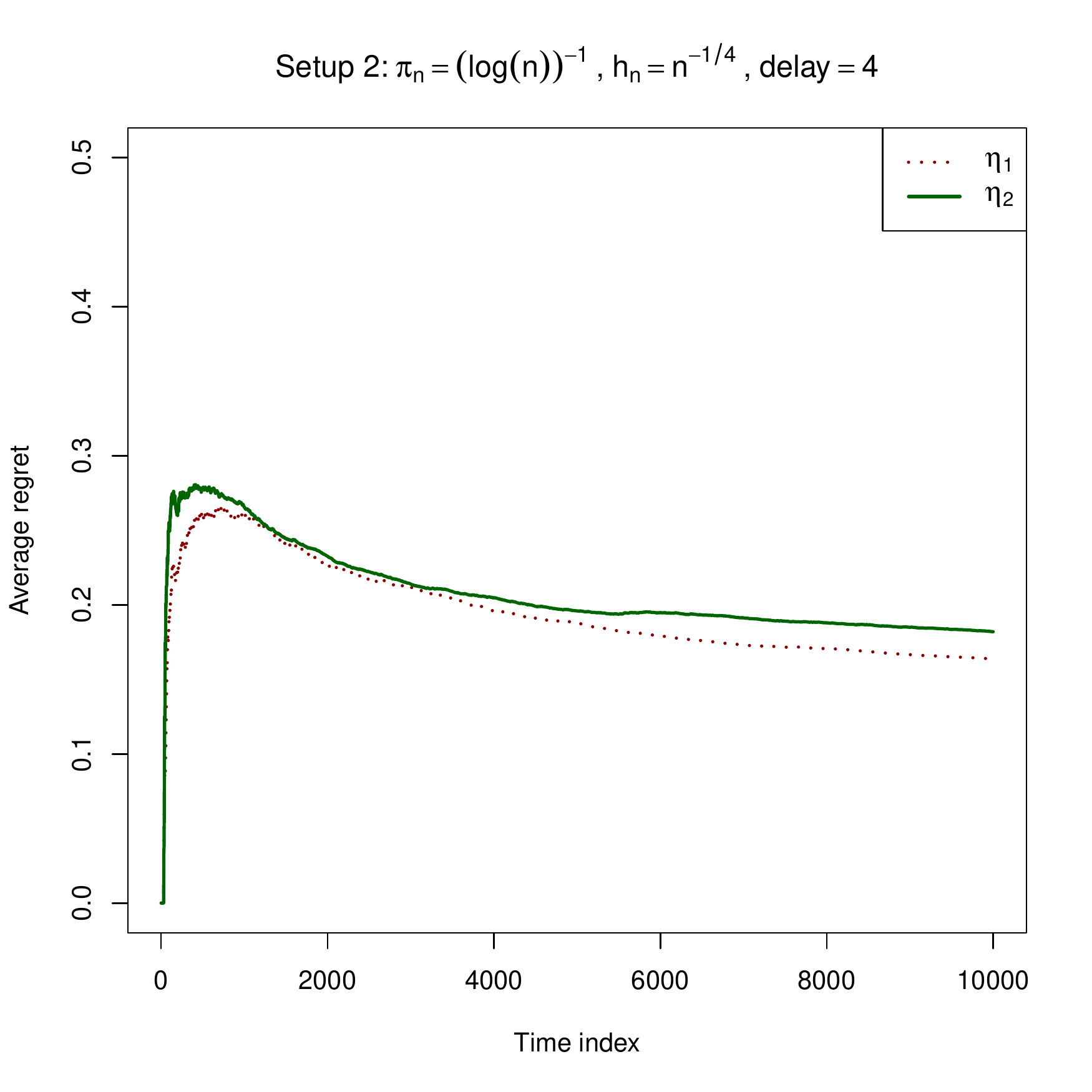}\\
\includegraphics[scale=0.3]{PDF_Figures_Supporting/Supp_S1S2S3_Arya_31.pdf}
   \includegraphics[scale=0.3]{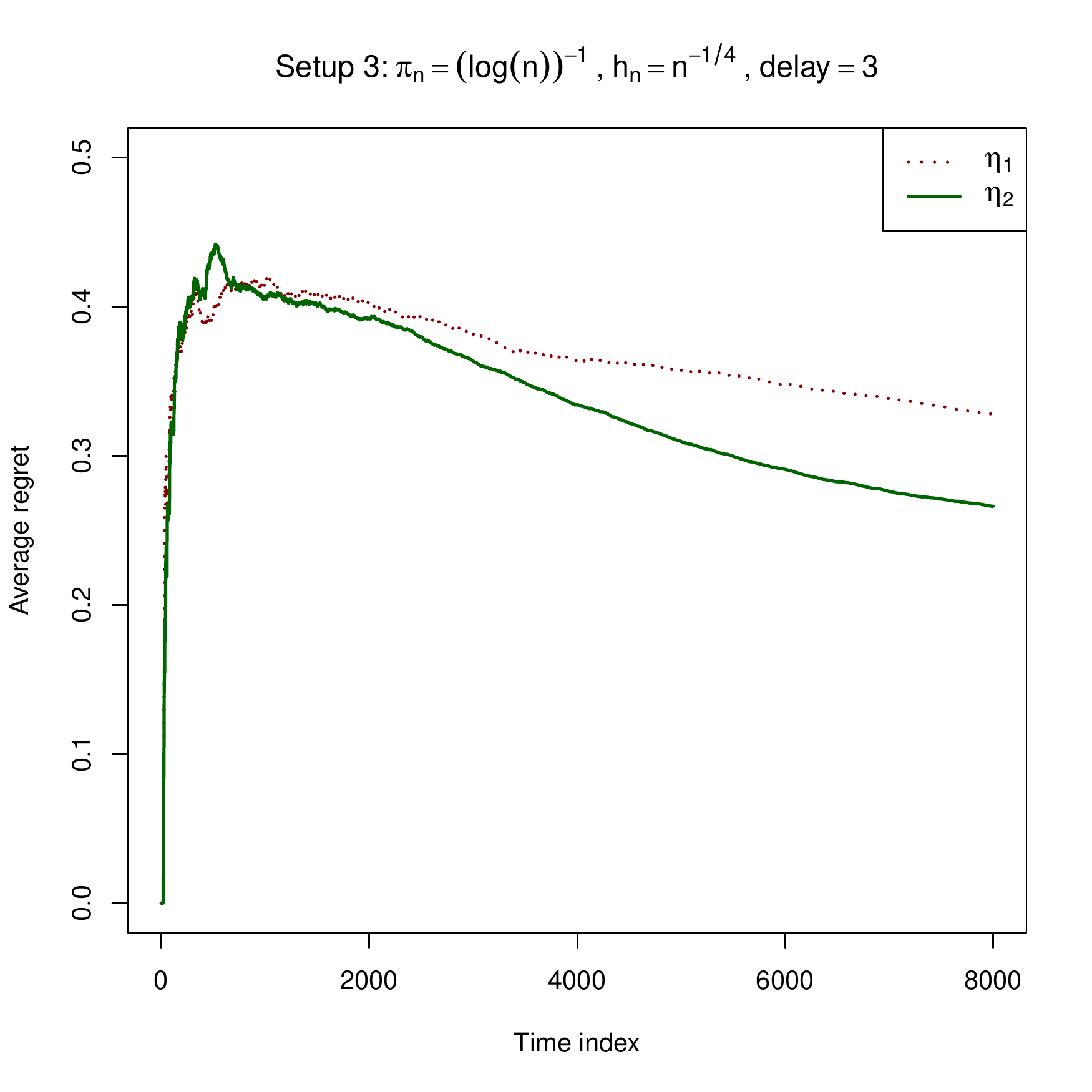}
    \includegraphics[scale=0.3]{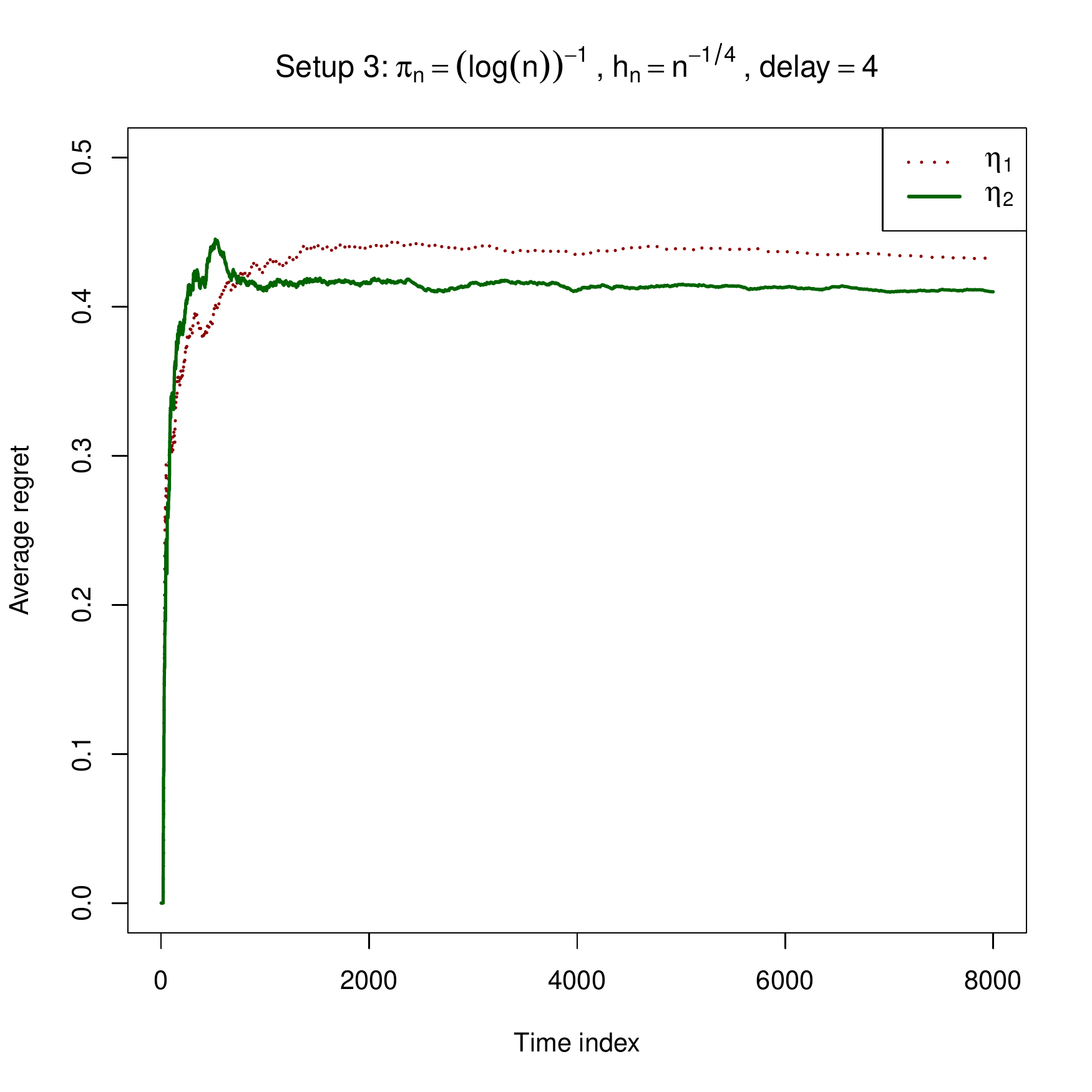}\\
   \includegraphics[scale=0.3]{PDF_Figures_Supporting/Supp_S1S2S3_Arya_41.pdf}
   \includegraphics[scale=0.3]{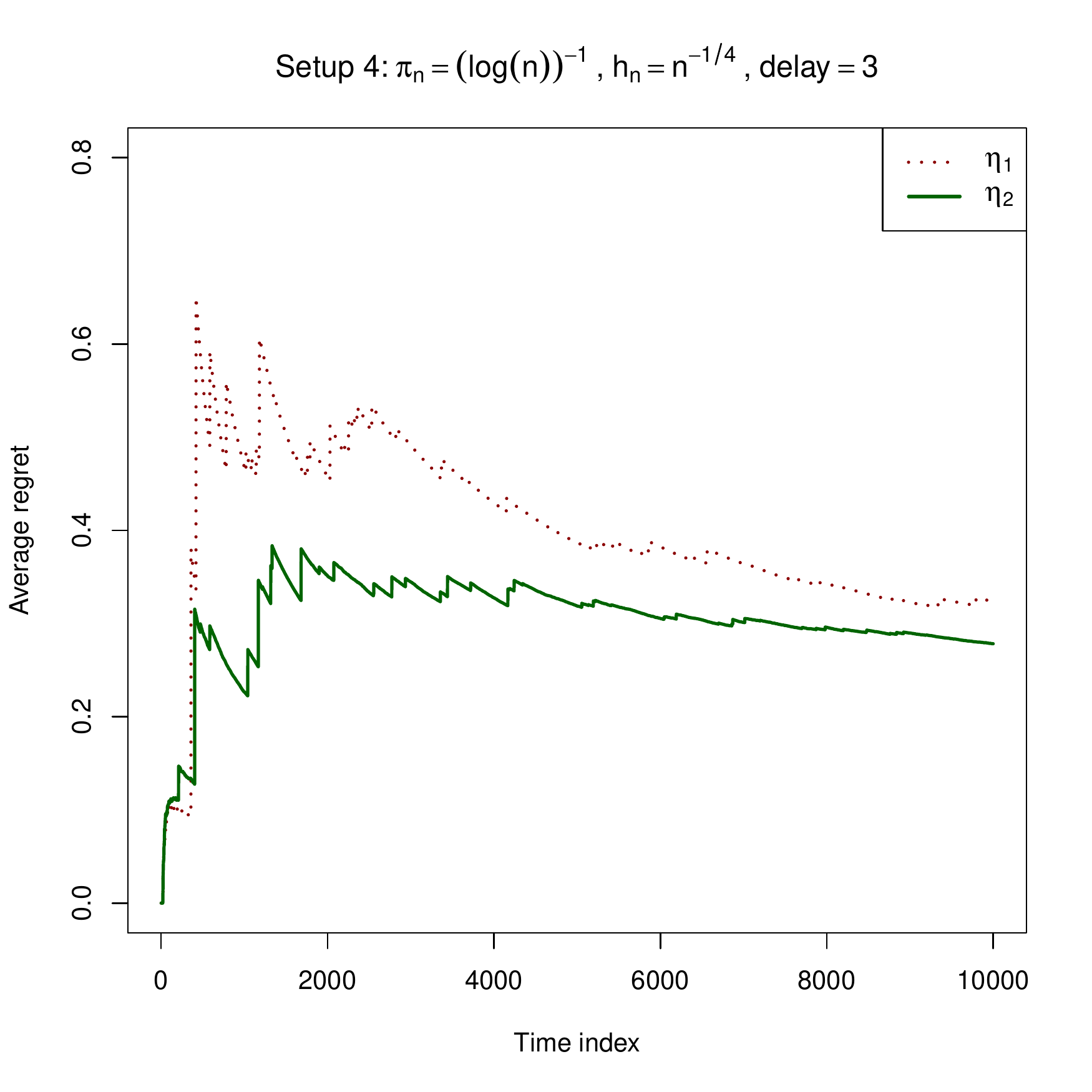}
   \includegraphics[scale=0.3]{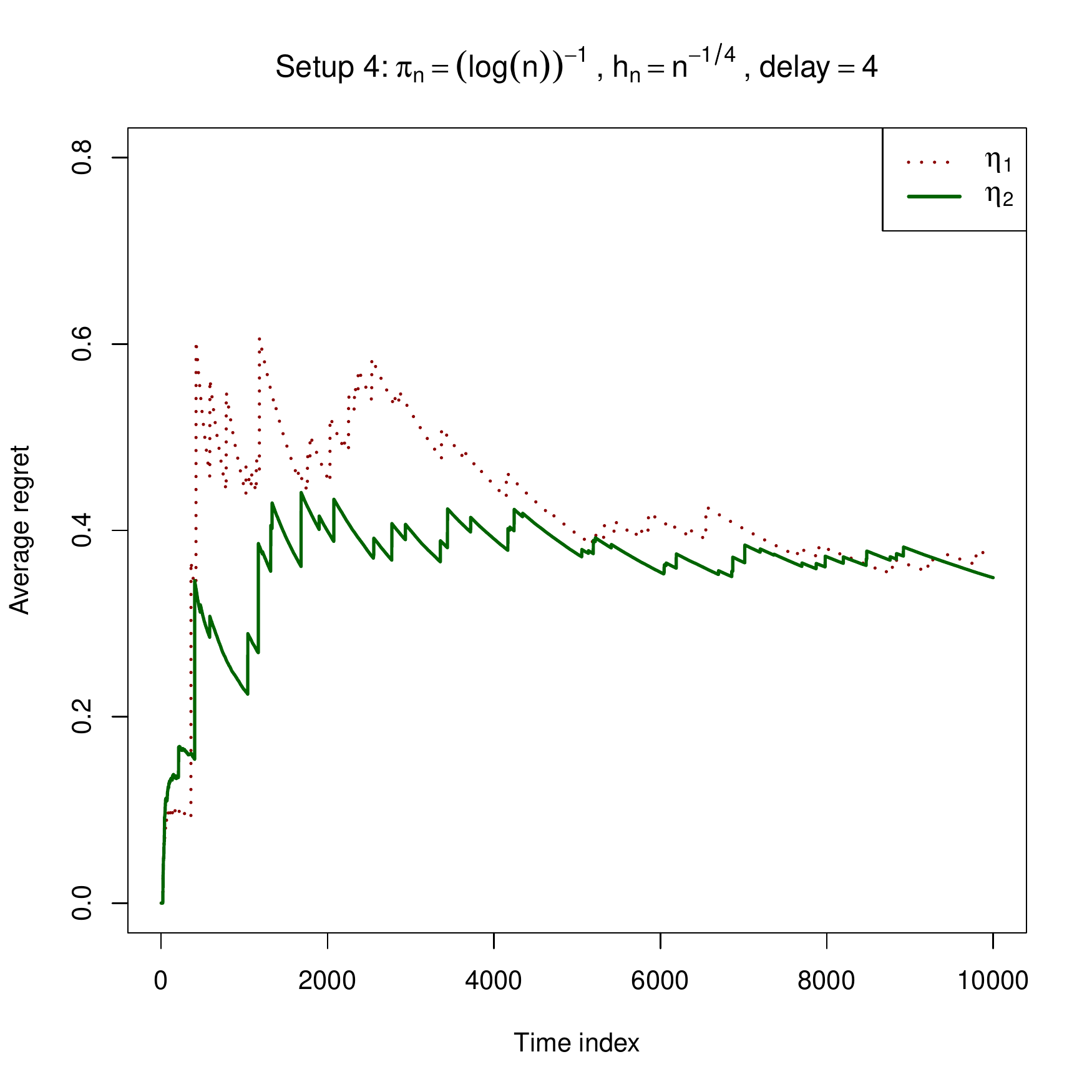}
  \caption{Strategy $\eta_1$ has lower cumulative average regret in Setup 1 and 2 (first two rows) and strategy $\eta_2$ has lower cumulative average regret in Setup 3 and 4 (rows third and fourth).}
 \label{fig: Simulation_result_h2_pi5}
 \end{figure}

\subsection{Simulation plots} \label{Simulation_extra_Appendix}
In this section, we plot the average regret curves for both strategies $\eta_1$ and $\eta_2$ for different hyper-parameter choices. In Figure \ref{fig: Simulation_result_h5_pi6}, we choose $\{h_n\} = (\log{n})^{-1}$ and $\{\pi_n\} = (\log{n})^{-2}$. We still notice the same trend, where $\eta_1$ performs better than strategy $\eta_2$ in Setup 1 and Setup 2, while $\eta_2$ performs better in Setup 3 and Setup 4. Notice that, for Setup 1 and 2, in the case of delay scenario 3, the difference in the average regret is not as noticeable as it is in delay 4. This could be attributed to the fast decaying $\{\pi_n\} = (\log{n})^{-2}$, where whether you update at every time point or only at observed reward time points, there is sharp increase in the amount of exploitation with the amount of data available in Delay 3 scenario unlike the Delay 4 scenario. We also notice that, in Setup 3, with Delay 4, the average regret does not seem to decay by our time horizon and might need a larger horizon to show some decay, which could be because the exploration probability is too fast decaying for both the algorithms to learn efficiently. Figure \ref{fig: Simulation_result_h2_pi2} and Figure \ref{fig: Simulation_result_h2_pi5} correspond to the choices $\{h_n, \pi_n\} = (n^{-1/4}, n^{-1/4}), (n^{-1/4}, (\log{n})^{-1})$ respectively. We see very similar trends as discussed in the paper and for Figure \ref{fig: Simulation_result_h5_pi6} for these two choices as well.

\end{document}